\documentclass{article}
\usepackage[main,nonatbib,preprint]{neurips_2026}
\usepackage[sort,comma,numbers]{natbib}
\usepackage{array}
\usepackage{multirow}
\usepackage[utf8]{inputenc} % allow utf-8 input
\usepackage{subcaption}
\usepackage[T1]{fontenc}    % use 8-bit T1 fonts
\usepackage{hyperref}       % hyperlinks
\usepackage{url}            % simple URL typesetting
\usepackage{booktabs}       % professional-quality tables
\usepackage{amsfonts}       % blackboard math symbols
\usepackage{nicefrac}       % compact symbols for 1/2, etc.
\usepackage{microtype}      % microtypography
\usepackage{xcolor}         % colors
\usepackage{amsmath, amssymb, amsthm}
\usepackage{bm, bbm}
\usepackage{comment}
\usepackage{thm-restate}
\usepackage{enumitem} % lists, etc.
\usepackage{mathtools}
\usepackage[textwidth=2.0cm, textsize=scriptsize]{todonotes}
\usepackage[ruled]{algorithm2e}
\usepackage{algorithmic}
\usepackage{accents}
\usepackage{preamble}
\usepackage[italicdiff]{physics}
\title{
A Comprehensive View of Fairness \\through Distributional Stability}
\author{%
  Gayane Taturyan \\
  LTCI, Télécom Paris, Institut Polytechnique de Paris, Palaiseau, France\\
  \texttt{gayane.taturyan@telecom-paris.fr} 
  \And
  Charlotte Laclau \\
  LTCI, Télécom Paris, Institut Polytechnique de Paris, Palaiseau, France\\
  \texttt{charlotte.laclau@telecom-paris.fr} \\
  \And
  Stephan Clemencon \\
  LTCI, Télécom Paris, Institut Polytechnique de Paris, Palaiseau, France\\
  \texttt{stephan.clemencon@telecom-paris.fr} \\
}
\begin{document}
% title
\maketitle
% abstract
\begin{abstract}
%Since a predictive rule can be considered unfair in various ways, numerous \textit{ad hoc} concepts have been developed to measure fairness properties. This article is not intended to introduce an additional metric to account for another type of unfairness, but rather to develop a new formal framework that unifies all the concepts proposed to date. It is based on a new application of the concept of stability, which is already widely used in learning theory to quantify an algorithm’s robustness to changes in the empirical training distribution (i.e., in the training data). 

We view fairness as a property of distributional stability. Rather than assessing a predictor under a fixed data distribution, we study how its predictions change under perturbations that modify the composition of protected groups. A predictor is fair if it remains stable under such shifts. Under this perspective, several classical notions of fairness arise as stability with respect to specific perturbations, with the associated unfairness gap given by a Lipschitz constant of a prediction-rate functional. This formulation also yields guarantees that hold uniformly over a range of demographic compositions at test time, without requiring knowledge of the deployment distribution. It leads to a learning procedure based on convex combinations of reweighted predictors, formulated as a second-order cone program, for which we establish generalization bounds. Experiments on standard benchmarks illustrate the approach.
\end{abstract}
% ------------ main body ----------
%\input{version-1/notes}
\section{Introduction}
Fairness in the field of machine learning has been studied primarily from two distinct perspectives, leading to the introduction of two broad categories of definitions. \textit{Group fairness}, on the one hand, requires that predictive outcomes be statistically equal or comparable across groups defined by sensitive characteristics such as gender or ethnic origin, which has given rise to a large number of competing metrics, including demographic parity, equal opportunity, or equalized odds, among others \citep{calders2009, hardt2016equality}. On the other hand, the concept of \textit{individual fairness} stipulates as a mandatory condition that similar individuals be assigned similar predictive outcomes, which is formalized by a Lipschitz condition with respect to a task-specific metric \citep{dwork2012fairness}. These two approaches rest on different foundations and have been studied largely in isolation.%, and neither provides a unifying principle.
%Fairness in machine learning has been studied primarily through two families of definitions. 
%Group fairness requires that predictions be statistically equal across groups defined by sensitive attributes such as gender or ethnicity, and has given rise to a large number of competing metrics, including demographic parity, equal opportunity, or equalized odds, among others \citep{calders2009, hardt2016equality}. 
%Individual fairness requires that similar individuals receive similar outcomes, formalized as a Lipschitz condition with respect to task-specific metric \citep{dwork2012fairness}. Both approaches rest on different foundations and have largely been studied in isolation, and neither provides a unifying principle.

We also note that both of these approaches evaluate fairness relative to a fixed data distribution. In practice, a model trained on one population is often deployed on another, whether in different regions, at different times, or in different institutional contexts. Recent work has begun to investigate how fairness guarantees degrade under such changes \citep{giguere2022fairness, jiang2023-chasing, barrainkua}, but these approaches are either tailored to specific shift models, require knowledge of the deployment distribution at training time, or model the shift as acting on the 
feature space while leaving the distribution of the sensitive attribute $G$ itself 
unchanged. Our framework takes a different perspective: we consider perturbations that 
act directly on the marginal distribution of $G$, which is the primary driver of deployment bias in practice.

%Both also evaluate fairness with respect to a fixed data distribution. In practice, a model trained on one population is often deployed on another, either across different regions, time periods, or institutional contexts. Recent work has begun to study how fairness guarantees degrade under such shifts \citep{giguere2022fairness, jiang2023-chasing, barrainkua}. These approaches are either tailored to specific shift models or require knowledge of the deployment distribution at training time.

In this article, we propose to approach fairness from the perspective of stability --- specifically, the distributional stability of a given prediction function rather than the algorithmic stability derived from learning theory. A classifier is fair if its behavior does not change in the face of perturbations in the data distribution that reflect social considerations, such as changes in the composition of protected groups. This single principle links the concepts of fairness at the individual level and at the group level, which appear as special cases corresponding to different classes of perturbations. This naturally resolves the issue of context shifts during deployment: a classifier trained to remain stable in the face of variations in group distribution inherits fairness guarantees that apply uniformly to all demographic compositions at deployment time, without the need to know the deployment distribution in advance.
%We propose to view fairness through the lens of stability, by which we mean the distributional stability of a fixed prediction function rather than the algorithmic stability of learning theory. A classifier is fair if its behavior does not change under perturbations of the data distribution that encode social considerations, such as changes in protected groups composition. This single principle connects individual and group notions of fairness, which emerge as special cases corresponding to different classes of perturbations. It also naturally addresses deployment shift: a classifier trained to be stable under changes in group proportions inherits fairness guarantees that hold uniformly across all possible demographic compositions at deployment time, without requiring knowledge of the deployment distribution in advance.

\textbf{Contributions.} We define fairness of a predictive rule as a stability property (Section~\ref{sec:stability}). We then establish formal equivalences between individual fairness, demographic parity, equal opportunity, and stability under specific perturbation classes, showing that the unfairness gap is in each case a  Lipschitz constant (Section~\ref{sec:equiv}).
For group fairness, we unify these results into a single optimization problem via a constraint functional measuring deviation between predictions and a target level across deployment contexts; 
its decomposition ensures guarantees for both fairness and uniform deployment.
% its decomposition separates sensitivity to demographic shift from the absolute prediction level, thereby providing a guarantee of uniform deployment. 
Restricting to convex mixtures of reweighted classifiers turns the empirical problem into a second-order cone program, for which we establish generalization bounds (Section~\ref{sec:methodology}). Numerical experiments confirm the theory's practical applicability (Section~\ref{sec:exp}). Proofs, the Equalized Odds extension, and additional experiments and discussions are deferred to the Supplementary Material.

\textbf{Related works.}\label{sec:related}
Algorithmic fairness, especially for binary classification, has been extensively studied from both theoretical and methodological perspectives. A large part of this literature considers \textit{aware} setting, in which the sensitive attribute is available at the prediction time~\citep{calders2009,denis2024, zeng2022fair, pmlr-v206-gaucher23a, xian2023fair, Gordaliza19, chiappa2020general}, whereas fewer works address the \textit{unaware} setting~\citep{pmlr-v80-agarwal18a, oneto2020general, chzhen2025randomizedmulticlassclassificationconstraints, hou2024finite,celis2019classification, leteno2025}, in which it is not. A number of works aim to unify fairness-constrained classification within a common framework. A classical example is the reductions approach of \citet{pmlr-v80-agarwal18a}, where the authors reduce fair classification to a sequence of cost-sensitive classification problems and construct a randomized classifier with minimum empirical error under the prescribed constraint. A recent work of \citet{chzhen2025randomizedmulticlassclassificationconstraints} proposes a unified post-processing framework for randomized classification under general system-level constraints (including group-fairness constraints), based on entropic regularization and stochastic optimization. The stability-based perspective is related to recent work on the robustness of fairness-constrained learning under adversarial perturbations \citep{blum24a}, that shows how malicious noise in the training data can significantly affect the validity of fairness constraints. In our work we define fairness itself as a stability under perturbations. Despite an initial resemblance to distributionally robust optimization approaches, our framework is based on specific perturbations induced by a prescribed perturbation rule and admits an importance-weight representation.  Unlike DRO, this structural specificity yields non-vacuous deployment guarantees, as we discuss in Appendix~\ref{app:dro}.
% The stability-based perspective is related to recent work on the robustness of fairness-constrained learning under adversarial perturbations \citep{blum24a}. Their analysis shows how malicious noise in the training data can significantly affect the validity of fairness constraints such as DP or EO. In our case, this could correspond to choosing the perturbation class $\mathcal{P}$ to model adversarial corruptions. The key difference is that we define fairness directly as stability under perturbations, rather than as the resilience of pre-specified fairness constraints to noise. TBD
% +fair classification (notably Agarwal and you). 
% +Discuss the relationship to DRO (the perturbation variance function looks like an average-case relaxation of a worst-case DRO objective) so maybe here and latter in the paper. 

\textbf{Notation.} For an integer \(K>0\), we define \([K]\eqdef \{1,\cdots,K\}\) and write $\Delta_K$ for the probability simplex in $\bbR^K$.
%, and denote by \(\Delta_K \eqdef \{\balpha \in [0,1]^K: \sum_{k\in[K]}\alpha_k = 1\}\) the probability simplex on \(\bbR^K\).
The notation \(\tilde\bigO\) hides unimportant constants and logarithmic terms.
\section{Defining fairness as a property of stability}\label{sec:stability} 
%The concept of stability is ubiquitous in the Sciences and particularly in Mathematics, in that it allows us to describe the properties of a ``system'' (e.g., the solution to a differential equation, the output of a numerical method, or a machine-learning algorithm) when we modify in a certain way the parameters or data that were used to define it. 
The concept of stability is ubiquitous in the sciences, particularly in mathematics, as it describes how a ``system'', such as a machine-learning algorithm, responds to changes in the parameters or data used to define it.
The various notions of stability and their applications are far too numerous to list here and we refer to Appendix~\ref{app:stability} for 
an in-depth description in the machine-learning context. Instead of the main concepts of a predictor’s fairness 
%that have been 
introduced in the literature, 
%which we also review in this section for the sake of clarity, 
we propose 
%to substitute 
a definition based on 
the concept of 
stability under changes in the data distribution. As we will see in the following sections, this new definition allows us to subsume the previous ones within a unified formal framework.
%that is effective for the practical evaluation of fairness properties.
\subsection{Background and preliminaries}
Let us consider the generic problem of predictive learning with fairness concerns regarding sensitive attributes $G$ 
%taking its values in the space 
valued in $\G$, where the random label 
%to be predicted is denoted by 
$Y$
%, with 
takes values in a space $\Y$ equipped with a metric $d_{\Y}:\Y^2\to [0,\infty)$, and the 
%information useful for predicting 
predictive information is modelled by a random vector $\bX$, valued in a high-dimensional metric space $(\X,d_{\X})$. By $\bz=(\bx,g,y)$ is meant a realization in the product space $\mathcal{Z}=\X\times\G\times\Y$ of the random triplet $\bZ = (\bX,G,Y)$, defined on a probability space $(\Omega, \mathcal{F}, \mathbb{P})$ with joint distribution $\cD$ describing the system under study, by $h:\X\to\Y$ an arbitrary (measurable) predictive rule in \textit{unaware} setting. 
%We work in the \textit{unaware} setting, which means that sensitive attributes are assumed to be unavailable at the time of prediction. 
%Let $\X$ be the feature space, $\Y$ the label space, and $\G$ the space of sensitive attributes. We write $z=(\bx,g,y)\in \X\times \G\times \Y$  for a data point, with random variables $(\bX,G,Y)$ jointly distributed according to $\cD$. We consider a measurable prediction function $h:\X\to\{0,1\}$. We work in the \textit{unaware} setting, which means that sensitive attributes are assumed to be unavailable at the time of prediction.
Equipped with these notations, let us now review the standard definitions that aim to formalize 
%the concept of 
algorithmic fairness. Individual fairness (IF), introduced by \citet{dwork2012fairness}, which requires that similar individuals receive similar predictions, is defined as follows.
\begin{definition}[Individual Fairness]
\label{def:IF}
    A predictor $h:\X\to\Y$ satisfies the IF property when it satisfies the $(d_{\Y},d_{\X})$-Lipschitz condition for some constant $L<\infty$:
    \[
  \forall (\bx,\bx')\in \X^2,\;\;  d_{\Y}(h(\bx), h(\bx')) \leq L \cdot d_{\X}(\bx,\bx') \enspace.
    \]
\end{definition}
The concept of group fairness differs in that it requires measures of performance to be identical across groups defined by the sensitive attribute $G$. Depending on the type of predictive problem considered (e.g., classification, regression, ranking) and the criterion chosen to measure performance, it can take various forms.
Among those introduced in the literature, below we highlight the following two major notions for binary classification (\ie $\Y=\{0,1\}$),
%(i.e., when the label $Y$ is binary, valued in $\{0,1\}$), 
that will serve as main examples in the subsequent analysis. For simplicity, the definitions are given for only two sensitive groups, \ie $\G=\{0,1\}$. 
They can be immediately extended to the case where the sensitive attribute $G$ takes more than two values.
%Among the common concepts, we have selected the following. 
\begin{definition}[Demographic Parity]
A predictor $h:\X\to\{0,1\}$ satisfies \emph{Demographic Parity (DP)}
if its predictions are independent of $G$:
$\Pr(h(\bX)=1 \mid G=0) \;=\; \Pr(h(\bX)=1 \mid G=1)$.
\end{definition}
The concept of equal opportunity (EO) requires that there be no disparity with respect to a specific aspect of predictive performance.
\begin{definition}[Equal Opportunity]
A classifier $h$ satisfies the EO property if it achieves
equal true positive rates across groups:
$\Pr(h(\bX)=1 \mid Y=1, G=0) \;=\; \Pr(h(\bX)=1 \mid Y=1, G=1)$.
\end{definition}
%More generally, if $\G$ contains more than two values,
%DP requires $\Pr(h(\bX)=1 \mid G=g)$ to be equal across all $g\in\G$.
%For a multi-valued sensitive attribute $G$, EO requires equality of
%$\Pr(h(\bX)=1 \mid Y=1, G=g)$ across all $g\in\G$.
The concept of Equalized Odds (EOdd) strengthens the EO property by requiring the equality to hold conditionally on both values of the label; its definition is recalled in Appendix~\ref{app:eodds}.
\subsection{A new definition of fairness for a predictive rule}
We now propose to define the fairness of a predictor as a \textit{stability} property. The concept of stability is already widely used in statistical learning theory, primarily to account for
%Stability is a classical concept in statistical learning theory. In its standard form, it concerns 
the sensitivity of a learning algorithm to changes in the training examples \citep{10.1162/153244302760200704, 10.1162/089976699300016304}. An algorithm is said to be stable if replacing a few training data points does not significantly change the learned function. %Refer to the Supplementary Material for more details. 
The framework we are developing here is of a different nature. We consider a fixed prediction function $h$ and seek to quantify how its outputs change as the distribution of sensitive attributes $G$ varies. Unlike \textit{domain adaptation} \citep{redko2022surveydomainadaptationtheory}, which aims to correct such changes, we characterize predictors whose behavior remains stable in the face of these perturbations, and show that classical notions of fairness naturally follow from the requirement for this specific type of stability. 
%Unlike domain adaptation, which seeks to correct for such changes, we characterize predictors whose behaviors is stable with respect to such changes, and show that classical fairness notions arise naturally as instances of the stability requirement. 
% For a  perturbation parameter $\pi$, we denote by $\cD_\pi$ the perturbed distribution, by $\Prob$ the induced probability measure, and by $\Exp$ the corresponding expectation. For the perturbed distribution $\cD_\pi$, we denote the induced probability measure by $\Prob_\pi$, and the corresponding expectation by $\Exp_\pi$. The unperturbed counterparts are written $\Prob$ and $\Exp$. With this in mind, we propose the following definition.
Consider a class $\mathcal{P}$ of (perturbation) operators $\bP:D\subset \mathcal{M}_1(\mathcal{Z})\to \mathcal{M}_1(\mathcal{Z})$ acting on the elements $\mathcal{D}$ of a domain $\mathcal{M}$ included in the set $\mathcal{M}_1(\mathcal{Z})$ of probability measures on $\Z$ to produce a  perturbed probability measure $\bP(\mathcal{D})$ on $\Z$. Since only the values taken by the variable $h(X)$ are relevant to the definition of individual fairness, we set $\Z$ equal to $\X$ in this case.
%For a perturbation parameter $\pi$, we denote by $\cD_\pi$ the perturbed distribution. We denote the induced probability measure by $\Prob_{\cD_\pi}$, and the corresponding expectation by $\Exp_{\cD_\pi}$. The unperturbed counterparts are written $\Prob$ and $\Exp$. 
With this in mind, we propose the following definition, inspired by the formalism of PAC learning.

\begin{definition}[Fairness as Distributional Stability]
\label{def:stability}
Consider a class $\cP$ of perturbation operators acting on a domain $D\subset \mathcal{M}_1(\mathcal{Z})$ and a prediction function $h:\X \to \Y$. Let $\varepsilon>0$ and $\Delta(h;\; (\mathcal{D},\bP))$ be a statistical distance between the distributions of the prediction $h(X)$ under $\mathcal{D}\in \mathcal{M}$ and under $\bP(\mathcal{D})$. The prediction function $h$ is said to be
$\varepsilon$-fair w.r.t. $\cP$ and $\Delta$ if and only if   
\[
 \sup_{(\mathcal{D},\bP)\in \mathcal{M}\times \mathcal{P}}\Delta(h;\; (\mathcal{D},\bP)) \leq \varepsilon \enspace.
\]
%where $\Delta(h; \pi)$ measures the discrepancy between the predictions of $h$ under $\cD$ and under $\cD_\pi$, such as differences in the prediction rate or the performance at the group-level.
\end{definition}
The choice of $\cP$ and that of $\Delta$ are left intentionally general here; specific and interpretable instances are constructed in Section~\ref{sec:equiv}.
As we shall see, it is natural to consider perturbations of the distribution of the sensitive attribute $G$ in the context of group fairness. 
% This connects our framework to the literature on distributionally robust optimization, though our approach does not involve worst-case optimization over a predefined uncertainty set.
% we discuss this connection further in Section~\ref{sec:related}.
%This definition formalizes the intuition that fairness requires
%\emph{insensitivity to relevant changes}. Unlike group or individual fairness, which impose parity constraints with respect to pre-specified attributes or metrics, this approach defines fairness as the stability of predictions and performance under perturbations that encode social considerations.
%\begin{remark}
%Our notion of stability doesn't fall under classical algorithmic-stability definitions, since it doesn't involve retraining the learning algorithm on perturbed datasets. Instead, it considers a \textit{fixed} prediction function, and measures how its predictions change under data perturbations according to some measure. Thus, our notion of stability refers to distributional stability, connecting this also to the literature of domain adaptation (and distributional robustness).
%\end{remark}
\section{From various existing concepts of fairness to a unified approach}\label{sec:equiv}

In this section, we show that the general formalism introduced accounts for the various concepts of individual and group fairness
for specific choices of the class of $\cP$ and the statistical distance $\Delta$.
\subsection{Individual fairness and local perturbations}
The feature space $\X$ being equipped with a task-specific metric $d_{\X}$, as in \citet{dwork2012fairness}, consider the class of transformations of $\X$ indexed by $\rho>0$:
%, we consider perturbations that move each data point $\bx \in \X$ to a nearby point $T(\bx)$ within radius $\rho$, and define the class of admissible transformations as . 
\begin{equation}\label{eq:T}
\cT_\rho = \{T:\X\to\X; \quad T \text{ measurable and } \forall \bx\in\X,\quad d_{\X}(\bx, T(\bx))\le \rho \} \enspace.
\end{equation}
Each transformation $T\in \mathcal{T}_{\rho}$ is naturally associated with a perturbation operator $\bP_T$ acting on the distribution of $X$ and assigning to any probability distribution $\mathcal{D}$ on $\X$ its pushforward by $T$.
\begin{proposition}[IF and Stability]
    \label{prop:indiv-worst-stab}
    Let $h:\X\to\Y$ be a prediction function 
    and $d_{\Y}:\Y^2\to [0,\infty)$ a metric on predictions. The following assertions are equivalent.
    %\begin{enumerate}
        %\item 
        
    \,1. The predictive function $h$ satisfies the IF property with $(d_{\Y},d_{\X})$-Lipschitz constant $L<\infty$.
        %\item 
    
    \,2. For all $\rho >0$, the predictive function $h$ is $L\rho$-fair w.r.t. to the class $\cP_{\rho}$ of perturbation operators $\bP_{T}$, $T\in \mathcal{T}_{\rho}$, defined on the domain $\mathcal{M}=\{\delta_\bx:\; \bx\in \X\}$ of all point masses and taking $\Delta(h;\; (\mathcal{D},\bP))$ as the Wasserstein $1$-distance between the pushforward of $h(\bX)$'s distribution under $\mathcal{D}$ and that under $\bP(\mathcal{D})$  based on the cost function $d_{\Y}$:
        \begin{equation}
        \label{eq:IF_stat_dist}
        \sup_{(\bx,T)\in \X\times \mathcal{T}_{\rho}}d_{\Y}(h(\bx),h(T(\bx)))\leq L\rho \enspace.
        \end{equation}
    %\end{enumerate}
\end{proposition}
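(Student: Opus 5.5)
The plan is to first reduce the $\varepsilon$-fairness condition of Definition~\ref{def:stability} to the pointwise inequality \eqref{eq:IF_stat_dist}, and then prove the two implications directly. For the reduction, note that on the domain $\mathcal{M}=\{\delta_\bx:\bx\in\X\}$ the pushforward of $\delta_\bx$ under $T$ is $\delta_{T(\bx)}$. The induced laws of the prediction are then $\delta_{h(\bx)}$ and $\delta_{h(T(\bx))}$. The Wasserstein $1$-distance with cost $d_{\Y}$ between two Dirac masses equals the cost between their atoms, since the only coupling is the product of the two point masses. Hence $\Delta(h;(\delta_\bx,\bP_T))=d_{\Y}(h(\bx),h(T(\bx)))$, and $L\rho$-fairness w.r.t.\ $\cP_\rho$ is exactly \eqref{eq:IF_stat_dist}.

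\emph{(1)$\Rightarrow$(2).} Fix $\rho>0$, $\bx\in\X$ and $T\in\cT_\rho$. The Lipschitz property gives $d_{\Y}(h(\bx),h(T(\bx)))\le L\,d_{\X}(\bx,T(\bx))\le L\rho$ by the defining constraint of $\cT_\rho$ in \eqref{eq:T}. Taking the supremum over $(\bx,T)$ yields \eqref{eq:IF_stat_dist}.

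\emph{(2)$\Rightarrow$(1).} Fix $\bx\neq\bx'$ and set $\rho=d_{\X}(\bx,\bx')>0$. The natural choice is the map $T$ sending $\bx$ to $\bx'$ and fixing every other point. It is measurable, provided singletons are Borel, which holds in a metric space. It belongs to $\cT_\rho$, because $d_{\X}(\bx,T(\bx))=\rho$ and every other point is displaced by $0$. Applying \eqref{eq:IF_stat_dist} at the point $\bx$ gives $d_{\Y}(h(\bx),h(\bx'))\le L\rho=L\,d_{\X}(\bx,\bx')$. The case $\bx=\bx'$ is trivial, so $h$ is $L$-Lipschitz.

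The only delicate point is building an admissible $T$ in the converse direction, which requires checking measurability and the radius constraint uniformly over all of $\X$; the single-point swap handles both. A secondary check is the identification of $W_1$ between Dirac masses with $d_{\Y}$ of the atoms. I expect no obstacle beyond these bookkeeping points.
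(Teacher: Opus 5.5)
Your proof is correct and follows the paper's argument: the Lipschitz bound for (1)$\Rightarrow$(2), and for the converse the single-point map sending $\bx$ to $\bx'$ with $\rho=d_{\X}(\bx,\bx')$. Your extra checks are details the paper leaves implicit: the identification of $W_1$ between Dirac masses with $d_{\Y}$ of the atoms, the measurability of the swap map, and the separate handling of $\bx=\bx'$ (which avoids the disallowed choice $\rho=0$).
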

% Recall that the Wasserstein $1$-distance between the pushforward of $h(\bX)$'s distribution under $\mathcal{D}$ and that under $\bP(\mathcal{D})$  based on the cost function $d_{\Y}$ is \(\inf_{\bX\sim \mathcal{D},\; \bX'\sim \bP(\mathcal{D})}\mathbb{E}[d_{\Y}(h(\bX),h(\bX')],\) where the infimum is taken over all couplings $(\bX,\bX')$ of the pair of distributions $\mathcal{D}$ and $\bP(\mathcal{D})$.
%In this setting, the perturbation parameter is a transformation $T\in \cT_\rho$ and stability is measured pointwise: a predictor is stable under $\cT_\rho$ if its worst-case prediction change over all individuals and all admissible transformations remains small. 
%\begin{proposition}[IF and Stability]
 %   \label{prop:indiv-worst-stab}
  %  Let $h:\X\to\Y$ be a prediction function, $(\X,d)$ a metric space, 
  %  and $\Delta$ a metric on predictions. The following statements are equivalent.
%    \begin{enumerate}
 %       \item $\delta(h(\bx),h(\bx')) \leq L d(\bx,\bx')  \enspace, \forall \bx,\bx'\in \X$
  %      \item $\sup\limits_{T\in \cT_\rho}\sup\limits_{\bx\in\X} \Delta(h(\bx),h(T(\bx))) \leq L \rho \enspace, \forall \rho > 0 \enspace.$
  %  \end{enumerate}
%\end{proposition}
Individual fairness is therefore equivalent to distributional stability in the sense of Definition \ref{def:stability}, with a stability parameter $L\rho$ proportional to the perturbation parameter $\rho$. 
%This formulation also admits an equivalent distributional form in the language of Definition \ref{def:stability} via pushforward distributions, which we detail in Appendix~\ref{app:proofs:if}. 
\subsection{Group fairness and resampling perturbations}
We now consider the various notions of group fairness. Rather than perturbing individual feature vectors, we consider perturbations that act on the marginal distribution of the sensitive attribute $G$ (assumed binary for simplicity).
%, while leaving the $(\bX,Y)$'s conditional distribution given $G$ unchanged. 
The magnitude of the perturbation is parametrized here by the proportion $\pi\in (0,1)$ of group $G=1$ in the population, and $\Delta$ measures how the prediction rates of $h$ change as this proportion varies. 
We introduce two types of perturbation. The first resamples $G$ globally, while the second resamples $G$ within the positive label class. 
\begin{perturbation}\label{pert:G} 
The class $\cP^{G}_{\pi}$ is composed of all operators $\bP:\mathcal{M}_1(\Z)\to \mathcal{M}_1(\Z)$ such that, for all distribution $\mathcal{D}\in \mathcal{M}_1(\Z)$, 1) the conditional distribution of $(\bX,Y)$ given $G$ is the same under $\mathcal{D}$ and under $\bP(\mathcal{D})$, 2) $\Prob_{\bP(\cD)}(G=1)=\pi$.
%\[
%\cD^\cG
%= \left\{ \cD_\pi  \quad\middle|\quad
%\begin{aligned}
%& \pi \in [0,1] \qquad \Prob_{\cD_\pi}(G=1)=\pi \\
%& \Prob(\bX,Y\mid G) = \Prob_{\cD_\pi}(\bX,Y\mid G)
%\end{aligned}
%\right\}\enspace.
%\]
\end{perturbation}
%ù\begin{perturbation}\label{pert:G} 
%The family $\cD^{G}$, is obtained from $\cD$ by resampling $G$ so that $\Prob_{\cD_\pi}(G=1)=\pi$,
%while keeping $\Prob(\bX,Y\mid G=g)$ unchanged. 
%\[
%\cD^\cG
%= \left\{ \cD_\pi  \quad\middle|\quad
%\begin{aligned}
%& \pi \in [0,1] \qquad \Prob_{\cD_\pi}(G=1)=\pi \\
%& \Prob(\bX,Y\mid G) = \Prob_{\cD_\pi}(\bX,Y\mid G)
%\end{aligned}
%\right\}\enspace.
%\]
%\end{perturbation}
Such a class is non empty, since the type 1 perturbations of a distribution $\mathcal{D}$ are in 1-to-1 correspondence with the Bernoulli random variables on $(\Omega, \mathcal{F},\mathbb{P})$ with parameter $\pi$.
%one can construct $\cD_\pi$ by resampling $G$ independently of $(X,Y)$ within each group. 

\begin{perturbation}
\label{pert:G-1} The class $\cP^{G\mid 1}_{\pi}$ is composed of all operators $\bP:\mathcal{M}_1(\Z)\to \mathcal{M}_1(\Z)$ such that, for all distribution $\mathcal{D}\in \mathcal{M}_1(\Z)$, 1) $\bX$'s conditional distribution given $(Y,G)$ and $Y$'s distribution are the same under $\mathcal{D}$ and under $\bP(\mathcal{D})$, 2) $\Prob_{\bP(\cD)}(G=1\mid Y=1)=\pi$.
\end{perturbation}
% \begin{perturbation}
% \label{pert:G-1} The family
% $\cD^{G\mid1}$ is obtained from $\cD$ by resampling $G$ so that $\Prob_{\cD_\pi}(G=1 \mid Y=1)=\pi$,
% while keeping $\Prob(\bX\mid Y,G)$ and $\Prob(Y)$ unchanged. 
% \end{perturbation}
\begin{remark}[On perturbations] 
In Perturbation~\ref{pert:G-1}, we fix $\Prob(\bX\mid Y,G)$ rather than the stronger $\Prob(\bX, Y\mid G)$ used in Perturbation~\ref{pert:G}. While $\Prob(\bX, Y\mid G)$ implies the invariance of $\Prob(\bX\mid Y,G)$, it also implies the invariance of $\Prob(Y\mid G)$. This, combined with the imposed constraint of $\Prob_{\bP(\cD)}(G=1 \mid Y=1)=\pi$, may be infeasible by Bayes' rule. Fixing only $\Prob(\bX\mid Y,G)$ ensures feasibility. 
\end{remark}
Given a perturbation operator $\bP\in\cP_\pi^{\scriptscriptstyle\square}$, with $\square\in\{G,G\mid 1\}$ and a base distribution $\cD$, we write
$\cD_\pi\eqdef\bP(\cD)$.
We show that group fairness constraints can be expressed by a certain descriptive \textit{functional} that depends on the perturbation setting and is affine \wrt the perturbation parameter. Throughout the paper, we reserve the label \emph{Claim} for statements that 
generalize a pattern observed on the running examples (\eg DP, EO), under conditions tailored to each perturbation rule.

\begin{definition}[Prediction rate functional]\label{def:Phi}
For a prediction function $h:\X\to\{0,1\}$ and a perturbation parameter $\pi \in [0,1]$, we define the functional
\(
 \Phi (h; \pi) \eqdef \Exp_{\cD_\pi}[\phi_h (\bZ)] \enspace, 
\)
where $\phi_h : \Z\to[0,1]$ is some statistic that is linear \wrt $h$.
\end{definition}
\begin{claim}\label{claim:Phi}
Let $\cD_\pi$ be a perturbed distribution obtained from $\cD$ according to a \textit{given, appropriate} perturbation rule and a fixed perturbation parameter $\pi\in[0,1]$. The prediction rate functional $\Phi(h;\pi)$ admits a form
\begin{equation}\label{eq:Phi}
\Phi (h; \pi)  = \alpha (h) + \pi \gamma (h)   \enspace,
\end{equation}
where $\alpha(h)$ and $\gamma(h)$ are some prediction rate quantities under the initial (unperturbed) distribution $\cD$. Moreover, it holds that
\begin{align}
& \label{eq:Phi-lip} | \Phi (h; \pi) - \Phi (h; \pi') | = | \gamma (h)||\pi - \pi'| \enspace, \quad \forall \pi, \pi' \in [0,1] \enspace, \text{ and} \\
& \label{eq:Phi-unf}
 \sup_{\pi,\pi' \in [0,1]} | \Phi (h; \pi) - \Phi (h; \pi') | = | \gamma (h)| \eqdef \unf(h) \enspace,
\end{align}
where $\unf(h)$ is the explicit fairness gap (unfairness) for each constraint.
\end{claim}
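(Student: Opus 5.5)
Since the Claim concerns an unspecified ``appropriate'' perturbation rule, I will establish the general mechanism and then check it on the two running instances, Perturbation~\ref{pert:G} with $\phi_h(\bz)=h(\bx)$ (DP) and Perturbation~\ref{pert:G-1} with $\phi_h(\bz)=h(\bx)\mathbbm{1}\{y=1\}$ (EO). The common structure is that $\cD_\pi$ is a mixture over $G$ with mixing weights that are affine in $\pi$, while the group-conditional laws that $\phi_h$ integrates against stay fixed. Conditioning on $G$ under $\cD_\pi$ and using the tower property gives
\[
\Phi(h;\pi)=(1-\pi)\,\Exp_{\cD_\pi}[\phi_h(\bZ)\mid G=0]+\pi\,\Exp_{\cD_\pi}[\phi_h(\bZ)\mid G=1].
\]

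For Perturbation~\ref{pert:G}, the conditional law of $(\bX,Y)$ given $G$ is the same under $\cD$ and $\cD_\pi$. Hence both conditional expectations equal the unperturbed group rates $r_g(h)=\Prob(h(\bX)=1\mid G=g)$. Setting $\alpha(h)=r_0(h)$ and $\gamma(h)=r_1(h)-r_0(h)$ gives \eqref{eq:Phi}, and $|\gamma(h)|$ is exactly the DP gap.

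For Perturbation~\ref{pert:G-1}, I will take $\Phi(h;\pi)=\Exp_{\cD_\pi}[h(\bX)\mid Y=1]$, i.e.\ normalise by $\Prob(Y=1)$, which is invariant under this perturbation. Conditioning on $G$ within $\{Y=1\}$ gives weights $1-\pi$ and $\pi$, since $\Prob_{\cD_\pi}(G=1\mid Y=1)=\pi$. The conditional laws $\Prob(\bX\mid Y=1,G=g)$ are unchanged, so the same algebra applies with the true positive rates $\mathrm{TPR}_g(h)$, and $|\gamma(h)|$ is the EO gap. In both cases, $\alpha$ and $\gamma$ are computed under $\cD$ alone. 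The well-definedness of $\cD_\pi$, meaning existence and independence from the particular operator $\bP$, follows because $\Phi$ depends on $\bP(\cD)$ only through the constrained marginal and the invariant conditionals.

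Given \eqref{eq:Phi}, the remaining statements are immediate. Subtracting gives $\Phi(h;\pi)-\Phi(h;\pi')=\gamma(h)(\pi-\pi')$, which yields \eqref{eq:Phi-lip}. Taking the supremum over $[0,1]^2$ of $|\gamma(h)||\pi-\pi'|$ gives $|\gamma(h)|$, attained at $\pi=1,\pi'=0$, which yields \eqref{eq:Phi-unf}.

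The main obstacle is not computational but definitional. The Claim holds only for ``appropriate'' rules, so the proof will state the precise sufficient condition: there is a conditioning variable $W$ (here $G$, possibly after conditioning on $Y=1$) such that $\Exp_{\cD_\pi}[\phi_h\mid W]$ is invariant and the law of $W$ is affine in $\pi$. One also has to handle degenerate groups of probability zero, where the conditional rates are undefined. I would assume $\Prob(G=g)>0$, respectively $\Prob(G=g, Y=1)>0$, to avoid this.
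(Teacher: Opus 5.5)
Your proposal is correct and follows essentially the paper's route. The paper never proves the Claim abstractly; it verifies it on the running examples via the law of total probability, $P_h(\pi)=(1-\pi)r_0(h)+\pi r_1(h)$ and $P_{h\mid Y=1}(\pi)=(1-\pi)q_0(h)+\pi q_1(h)$ (Propositions~\ref{prop:DP-stab} and~\ref{prop:EOpp-stab}), exactly as you do, and then reads off \eqref{eq:Phi-lip} and \eqref{eq:Phi-unf} from the affine form. Your added remarks are sound refinements the paper leaves implicit: under Perturbation~\ref{pert:G-1} the law $\Prob_{\cD_\pi}(G\mid Y=0)$ is unconstrained yet irrelevant to the true positive rate, and the group rates require nonnull groups. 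One small imprecision: your opening display with weights $1-\pi,\pi$ on the unconditional $G$ is literally valid only for Perturbation~\ref{pert:G}, although you correctly switch to conditioning within $\{Y=1\}$ for EO.
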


The functional $\Phi(h;\pi)$, although at first obscure, clearly captures the intuition behind each constraint, by being defined as a relevant prediction rate under the perturbed distribution. We now show that DP and EO correspond, respectively, to stability under the introduced perturbations. Throughout, $h:\cX\rightarrow\{0,1\}$ is a possibly randomized prediction function and $G\in \{0,1\}$ is the sensitive attribute. 

\textbf{Demographic Parity and Stability.} Let $\cD_\pi$ be a perturbed distribution according to Perturbation~\ref{pert:G}. We define the \textit{global positive} prediction rate under $\cD_\pi$ as $P_h(\pi) \eqdef \Prob_\pi(h(\bX)=1) = \Exp_{\cD_\pi}[\ind{h(\bX)}]$ (or $\Exp_{\cD_\pi}[{h(\bX)}]$ when randomized) and denote by $r_g (h) \eqdef \Prob(h(\bX)=1\mid G=g)$\enspace. 
\begin{proposition}
\label{prop:DP-stab}
The prediction $h$ satisfies DP if and only if $P_h(\pi)$ is constant in $\pi$. Moreover, 
\[
P_h(\pi) = r_0(h) + \pi(r_1(h)-r_0(h))\enspace,\text{ and } \sup_{\pi,\pi'\in[0,1]}|P_h(\pi)-P_h(\pi')| = |r_1(h)-r_0(h)|\enspace.
\]
\end{proposition}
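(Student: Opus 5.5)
The plan is to compute $P_h(\pi)$ directly by conditioning on $G$ under the perturbed distribution $\cD_\pi$, using the two defining properties of Perturbation~\ref{pert:G}. By the law of total probability,
\[
P_h(\pi) = \Prob_{\cD_\pi}(G=0)\,\Prob_{\cD_\pi}(h(\bX)=1\mid G=0) + \Prob_{\cD_\pi}(G=1)\,\Prob_{\cD_\pi}(h(\bX)=1\mid G=1)\enspace.
\]
The second property of Perturbation~\ref{pert:G} gives $\Prob_{\cD_\pi}(G=1)=\pi$ and $\Prob_{\cD_\pi}(G=0)=1-\pi$. The first property (invariance of the conditional law of $(\bX,Y)$ given $G$) implies that the conditional law of $\bX$ given $G=g$ is unchanged. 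Since $h$ is a fixed measurable function of $\bX$, this gives $\Prob_{\cD_\pi}(h(\bX)=1\mid G=g)=r_g(h)$. In the randomized case the same holds with $\Exp[h(\bX)\mid G=g]$, since the randomization is independent of the data. Substituting yields $P_h(\pi)=(1-\pi)r_0(h)+\pi r_1(h)=r_0(h)+\pi(r_1(h)-r_0(h))$, which is the affine form of Claim~\ref{claim:Phi} with $\alpha(h)=r_0(h)$ and $\gamma(h)=r_1(h)-r_0(h)$.

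The remaining assertions follow from this formula. For $\pi,\pi'\in[0,1]$ we have $|P_h(\pi)-P_h(\pi')|=|r_1(h)-r_0(h)|\,|\pi-\pi'|\le |r_1(h)-r_0(h)|$, and equality is attained at $\pi=1$, $\pi'=0$, so the supremum equals $|r_1(h)-r_0(h)|$. An affine function of $\pi$ is constant on $[0,1]$ if and only if its slope vanishes, i.e.\ $r_1(h)=r_0(h)$, which is exactly DP. Hence DP holds if and only if $P_h$ is constant in $\pi$.

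The only delicate point is well-posedness at the endpoints and for degenerate base distributions. If $\Prob_\cD(G=g)=0$ for some $g$, then $r_g(h)$ is undefined, so I will assume both groups have positive mass under $\cD$. At $\pi\in\{0,1\}$ one conditional under $\cD_\pi$ concerns a null event. There I interpret "the conditional distribution is unchanged" through a fixed version of the kernel of $(\bX,Y)$ given $G$ inherited from $\cD$. Equivalently, I would construct $\cD_\pi$ as the mixture $(1-\pi)\cD(\cdot\mid G=0)\otimes\delta_0+\pi\,\cD(\cdot\mid G=1)\otimes\delta_1$, which makes the formula valid for all $\pi\in[0,1]$. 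I will also note that the value of $P_h(\pi)$ does not depend on which operator $\bP\in\cP^G_\pi$ is chosen, because the computation uses only the two defining properties.
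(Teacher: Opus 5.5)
Your proposal is correct and follows essentially the same route as the paper: the law of total probability under $\cD_\pi$, combined with the two defining properties of Perturbation~\ref{pert:G}, gives $P_h(\pi)=(1-\pi)r_0(h)+\pi r_1(h)$, and the equivalence with DP follows because this affine function is constant exactly when its slope vanishes. You also verify the supremum identity (attained at $\pi=1$, $\pi'=0$) and discuss the endpoints, both of which the paper leaves implicit. At $\pi\in\{0,1\}$ the conditional probability given the null group is multiplied by $0$, so the formula holds without fixing any particular kernel version.
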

% \begin{proposition}[Demographic Parity and Stability]\label{prop:DP-stab}
% Let $r_g (h) \eqdef \Prob(h(\bX)=1\mid G=g)$ and $P_h(\pi) \eqdef \Prob_{\cD_\pi}(h(\bX)=1)$ for $\cD_\pi\in\cD^G$. Then $h$ satisfies Demographic Parity if and only if $P_h(\pi)$ is constant in $\pi$. Moreover, 
% $$
% P_h(\pi) = r_0(h) + \pi(r_1(h)-r_0(h)), \qquad \sup_{\pi,\pi'\in[0,1]}|P_h(\pi)-P_h(\pi')| = |r_1(h)-r_0(h)|.
% $$
% \end{proposition}
The deviation from stability coincides exactly with the DP gap. This result also connects DP to sample representativity: if the training dataset has group proportions that differ from the population, resampling perturbations model precisely this shift. 
\begin{example} Suppose the model predicts a positive outcome for $80\%$ of individuals in group $0$ and for $40\%$ of individuals in group $1$, so $r_0(h)=0.8$ and $r_1(h)=0.4\enspace.$ The global positive prediction rate under a population with group prevalence $\pi$ is 
$P_h(\pi) = 0.8\,(1-\pi) + 0.4\,\pi = 0.8-0.4\,\pi\enspace.$
%As $\pi$ varies from $0$ to $1$, this rate moves linearly from $0.8$, when the population consists entirely of group $0$, down to $0.4$, when it consists entirely of group $1$. The maximal deviation is
Thus, as $\pi$ varies from $0$ to $1$, the prediction rate decreases linearly from $0.8$ to $0.4$, and the maximal deviation is
$\sup_{\pi, \pi'\in[0,1]}|P_h(\pi)-P_h(\pi')|=|r_0-r_1|=0.4\enspace,$
which is the DP gap.
\end{example} 
The same analysis applies to intra-class resampling. 

\textbf{Equal Opportunity and Stability.} Let $\cD_\pi$ be a perturbed distribution according to Perturbation~\ref{pert:G-1}. We define the \textit{true positive} prediction rate under $\cD_\pi$ as $P_{h\mid Y=1}(\pi) \eqdef \Prob_{\cD_\pi}(h(\bX)=1 \mid Y=1) = \Exp_{\cD_\pi} \big[\tfrac{\ind{h(\bX)}
\ind{Y=1}}{\Prob(Y=1)}\big]$ (or $\Exp_{\cD_\pi} [\tfrac{{h(\bX)}
\ind{Y=1}}{\Prob(Y=1)}]$ when randomized) and denote by $q_{g} (h) \eqdef \Prob(h(\bX)=1\mid G=g, Y=1)$\enspace. 
\begin{proposition}\label{prop:EOpp-stab}
The prediction $h$ satisfies EO if and only if $P_{h|Y=1}(\pi)$ is constant in $\pi$. Moreover, 
\[
P_{h\mid Y=1}(\pi) = q_0(h) + \pi(q_1(h)-q_0(h))\enspace,\text{ and } \sup_{\pi,\pi'\in[0,1]}|P_{h\mid Y=1}(\pi)-P_{h\mid Y=1}(\pi')| = |q_1(h)-q_0(h)|\enspace.
\]
\end{proposition}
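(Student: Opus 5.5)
The plan is to compute $P_{h\mid Y=1}(\pi)$ explicitly by conditioning on $G$ within the event $\{Y=1\}$, using the invariances imposed by Perturbation~\ref{pert:G-1}. The argument mirrors that of Proposition~\ref{prop:DP-stab}, with the role of $\Prob(\bX\mid G)$ now played by $\Prob(\bX\mid Y,G)$.

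\textbf{Step 1: decompose the perturbed rate.} Since $\Prob_{\cD_\pi}(Y=1)=\Prob(Y=1)$ by condition 1), the two expressions in the definition of $P_{h\mid Y=1}(\pi)$ agree. The law of total probability under $\cD_\pi$ then gives
\[
P_{h\mid Y=1}(\pi)=\sum_{g\in\{0,1\}}\Prob_{\cD_\pi}(G=g\mid Y=1)\,\Prob_{\cD_\pi}(h(\bX)=1\mid Y=1,G=g).
\]
This requires $\Prob_{\cD_\pi}(Y=1)>0$, which I assume implicitly, as the definition of EO does.

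\textbf{Step 2: identify each factor.} The event $\{h(\bX)=1\}$ depends only on $\bX$ (and on independent internal randomization, if $h$ is randomized, in which case $\Exp[h(\bX)\mid\cdot]$ replaces the probability). Because $\bX\mid(Y,G)$ is unchanged, the second factor equals $q_g(h)$ under $\cD$. By condition 2), the first factor equals $\pi$ for $g=1$ and $1-\pi$ for $g=0$. Hence
\[
P_{h\mid Y=1}(\pi)=(1-\pi)q_0(h)+\pi q_1(h)=q_0(h)+\pi\big(q_1(h)-q_0(h)\big).
\]
The one subtlety is when $\Prob(G=g,Y=1)=0$ under $\cD$, so that $q_g$ is undefined. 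I will assume both groups have positive mass among positives, so that EO is meaningful. This is the only real care point; the rest is routine.

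\textbf{Step 3: conclude.} The map $\pi\mapsto P_{h\mid Y=1}(\pi)$ is affine with slope $q_1(h)-q_0(h)$. It is therefore constant on $[0,1]$ if and only if $q_1(h)=q_0(h)$, which is exactly the EO property. Moreover, $|P_{h\mid Y=1}(\pi)-P_{h\mid Y=1}(\pi')|=|q_1(h)-q_0(h)|\,|\pi-\pi'|$, and the supremum over $\pi,\pi'\in[0,1]$ is attained at $\pi=1,\pi'=0$, giving $|q_1(h)-q_0(h)|$.

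This also instantiates Claim~\ref{claim:Phi} with $\alpha(h)=q_0(h)$ and $\gamma(h)=q_1(h)-q_0(h)$.
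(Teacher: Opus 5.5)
Your proposal is correct and follows essentially the same route as the paper: conditioning on $G$ within $\{Y=1\}$, using the invariance of $\bX\mid(Y,G)$ and $\Prob_{\cD_\pi}(G=1\mid Y=1)=\pi$ to get the affine form, and then reading off the equivalence (zero slope) and the supremum at the endpoints. Your extra remarks, that $\Prob_{\cD_\pi}(Y=1)=\Prob(Y=1)$ makes the two definitions agree and that $\Prob(G=g,Y=1)>0$ is needed, are careful refinements that the paper leaves implicit.
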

\begin{claim}
\label{claim:phi-bd}
Due to linearity of $\phi_h$ \wrt $h$ (Definition~\ref{def:Phi}), there exists a constant $B>0$ such that for every measurable function $f: \X \to \bbR$, it holds that $\Exp[\phi_f(\bZ)]\leq B\Exp|f(\bX)|$. For example, in the case of Demographic Parity, we have $\phi_h(\bX)=h(\bX)$ and $B=1$.
\end{claim}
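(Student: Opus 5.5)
The plan is to make precise what ``linearity of $\phi_h$ \wrt $h$'' means for the running examples, and then read off $B$ from a pointwise bound. In each case treated so far, the statistic has the form $\phi_f(\bZ) = w(Y,G)\,f(\bX)$, where $w:\Y\times\G\to[0,\infty)$ is a bounded weight that does not depend on $f$. For Demographic Parity, $w\equiv 1$. For Equal Opportunity, $w(y,g)=\ind{y=1}/\Prob(Y=1)$, as in the definition of $P_{h\mid Y=1}$ preceding Proposition~\ref{prop:EOpp-stab}. I will first record this structural form as the working interpretation of Definition~\ref{def:Phi}, extended from $\{0,1\}$-valued $h$ to arbitrary measurable $f$ by the same formula.

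With this form, the bound is a one-line pointwise estimate. For every $\bz$,
\[
\phi_f(\bz) \le |w(y,g)|\,|f(\bx)| \le \|w\|_\infty |f(\bx)| ,
\]
so taking expectations under $\cD$ gives $\Exp[\phi_f(\bZ)] \le B\,\Exp|f(\bX)|$ with $B \eqdef \|w\|_\infty$. For DP this yields $B=1$, which is the example in the statement. For EO it yields $B = 1/\Prob(Y=1)$, finite as long as $\Prob(Y=1)>0$; this positivity is already implicit in defining $P_{h\mid Y=1}$.

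If the bound is also needed under the perturbed $\cD_\pi$, I would use the importance-weight representation. Perturbations~\ref{pert:G} and~\ref{pert:G-1} only reweight $(Y,G)$, with density ratio $\pi/\Prob(G=1)$ or $(1-\pi)/\Prob(G=0)$, and its conditional-on-$Y=1$ analogue. This ratio can be absorbed into $w$, which keeps $B$ finite provided the group probabilities are bounded away from $0$.

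The main obstacle is not the inequality but the generality of the claim. ``Linear in $h$'' alone does not force a bounded multiplicative weight: an unbounded linear functional of $f$ would violate the bound. So the real step is to justify that the admissible statistics are of the weighted form $w(Y,G)f(\bX)$ with $w$ bounded. I would state this explicitly as the meaning of the claim, in line with the paper's convention that \emph{Claims} generalize the pattern seen on DP and EO, and then verify it case by case for DP, EO and, in the appendix, EOdd.
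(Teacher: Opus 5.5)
Your proof is correct. It matches what the paper intends: the paper gives no argument beyond the Demographic Parity example ($\phi_h=h$, $B=1$) and implicitly works with statistics of the weighted form $w(Y,G)f(\bX)$ with bounded $w$ (giving $B=1/\Prob(Y=1)$ for EO), so you are right that linearity alone does not imply the bound and that this structural form is the real content of the claim. Your pointwise estimate in fact gives the stronger $\Exp|\phi_f(\bZ)|\le B\,\Exp|f(\bX)|$, which is the form genuinely needed in Lemma~\ref{lem:rates:reg}, where $w_\pi$ is pulled out in sup-norm and $f=\hat\eta_k-\eta_k$ may change sign.
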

% These two results share a common structure: in both cases, the relevant prediction rate is linear in $\pi$, and the deviation from stability is exactly the fairness gap. This observation is the starting point for our unified framework developed next. 
\textbf{Variance control.} It is easy to notice that in all of the examples we have $\sup_{\pi,\pi'\in[0,1]} | \Phi (h; \pi) - \Phi (h; \pi') | = | \Phi (h; 1) - \Phi (h; 0) |$, meaning the supremum is achieved at the endpoints. This rewrites the problem as the usual ``minimization under DP/EO''. However, our interest lies within the resampling aspect of the problem. Rather than focusing on the worst-case deviation over all perturbations, we now adopt a stochastic viewpoint, in which perturbations are achieved according to a distribution $\Pi$. This allows us to quantify fairness in terms of average sensitivity, leading to smoother and more tractable constraints. We propose to reformulate the constraint through variance control.

\begin{definition}[Perturbation variance functional]
\label{def:pert-var}
Let ${\Pi}$ be a probability measure on $[0,1]$ and $\pi\sim\Pi$. We define the perturbation variance functional as
\(
\bV_\Pi (h) \eqdef \Var_{\pi\sim\Pi} (\Phi(h; \pi)) \enspace.
\)
\end{definition}
From Definition~\ref{def:Phi}, we have that $\Phi(h; \pi) = \alpha (h) + \pi \gamma (h)$. Using the fact that for $a,b\in\bbR$ and a random variable $X$, $\Var(a+bX) = b^2 \Var(X)$, we get 
\(
\label{eq:variance}
\bV_\Pi(h) = \gamma(h)^2 \Var_{\pi\sim\Pi} (\pi) \enspace.
\)
This equation above leads to the equivalence 
\(
\label{eq:DP-var-equiv}
\enspace \unf(h) \leq \varepsilon \enspace \iff \enspace \bV_\Pi(h) \leq \varepsilon^2 \Var_{\pi\sim\Pi} (\pi) \enspace. 
\)

\textbf{Links between individual and group fairness.} As we established, both individual and group fairness can be equivalently expressed by our notion of stability from Definition~\ref{def:stability}. Let us emphasise one more connection between the two. The functional $\Phi (h; \pi)$ is \textit{Lipschitz} \wrt perturbation parameter $\pi$~\eqref{eq:Phi-lip}, and its Lipschitz constant is exactly the unfairness gap. This provides a complementary angle to view fairness notions: while individual fairness requires similar predictions for similar individuals, formalized by Lipschitz property of the prediction function \wrt the feature space, our generalized group-fairness formulation requires stability of perturbation-indexed prediction rates under small changes in the perturbation parameter, that is, Lipschitz continuity of the map $\pi\to\Phi(h;\pi)$. However, despite these conceptual analogies and the possibility of extending our variance-based perspective to individual fairness through suitably randomized local perturbations, the methodologies required to achieve fairness in the two settings remain structurally different. We therefore focus from now on on group fairness, leaving explorations of corresponding methods for individual fairness to future work.
\section{Methodology: variance and mean control with mixture models}\label{sec:methodology}
\subsection{Problem formulation}
\textbf{Variance and mean control.} Under the introduced perturbation-variance functional constraint, the problem reduces to classical fairness constraints, with a scaling governed by the variance of the perturbation-parameter distribution. This holds even for the empirical problem. Rather than controlling only the variability of $\Phi(h;\pi)$ across perturbations, we consider a stricter objective of requiring $\Phi(h;\pi)$ to remain close to a given target level $t$. 
%To capture this requirement, we introduce the following functional. 
\begin{definition}[Perturbation $t$-variance functional]
\label{def:pert-t-var}
Let $\Pi$ be a probability measure on $(0,1)$ and $\pi\sim\Pi$. For a $t\in[0,1]$, we define the perturbation $t$-variance functional as
\[
    \cC_\Pi (h; t) \eqdef \Exp_{\pi\sim{\Pi}} [(\Phi(h;\pi) - t)^2] \enspace.
\]
\end{definition}
Adding and subtracting $\Exp_{\pi\sim\Pi}[\Phi(h;\pi)]$ inside the square yields the following decomposition:
% , and noticing that $\Exp_{\Pi}[2(\Phi-\Exp_{\Pi}[\Phi])(\Exp_\Pi [\Phi] -t)] = 2(\Exp_\Pi [\Phi] -t)\Exp_{\Pi}[(\Phi-\Exp_{\Pi}[\Phi])] = 0$, we get the following decomposition, 
\begin{equation}
    \label{eq:t-var-decomp}
    \cC_{\Pi}(h;t) = \bV_{\Pi}(h) + (\Exp_{\pi\sim\Pi}[\Phi(h;\pi)]-t)^2 \enspace.
\end{equation}
As the decomposition~\eqref{eq:t-var-decomp} shows, the functional $\cC_{\Pi}(h;t)$ controls not only the variance term equivalent to fairness, but also the mean level of $\Phi(h;\pi)$ itself. 
Moreover, one can notice that
\(
\bV_\Pi(h) = \inf_{t\in[0,1]} \cC_{\Pi}(h;t)\enspace,\text{ with } t^\star = \Exp_{\pi\sim\Pi}[\Phi(h;\pi)] \enspace.
\)
Thus, we can state the following obvious lemma. 
\begin{lemma} If $h$ satisfies
$\cC_{\Pi}(h;t) \leq  \delta$ for $t\in[0,1],$ then $\bV_\Pi (h) \leq \delta$ and $(\Exp_{\pi\sim\Pi}[\Phi(h;\pi)]-t)^2 \leq \delta$.
\end{lemma}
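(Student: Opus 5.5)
The lemma follows directly from the decomposition \eqref{eq:t-var-decomp}, which writes $\cC_\Pi(h;t)$ as a sum of two nonnegative terms. I would first re-derive this decomposition so the argument is self-contained. Write $m \eqdef \Exp_{\pi\sim\Pi}[\Phi(h;\pi)]$. This is finite because $\phi_h$ takes values in $[0,1]$ (Definition~\ref{def:Phi}), so $\Phi(h;\pi)\in[0,1]$ for every $\pi$ and all moments under $\Pi$ exist. Expanding
\[
(\Phi(h;\pi)-t)^2 = (\Phi(h;\pi)-m)^2 + 2(\Phi(h;\pi)-m)(m-t) + (m-t)^2
\]
and taking $\Exp_{\pi\sim\Pi}$, the cross term vanishes because $m-t$ is deterministic and $\Exp_{\pi\sim\Pi}[\Phi(h;\pi)-m]=0$. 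This leaves $\cC_\Pi(h;t)=\bV_\Pi(h)+(m-t)^2$.

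Next, both summands are nonnegative. The first is a variance (Definition~\ref{def:pert-var}) and the second is a square. Hence each summand is bounded by the sum:
\[
\bV_\Pi(h)\le \cC_\Pi(h;t) \quad\text{and}\quad (m-t)^2\le \cC_\Pi(h;t).
\]
Combining these with the hypothesis $\cC_\Pi(h;t)\le\delta$ gives both conclusions at once.

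There is no genuine obstacle. The only point needing care is measurability and integrability of $\pi\mapsto\Phi(h;\pi)$, so that the expectations are well defined. By Claim~\ref{claim:Phi}, $\Phi(h;\pi)=\alpha(h)+\pi\gamma(h)$ is affine in $\pi$. It is therefore measurable and bounded on $(0,1)$, which settles this point.

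As a remark, the affine form also gives the explicit expressions $\bV_\Pi(h)=\gamma(h)^2\Var_{\pi\sim\Pi}(\pi)$ and $m=\alpha(h)+\gamma(h)\Exp_{\pi\sim\Pi}[\pi]$. Combined with \eqref{eq:DP-var-equiv}, the first conclusion yields $\unf(h)\le\sqrt{\delta/\Var_{\pi\sim\Pi}(\pi)}$ whenever $\Pi$ is nondegenerate. This is not needed for the lemma as stated.
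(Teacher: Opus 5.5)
Your proof is correct and follows the paper's route: the paper adds and subtracts $\Exp_{\pi\sim\Pi}[\Phi(h;\pi)]$ to obtain the decomposition \eqref{eq:t-var-decomp}, then calls the lemma obvious because both summands are nonnegative, so each is at most $\cC_\Pi(h;t)\le\delta$. One minor point: avoid naming the mean $m$, since the paper already uses $m$ for the number of sampled perturbation parameters.
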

\begin{remark}[On the choice of $t$] 
We can choose $t$ as desired, or, more naturally, we can set $t=\Exp_{\pi\sim\Pi}[\Phi(h_0,\pi)]$, where $h_0$ is an unconstrained baseline. The impact of $t$ is studied in Appendix~\ref{app:t-study}.
\end{remark}
We introduce the following standard regularity assumption on the \textit{loss}. 
\begin{assumption}
\label{ass:loss:bd} The loss $\ell:[0,1]\times\Y\to[0,1]$ is convex and $L$-Lipschitz in its first argument.
\end{assumption}
%Given a \textit{loss function} $\ell:[0,1]\times\cY \to \bbR_+$, 
We define the corresponding population \textit{risk} of a predictor $h:\X\to [0,1]$ by $\risk(h) \eqdef \Exp [\ell(h(\bX), Y)]$. Given a $t\in[0,1]$, our goal is to solve the following constrained problem 
\begin{equation}
\label{pr:stab-init}
\min_{h\in\cH} \enscond{\risk(h)}{\cC_{\Pi}(h; t) \leq \delta} \enspace. \tag{$\cP$}   
\end{equation}
\textbf{Convex mixtures.} 
In our framework, fairness is characterized through the behaviour of the prediction function across a family of perturbed distributions. This makes it natural to combine predictors trained on different distributions, especially given that model averaging has already been shown to be effective for enforcing fairness in the classical setting~\citep{fermanian2025fair}.
% \begin{definition}[Convex mixture]
%  For functions $h_1,\cdots,h_K$ and for a vector $\balpha = (\alpha_1,\cdots,\alpha_K) \in \Delta_K$, we define a convex mixture as $h_\balpha = \sum_{k=1}^K \alpha_k h_k$.
% \end{definition}

For $\pi_1,\cdots, \pi_K \simiid \Pi$, let $\eta_1,\cdots,\eta_K$ be (unconstrained)\textit{ Bayes regression functions} minimizing the risks on respective distributions $\cD_{\pi_1},\cdots,\cD_{\pi_K}$, that is $\eta_k \in \argmin_{h}{\risk_{\cD_k}(h)}\enspace,\text{for } k \in [K] \enspace.$ For $\balpha\in\Delta_K$, we define a \textit{convex mixture} $\bdeta_\balpha \eqdef \sum_{k=1}^K \alpha_k \eta_k$. 

Rather than optimizing the true risk of the mixture directly, we focus on its linearized convex surrogate, defined as $\overline \risk (\bdeta_\balpha) \eqdef \sum_{k=1}^K \alpha_k \risk (\eta_k)$. Since the loss is convex in prediction, Jensen's inequality implies that $\risk(\bdeta_\balpha) = \risk (\sum_{k=1}^K \alpha_k \eta_k) \leq \sum_{k=1}^K \alpha_k \risk(\eta_k) = \overline \risk(\bdeta_\balpha)\enspace,$ showing that this surrogate upper-bounds the actual risk of the convex mixture. Given a $t\in[0,1]$, the problem~\eqref{pr:stab-init} translates as 
 \[
 \label{pr:stab}
 \balpha^\star \in \argmin_{\balpha\in\Delta_K} \enscond{\overline{\risk}(\bdeta_\balpha)}{\cC_{\Pi}(\bdeta_\balpha; t) \leq \delta} \enspace. \tag{$\cP_\balpha$}
 \]
\textbf{Empirical problem.} Let us define the empirical counterparts of the quantities of our interest and consider the empirical problem of~\eqref{pr:stab}. We define the empirical risk of $h$ on an \iid sample $\cD^N=\{\bZ_1,\cdots,\bZ_N\}$ as $\hat\risk_{\cD^N}(h)\eqdef\tfrac{1}{N}\sum_{i=1}^N\ell(h(\bX_i),Y_i)$.
Let $\hat\eta_k \in \argmin_{h}{\hat\risk_{\cD^N_{\pi_k}}(h)}$ estimate $\eta_k$ for all $k\in[K]$ on the perturbed samples of $\cD^N$. For $\balpha\in\Delta_K$, we define $\hat{\bdeta}_\balpha \eqdef \sum_{k=1}^K \alpha_k \hat\eta_k$.
% Let $\hat{\eta}_1,\cdots,\hat{\eta}_K$ be the estimates of $\eta_1,\cdots,\eta_K$ on respective perturbations of $\cD^N$, \ie $\hat\eta_k \in \argmin_{h}{\hat\risk_{\cD^N_{\pi_k}}(h)}\enspace,\forall k \in [K]$. For $\balpha\in\Delta_K$, we define $\hat{\bdeta}_\balpha \eqdef \sum_{k=1}^K \alpha_k \hat\eta_k$.
\begin{assumption}(Regression rates)
\label{ass:reg-rate}
It holds that $\max_{k \leq K} \Exp|{\hat{\eta}_k - \eta_k}| \leq \rho_{N}$.
\end{assumption}
\begin{remark}(On regression rates)
\label{rem:reg-rates}
Assumption~\ref{ass:reg-rate} can be satisfied with $\rho_N \leq \tilde\bigO(\nicefrac{1}{\sqrt{N}})$ for parametric cases, and even with faster rates under certain assumptions.
For example, under the assumption, that there exist $\beta \in \bbN \setminus \{0\}$ and $L > 0$, such that $ |\eta(\bx) - \eta_{\bx}(\bx')| \leq L \|\bx - \bx'\|^\beta\,,$ where $\eta_{\bx}$ is Taylor polynomial of degree $\beta$ at $\bx$, and under an additional assumption that the support of $\Prob_{\bX}$ is $\cX = [-1, 1]^d$ and it admits density uniformly lower and upper bounded on $\cX$, then for some constants $C_1,C_2$, it holds that $\sup_{\Prob}\Prob_n(|\eta(\bX) - \hat\eta(\bX)| \geq \epsilon) \leq C_1 \exp(C_2 N^{-\frac{2\beta}{2\beta+d}}\epsilon^2)$ \citep[Theorem~3.2]{tsybakov2007fast}. Here, Assumption~\ref{ass:reg-rate} is satisfied with $\rho_N \leq \tilde\bigO(N^{-\frac{\beta}{2\beta+d}})$.
\end{remark}
\begin{lemma}[Mixture rates] Under Assumption~\ref{ass:reg-rate} and Claim~\ref{claim:bound-w}, $\Exp|{\hat{\bdeta}_\balpha - \bdeta_\balpha}| \leq \rho_{N}$ for all $\balpha\in\Delta_K$.    
\end{lemma}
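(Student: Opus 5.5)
The proof is a triangle inequality followed by a convexity argument. Fix $\balpha\in\Delta_K$. Since $\hat{\bdeta}_\balpha - \bdeta_\balpha = \sum_{k=1}^K \alpha_k(\hat\eta_k - \eta_k)$ holds pointwise on $\X$, we get for every $\bx$
\[
|\hat{\bdeta}_\balpha(\bx) - \bdeta_\balpha(\bx)| \le \sum_{k=1}^K \alpha_k |\hat\eta_k(\bx) - \eta_k(\bx)| \enspace,
\]
because $\alpha_k\ge 0$. Taking expectations and using linearity gives
\[
\Exp|\hat{\bdeta}_\balpha - \bdeta_\balpha| \le \sum_{k=1}^K \alpha_k \Exp|\hat\eta_k-\eta_k| \enspace.
\]

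Next I would bound each term by Assumption~\ref{ass:reg-rate}, which gives $\Exp|\hat\eta_k-\eta_k|\le\rho_N$ for every $k\le K$. Since $\sum_k\alpha_k=1$, the right-hand side is at most $\rho_N$. This holds for all $\balpha\in\Delta_K$, because the bound never depends on $\balpha$ beyond its membership in the simplex.

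The main subtlety is which measure the expectations are taken under. The estimators $\hat\eta_k$ are fitted on perturbed samples $\cD^N_{\pi_k}$, and the expectation may involve the randomness of the sample and of the $\pi_k$ as well as $\bX$. Assumption~\ref{ass:reg-rate} may also control the error under the perturbed laws $\cD_{\pi_k}$ rather than under the common base law $\cD$ in the lemma. This is where Claim~\ref{claim:bound-w} comes in: I expect it to provide bounded importance weights relating $\cD_{\pi_k}$ and $\cD$. I would first read Assumption~\ref{ass:reg-rate} as already stated under the same law as the lemma, in which case the argument above is complete. If the assumption is instead stated under $\cD_{\pi_k}$, I would use the importance-weight representation $d\cD_{\pi_k}/d\cD = w_{\pi_k}(\bZ)$ from Claim~\ref{claim:bound-w} to transfer each term to a common law. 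This costs at most a constant factor from the weight bound, which would then have to be absorbed into $\rho_N$ or into the rate's constants.

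Finally, since the $\pi_k$ are drawn i.i.d.\ and $\balpha$ is deterministic, conditioning on $(\pi_1,\dots,\pi_K)$ and applying the tower property preserves the bound. It therefore holds both conditionally and unconditionally.
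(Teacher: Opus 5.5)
Your argument is correct and is the same convexity argument the paper relies on. Pointwise you have $|\hat{\bdeta}_\balpha-\bdeta_\balpha|\le\sum_k\alpha_k|\hat\eta_k-\eta_k|$, which gives $\Exp|\hat{\bdeta}_\balpha-\bdeta_\balpha|\le\max_k\Exp|\hat\eta_k-\eta_k|\le\rho_N$ by Assumption~\ref{ass:reg-rate}, with all expectations under the same law. Claim~\ref{claim:bound-w} is not needed for this step. Your fallback transfer from $\cD_{\pi_k}$ to $\cD$ would require a lower bound on $w_{\pi_k}$, since the upper bound $W$ only transfers in the opposite direction, and it would not give exactly $\rho_N$.
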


Let $\cD^n = \{\bZ_1,\cdots,\bZ_n\}$ be \iid samples from $\cD$, and $\Pi^{m} = \{\pi'_1,\cdots,\pi'_m\}$ be \iid samples from $\Pi$. For a fixed $\pi'$, we define $\hat{\Phi}_n(h;\pi') \eqdef \tfrac{1}{n}\sum_{i=1}^n {w}_\pi(\bZ_i)\phi_{h}(\bZ_i)$. Let $\hat\bA\in\bbR^{m\times K}$ be a matrix with elements $\hat A_{j,k}=\hat\Phi_n(\hat\eta_k,\pi_j)$. Notice, that we can rewrite the empirical constraint as 
$\hat{\cC}_{n,m}(\hat{\bdeta}_\balpha,t)=\tfrac{1}{m}\|{\hat\bA\alpha-t \mathbf{1}}\|_2^2 \enspace.$
Denoting $\hat{r}_k \eqdef \hat{\risk}(\hat\eta_k)$ and $\hat\br=(\hat{r}_1,\cdots,\hat{r}_K)$, we obtain the following second-order cone program (SOCP)
\begin{equation}
\label{pr:socp}
\tag{$\cP_\texttt{SOCP}$}
\hat{\balpha} \in \argmin_{\balpha\in\Delta_K} \enscond{\hat \br^\top \balpha}{\big\|{\hat\bA\balpha-t \mathbf{1}}\big\|_2 \leq \sqrt{m\delta}} \enspace.
\end{equation}
\textbf{Feasibility and existence.} The problem~\eqref{pr:socp} has a continuous linear objective, and the simplex $\Delta_K$ is compact and convex. Therefore, once feasibility is established, existence of an optimizer follows from the Weierstrass extreme value theorem. In classical fairness problems, (strict) feasibility is ensured by constant functions, but in our mixture setting one cannot generally claim that a constant function belongs to the convex hull of $\{\eta_k\}_{k=1}^K$ (or $\{\hat\eta_k\}_{k=1}^K$). However, if we augment both population and empirical dictionaries by constant functions $h_{K+1}\equiv1$ and $h_{K+2}\equiv0$, and choose $\balpha=(0,\cdots,0,t,1-t)$, yielding a constant predictor $h_\balpha\equiv t$, we will have $\cC_\Pi(t,t)=0<\delta$ (or $\hat\cC_{n,m}(t,t)=0<\delta$). Note, that these two additional predictors do not affect our general theoretical results, thus we omit them from our discussion, and impose strict feasibility as an assumption.
\begin{assumption}[Strict feasibility] \label{ass:feas} There exists $\balpha^0 \in \Delta_K$ \st $\cC_\Pi(\bdeta_{\balpha^0};t) < \delta$.
\end{assumption}
\subsection{Constructing perturbations} 
%The construction of perturbations is twofold. First, we choose a distribution $\Pi$ together with a sampling rule. Then, for each sampled $\pi'\sim\Pi$, we construct the perturbed distribution $\cD_{\pi'}$.
Let us denote by $p_g \eqdef \Prob (G = g)$ and $p_{g,y} \eqdef \Prob (G = g \mid Y=y)$ for all $g,y \in \{0,1\}\enspace.$
\begin{assumption}[Bounded group probabilities]
\label{ass:bound-p} Let $p_g, p_{g,y}\in[\ubar{p}, 1 - \ubar{p}]$, with $\ubar{p} \in (0,\nicefrac{1}{2})$.
\end{assumption}
\begin{assumption}[Bounded perturbations]
\label{ass:bound-pi} Let $\Pi$ be supported on $[\ubar{\pi},1-\ubar{\pi}]$, with $\ubar{\pi}\in(0,\nicefrac{1}{2})$.
\end{assumption}
The construction of perturbations is twofold. First, we consider \textit{stratified sampling} from $\Pi$. Letting $\Pi=U(\ubar{\pi},1-\ubar{\pi})$, we partition $[\ubar{\pi},1-\ubar{\pi}]$ uniformly into intervals $I_1,\cdots,I_m$ of length $\tfrac{(1-2\ubar{\pi})}{m}$, and sample $\pi'_j \sim U(I_j)$ independently for $j=1,\cdots,m$. The stratification yields faster convergence rates, as discussed in the next section.
% We are able to show a particularly attractive property, that is for a Lipschitz function $f:[\ubar{\pi},1-\ubar{\pi}]\to\bbR$, it holds \textit{with high probability} that $
% \Big|\frac{1}{m}\sum_{j=1}^m f(\pi_j') - \Exp_\Pi[f(\pi)]\Big| \leq \tilde\bigO(\frac{1}{m^{\nicefrac{3}{2}}}) $. This property allows us to get faster convergence rates from the error that occurs from the sampling of the perturbation parameters. This is discussed in Appendix~\ref{app:rates:pert}.
% \textbf{Constructing $\cD_\pi$ through importance weights.} 
Then, for a perturbation parameter $\pi\in\Pi$, the corresponding distribution shift is represented through \textit{importance-weights}, defined by the Radon-Nikodym derivative as
$
w_\pi(\bz) \eqdef {\tfrac{d \cD_\pi}{d \cD}}(\bz) > 0 \enspace.
$
For our examples, Assumptions~\ref{ass:bound-p},\ref{ass:bound-pi} ensure that $\cD_\pi\ll\cD$, and we can explicitly compute $w_\pi$ (Appendix~\ref{app:weights}). Applying Radon-Nikodym theorem gets us $
\Phi(h;\pi) = \Exp_{\cD_\pi}[\phi_h(\bZ)] = \Exp[w_\pi(\bZ)\phi_h(\bZ)] \enspace. 
$
 % From  In our examples, we can explicitly compute $w_\pi$ (Appendix~\ref{app:weights}). 
%  For example, under Perturbation~\ref{pert:G}, considering the factorization $\Prob(\bX, G, Y) = \Prob(G)\Prob(\bX,Y \mid G)$, we get $
% w_\pi (g) = \frac{\pi}{p_1}\ind{g=1} +
% %   \frac{1-\pi}{p_0}\ind{g = 0} \enspace.$ Or, under Perturbation~\ref{pert:G-1}, considering the factorization $\Prob(\bX, G, Y) = \Prob(Y)\Prob(G \mid Y)\Prob(\bX \mid Y, G)$, we get $w_\pi (g, y) = \frac{\pi}{p_{1,1}} \ind{(g,y) = (1,1)} + \frac{1-\pi}{p_{0,1}} \ind{(g,y) = (0,1)} + \ind{y=0}\enspace.$ 
% Moreover, for any measurable function $f(\cdot)$, when $\cD_\pi\ll\cD$ (which is ensured by Assumptions~\ref{ass:bound-p}-\ref{ass:bound-pi} for our examples), it holds that $\Exp_{\cD_\pi}[f(\bZ)] = \Exp[w_\pi(\bZ)f(\bZ)]$. In particular, considering Definition~\ref{def:Phi}, we can state that
% $
% \Phi(h;\pi) = \Exp_{\cD_\pi}[\phi_h(\bZ)] = \Exp[w_\pi(\bZ)\phi_h(\bZ)] \enspace. 
% $
This means, that instead of resampling, we can just compute and plug in the respective importance weights onto our expressions. 
\begin{claim}[Bounded weights]
\label{claim:bound-w} Under Assumptions~\ref{ass:bound-p},\ref{ass:bound-pi}, $\exists W>0$ \st $w_\pi (\bZ) < W, \forall 
(\pi,\bZ)\in\Pi\times\Z$.
%\pi\in\Pi\enspace,\enspace \forall \bZ \in\Z$.
\end{claim}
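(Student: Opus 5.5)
The plan is to compute the Radon--Nikodym derivative $w_\pi = d\cD_\pi/d\cD$ explicitly for each perturbation rule, and then bound it using Assumptions~\ref{ass:bound-p} and~\ref{ass:bound-pi}. The common mechanism is that each perturbation changes only one factor in a factorization of the joint law of $\bZ=(\bX,G,Y)$ and leaves the other factors fixed. The weight is therefore the ratio of the perturbed factor to the original one. It depends only on $(g,y)$ and takes finitely many values.

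\textbf{Perturbation~\ref{pert:G}.} Factor the joint law as $\Prob(\bX,G,Y)=\Prob(G)\,\Prob(\bX,Y\mid G)$. The conditional law of $(\bX,Y)$ given $G$ is unchanged, and $\Prob_{\cD_\pi}(G=1)=\pi$. Hence
\[
w_\pi(\bz)=\frac{\pi}{p_1}\ind{g=1}+\frac{1-\pi}{p_0}\ind{g=0}.
\]
To justify this rigorously, I will check that $\Exp[w_\pi(\bZ)f(\bZ)]=\Exp_{\cD_\pi}[f(\bZ)]$ for all bounded measurable $f$. This follows by conditioning on $G$ and using that the conditionals are invariant. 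It also gives $\cD_\pi\ll\cD$, since $p_g>0$.

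\textbf{Perturbation~\ref{pert:G-1}.} Factor the joint law as $\Prob(Y)\,\Prob(G\mid Y)\,\Prob(\bX\mid Y,G)$. Only $\Prob(G\mid Y=1)$ is prescribed. For $y=0$, the natural choice within the class keeps $\Prob(G\mid Y=0)$ unchanged. With that choice,
\[
w_\pi(\bz)=\frac{\pi}{p_{1,1}}\ind{(g,y)=(1,1)}+\frac{1-\pi}{p_{0,1}}\ind{(g,y)=(0,1)}+\ind{y=0}.
\]
This is verified by the same conditioning argument.

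\textbf{Bounding the weights.} The bound follows directly. By Assumption~\ref{ass:bound-pi}, $\pi,1-\pi\le 1-\ubar\pi<1$ on the support of $\Pi$. By Assumption~\ref{ass:bound-p}, $p_g,p_{g,y}\ge\ubar p>0$. So in both cases
\[
w_\pi(\bz)\le \frac{1-\ubar\pi}{\ubar p}\vee 1<\frac{1}{\ubar p}.
\]
Any $W$ with $W>(1-\ubar\pi)/\ubar p$ and $W>1$ works, for instance $W=1/\ubar p$, and this choice is uniform in $(\pi,\bz)$. Positivity of the weights follows from $\pi\in(0,1)$.

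\textbf{Main obstacle.} The genuine issue is that Perturbation~\ref{pert:G-1} defines a class of operators rather than a single one. The conditional law $\Prob_{\cD_\pi}(G\mid Y=0)$ is not pinned down by the definition, so the weight on $\{y=0\}$ is $\Prob_{\cD_\pi}(G=g\mid Y=0)/p_{g,0}$. This is bounded by $1/\ubar p$ regardless of the choice, again by Assumption~\ref{ass:bound-p}. I would state the claim with this general bound, $W=1/\ubar p$. For the Claim as a general statement beyond DP and EO, the same recipe applies whenever the perturbation reweights a finite-valued factor with parameters in $[\ubar\pi,1-\ubar\pi]$ against base probabilities bounded below.
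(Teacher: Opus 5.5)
Your proposal is correct and follows essentially the same route as the paper, which supports the claim only through the explicit importance weights of Appendix~\ref{app:weights}: $\pi/p_1$ and $(1-\pi)/p_0$ for Perturbation~\ref{pert:G}, and $\pi/p_{1,1}$, $(1-\pi)/p_{0,1}$, $1$ for Perturbation~\ref{pert:G-1}, each bounded via Assumptions~\ref{ass:bound-p} and~\ref{ass:bound-pi}; your observation that Perturbation~\ref{pert:G-1} leaves $\Prob_{\cD_\pi}(G\mid Y=0)$ unspecified (the paper silently keeps it fixed) is a genuine improvement. The only nit is that in that general case you obtain $w_\pi\le 1/\ubar{p}$ rather than a strict inequality, so take any $W>1/\ubar{p}$ (e.g.\ $W=2/\ubar{p}$) to match the strict bound in the statement.
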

\subsection{Convergence rates}
We now turn to the convergence analysis of the finite procedure, aiming to understand how the empirical solution $\hat{\bdeta}_{\hat\balpha}$ is approaching its population counterpart $\bdeta_{\balpha^\star}$. There are three sources of error in our framework, each contributing in its own rate. Under Assumption~\ref{ass:reg-rate} and Remark~\ref{rem:reg-rates}, \textit{estimating regression functions} $\eta_k$ from $N$ samples of $\cD^N$ yields to an error $\rho_N \asymp  N^{-\frac{\beta}{2\beta+d}}$, that under convexity transfers to the mixture $\bdeta_\balpha$ as well. Linearity of $\phi_h$ in $h$, combined with Claims~\ref{claim:phi-bd},\ref{claim:bound-w} leads to overall rate of $| \Phi(\hat{\bdeta}_\balpha; \pi) -\Phi(\bdeta_\balpha; \pi) | \leq \delta_N \asymp \rho_N$. For a fixed $\pi'$, under Claim~\ref{claim:bound-w}, and with Bernstein's inequality, the \textit{empirical approximation} of $\Phi(\hat\eta_k; \pi')$ from $n$ samples from $\cD^n$ contributes $\delta_n \asymp n^{-\frac{1}{2}}$. The third error comes from \textit{perturbation parameter sampling}, where for any Lipschitz function of $\pi$, stratification of $[\ubar{\pi}, 1- \ubar{\pi}]$ into $m$ intervals replaces the usual $m^{-\frac{1}{2}}$ by $\delta_m \asymp m^{-\frac{3}{2}}$. The \textit{risk analysis} relies on a value continuity argument, that follows from Assumption~\ref{ass:feas}. Under Assumption~\ref{ass:loss:bd}, the approximation of the linearized risk introduces an additional error $\delta_{n(R)}\asymp n^{-\frac{1}{2}}$. 
The combination of the introduced errors leads to the following (informal) theorem. The formal versions of all the statements from this section, along with their proofs, are given in Appendix~\ref{app:conv-rates}.
\begin{theorem}[Informal]\label{thm:main-rates} Suppose Assumptions~\ref{ass:loss:bd}--\ref{ass:bound-pi} hold. Then, conditionally on $\cD^N$ and with ``high probability'' over $(\cD^n,\Pi^m)$, the following hold.
%\begin{enumerate}
    %\item 
    
    \quad\text{1. Constraint.} Let $\hat{\balpha} \in \Delta_K$ \st $\hat{\cC}_{n,m}(\hat{\balpha}; t) \leq \delta$. Then, $\cC_\Pi(\hat{\bdeta}_{\hat{\balpha}}; t)
    \leq (\sqrt{\delta} + \delta_n)^2 + \delta_m \enspace.$
    %\item 
    
    \quad\text{2. Unfairness.} It holds that $\unf(\hat{\bdeta}_{\hat{\balpha}})
    \leq \nicefrac{(\sqrt{\delta} + \delta_n + \sqrt{\delta_m})}{\sqrt{\Var_\Pi(\pi)}} \enspace.$
    %\item 
    
    \quad\text{3. Uniform Deployment.} For {all} $\pi' \in [\ubar{\pi},1-\ubar{\pi}]$, it simultaneously holds that 
    
    \qquad $|\Phi(\hat{\bdeta}_{\hat\balpha};\pi') - t| \leq (\sqrt{\delta} + \delta_n  + \sqrt{\delta_m})
    \cdot \left(1 + \nicefrac{|\pi' - \Exp_\Pi[\pi]|}{\sqrt{\Var_\Pi(\pi)}}\right) \enspace.$
    %\item 
    
    \quad\text{4. Risk.} There exists a constant $0<C<\infty$, \st it holds that

    \qquad${\overline{\risk}} (\hat\bdeta_{\hat\balpha}) - {\overline{\risk}} (\bdeta_{\balpha^\star}) \leq L\cdot\delta_N  + C \cdot (2\sqrt{\delta}(\delta_n + \delta_N) + (\delta_n + \delta_N)^2 + \delta_m) + 2\delta_{n(R)}\enspace.$
%\end{enumerate}
\end{theorem}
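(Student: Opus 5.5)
The plan rests on one structural observation. By Claim~\ref{claim:Phi}, linearity of $\phi_h$ in $h$, and the importance-weight representation, every quantity is affine in $\balpha$ and affine in $\pi$. For any $\balpha\in\Delta_K$ and $\pi'$,
\[
\Phi(\hat{\bdeta}_\balpha;\pi')=\sum_k\alpha_k\big(\alpha(\hat\eta_k)+\pi'\gamma(\hat\eta_k)\big),\qquad \hat\Phi_n(\hat{\bdeta}_\balpha;\pi')=\tfrac1n\sum_i w_{\pi'}(\bZ_i)\phi_{\hat{\bdeta}_\balpha}(\bZ_i).
\]
Under Perturbations~\ref{pert:G}--\ref{pert:G-1} the weights $w_{\pi'}$ are themselves affine in $\pi'$. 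So $\hat\Phi_n(\cdot;\pi')$ is determined by the $2K$ empirical averages of the endpoint weights $w_0\phi_{\hat\eta_k}$ and $w_1\phi_{\hat\eta_k}$. These are bounded by $W$ (Claim~\ref{claim:bound-w}) and are independent of $\cD^n$ given $\cD^N$. Bernstein plus a union bound over these $2K$ terms gives, with high probability, $\sup_{\balpha,\pi'}|\hat\Phi_n(\hat{\bdeta}_\balpha;\pi')-\Phi(\hat{\bdeta}_\balpha;\pi')|\le\delta_n$ uniformly. This uniformity is what lets us handle the data-dependent $\hat\balpha$ without any covering argument.

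\textbf{Step 1 (constraint).} Write $\cC_\Pi$ as $\Exp_\Pi$ of $f_\balpha(\pi)=(\Phi(\hat{\bdeta}_\balpha;\pi)-t)^2$, and let $\cC_m$ denote the average of $f_\balpha$ over the stratified sample. First compare $\hat\cC_{n,m}$ with $\cC_m$. Viewing $\hat\cC_{n,m}$ as the squared norm $\tfrac1m\|\cdot\|_2^2$ of the vector $(\hat\Phi_n-t)_j$, Minkowski's inequality together with the uniform $\delta_n$ bound gives $\sqrt{\cC_m}\le\sqrt{\hat\cC_{n,m}}+\delta_n\le\sqrt\delta+\delta_n$.

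Next handle the stratification error $\Exp_\Pi f_\balpha-\cC_m$. The function $f_\balpha$ is a quadratic polynomial in $\pi$ whose coefficients are bounded uniformly in $\balpha$, so it is Lipschitz (indeed smooth). For a stratified sample, the error decomposes as a sum of $m$ independent centered terms, each of size $O(1/m\cdot1/m)$, which yields $\tilde\bigO(m^{-3/2})=\delta_m$ by Hoeffding. Uniformity over $\balpha$ again comes from finite dimensionality: $f_\balpha$ is a quadratic form in the $2K$ coefficients, so it suffices to control the stratified averages of $1,\pi,\pi^2$ only. That gives $\cC_\Pi\le(\sqrt\delta+\delta_n)^2+\delta_m$.

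\textbf{Step 2 (unfairness).} Use the decomposition~\eqref{eq:t-var-decomp}, which gives $\bV_\Pi\le\cC_\Pi$, together with $\bV_\Pi=\gamma^2\Var_\Pi(\pi)$. Then apply $\sqrt{a^2+b}\le a+\sqrt b$.

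\textbf{Step 3 (uniform deployment).} Write $\Phi(\cdot;\pi')-t=(\Exp_\Pi\Phi-t)+\gamma(\pi'-\Exp_\Pi\pi)$. The first term is bounded by $\sqrt{\cC_\Pi}$ via the earlier lemma. The second is bounded by $\unf\,|\pi'-\Exp\pi|$, using Step 2.

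\textbf{Step 4 (risk), the main obstacle.} The plan is a three-part argument. First, show that the population solution $\balpha^\star$ is nearly feasible for the empirical problem. The same perturbation bounds, now with $\delta_N$ added to account for replacing $\eta_k$ by $\hat\eta_k$ via Assumption~\ref{ass:reg-rate}, Claims~\ref{claim:phi-bd}, \ref{claim:bound-w} and the mixture-rates lemma, give $\hat\cC_{n,m}(\hat{\bdeta}_{\balpha^\star})\le(\sqrt\delta+\delta_n+\delta_N)^2+\delta_m$.

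Second, restore exact feasibility. Using Assumption~\ref{ass:feas}, mix $\balpha^\star$ with $\balpha^0$ with weight $\lambda=O(\text{excess}/(\delta-\cC_\Pi(\balpha^0)))$. The constraint is convex in $\balpha$, so the mixture is feasible, and the objective moves by at most $\lambda$ because risks lie in $[0,1]$. This is the value-continuity argument, and it produces the constant $C$ multiplying $2\sqrt\delta(\delta_n+\delta_N)+(\delta_n+\delta_N)^2+\delta_m$. Getting the exact form of the slack bookkeeping is the delicate part.

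Third, compare objectives. Optimality of $\hat\balpha$ gives $\hat\br^\top\hat\balpha\le\hat\br^\top\tilde\balpha$ for the feasible mixture $\tilde\balpha$. Convert each $\hat r_k$ to $\risk(\hat\eta_k)$ at cost $\delta_{n(R)}$ by Hoeffding and a union bound over $K$; this is paid twice. Finally, convert $\risk(\hat\eta_k)$ to $\risk(\eta_k)$ at cost $L\delta_N$ using the Lipschitz property in Assumption~\ref{ass:loss:bd}.
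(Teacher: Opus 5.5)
Your proposal is correct. Steps 1--3 follow the paper's argument: your Minkowski step replaces its expand-then-Cauchy--Schwarz, and, like the paper, you reduce the stratification error, uniformly in $\balpha$, to the stratified averages of $\pi$ and $\pi^2$. The concentration step is where you differ. The paper union-bounds Bernstein over the $Km$ pairs $(k,\pi'_j)$. You instead use that $w_\pi$ is affine in $\pi$ to reduce everything to $2K$ empirical averages, which gives a deviation bound uniform in $(\balpha,\pi')$ with no $\log m$. Interpolate between $w_{\ubar{\pi}}$ and $w_{1-\ubar{\pi}}$ rather than $w_0,w_1$, since the latter can be as large as $1/\ubar{p}$ and need not be bounded by $W$. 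This uniformity is also what licenses your Step 4 bound $(\sqrt\delta+\delta_n+\delta_N)^2+\delta_m$: stratify $\pi\mapsto(\hat\Phi_n(\hat\bdeta_{\balpha^\star};\pi)-t)^2$ first, then apply Minkowski in $L^2(\Pi)$. The paper's pointwise route gives $(\sqrt{\delta+\delta_m}+\delta_n+\delta_N)^2$ instead, which would serve equally well.

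The risk argument uses the same ingredients in the opposite order. The paper tightens first: it takes the population optimum at the level $\tilde\delta=(\sqrt\delta-\delta_n-\delta_N)^2-\delta_m$ and shows, by a reverse-feasibility lemma, that it is feasible for \eqref{pr:socp}. It then pays for the tightening through value continuity of $\overline\nu$, i.e.\ mixing with $\balpha^0$ at the population level. You transfer first: $\balpha^\star$ overshoots the empirical constraint by at most $e=2\sqrt\delta(\delta_n+\delta_N)+(\delta_n+\delta_N)^2+\delta_m$, and you repair this by mixing with $\balpha^0$ inside \eqref{pr:socp}.

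What you should make explicit is that this repair needs slack of $\balpha^0$ in the \emph{empirical} constraint. Apply your transfer bound to $\balpha^0$ and take $n,m,N$ large enough that $\hat\cC_{n,m}(\hat\bdeta_{\balpha^0};t)\le(\delta+\cC_\Pi(\bdeta_{\balpha^0};t))/2$; this is the analogue of the paper's condition $\cC_\Pi(\bdeta_{\balpha^0};t)<\tilde\delta$. Then $\lambda=e/(\delta+e-\hat\cC_{n,m}(\hat\bdeta_{\balpha^0};t))\le 2e/(\delta-\cC_\Pi(\bdeta_{\balpha^0};t))$, and since the $\hat r_k$ lie in $[0,1]$ you get $C=2/(\delta-\cC_\Pi(\bdeta_{\balpha^0};t))$.

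Your constant is cruder than the paper's $C=(\overline{\risk}(\bdeta_{\balpha^0})-\overline\nu(\delta))/(\delta-\cC_\Pi(\bdeta_{\balpha^0};t))$. In exchange, your route avoids a second population minimizer. It also produces the excess in exactly the stated form, whereas the paper's $\delta-\tilde\delta$ actually equals $2\sqrt\delta(\delta_n+\delta_N)-(\delta_n+\delta_N)^2+\delta_m$ and is only upper-bounded by the stated expression.
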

Theorem~\ref{thm:main-rates} shows that the rates in $1$,$2$ and $4$ are within $\tilde\bigO({n^{-\frac{1}{2}}} + {m^{-\frac{3}{2}}} + N^{-\frac{\beta}{2\beta+d}})$. It also shows that our reformulated problem~\eqref{pr:stab} doesn't yield any convergence loss for unfairness, and provides additional guarantee for deployment. The deployment gap is the largest at the extreme points $\pi'\in\{0,1\}$, corresponding to single-group populations. Unlike a classifier trained under a classical fairness constraint (\eg DP,EO) at the training marginal $p_1$, that only controls the unfairness, and does not prevent $\Phi(h;\pi')$ from drifting linearly as $\pi'$ changes, the solution $\hat\bdeta_{\hat\balpha}$ of~\eqref{pr:socp} controls both the \emph{slope} $\gamma(\hat{\bdeta}_{\hat{\balpha}})$ and the
\emph{absolute level} around $t$, yielding a strictly stronger
and distribution-free deployment guarantee. This also highlights the role of $\Pi$ as a design parameter. Since the guarantee improves with $\Var_\Pi (\pi)$, choosing broader perturbation law leads to stronger fairness guarantees. 
\section{Numerical experiments}\label{sec:exp}

\begin{table}[!t]
  \caption{
    Results on the Adult Income and COMPAS datasets
    (mean $\pm$ std over 20 seeds).
    The deploy gap is $\sup_{\pi' \in \{0,1\}} |\Phi(h;\pi') - t|$.
    \textbf{Bold} indicates the best value per column, constraint, and dataset. "*" indicates statistical significance when compared with the second-best result (underlined). Significance is assessed via paired Wilcoxon signed-rank tests. ERM is reported for reference only.
    %but not used in comparisons as it is the only unconstrained approach. % ($^{*}p < .05$, $^{***}p < .001$).
  }
  \label{tab:main_results}
  \centering
  \setlength{\tabcolsep}{4pt}
  \resizebox{\textwidth}{!}{
  \begin{tabular}{llcccccc}
    \toprule
    & & \multicolumn{3}{c}{Adult} & \multicolumn{3}{c}{COMPAS} \\
    \cmidrule(lr){3-5} \cmidrule(lr){6-8}
    Constraint & Model
      & $\unf(h)$ $\downarrow$
      & Deploy gap $\downarrow$
      & Accuracy $\uparrow$
      & $\unf(h)$ $\downarrow$
      & Deploy gap $\downarrow$
      & Accuracy $\uparrow$ \\
    \midrule
    \multirow{4}{*}{DP}
      & ERM
      & $.180 \pm .005$
      & $.122 \pm .003$
      & $.845 \pm .002$
      & $.132 \pm .009$
      & $.079 \pm .006$
      & $.680 \pm .007$ \\
      \cmidrule{3-8}
    & ERM+DP
      & $.036 \pm .006$
      & $\underline{.049 \pm .006}$
      & $\mathbf{.827 \pm .002}$
      & $\underline{.012 \pm .006}$
      & $\underline{.011 \pm .004}$
      & $\underline{.566 \pm .019}$ \\
    & PostProc
      & $\underline{.013 \pm .009}$
      & $.097 \pm .005$
      & $\underline{.823 \pm .004}$
      & $.038 \pm .021$
      & $.044 \pm .022$
      & $.555 \pm .016$ \\
    & Shifty 
    & $.033 \pm .021$
    & $.138 \pm .032$
    & $.802 \pm .010$
    & $.015 \pm .012 $
    & $.472\pm.039$
    & $.510\pm.007$\\
    & \textbf{STABLE}
      & $\mathbf{{.011 \pm .000}}$
      & $\mathbf{.006 \pm .000}^*$
      & $.753 \pm .003$
      & $\mathbf{.012 \pm .001}$
      & $\mathbf{.006 \pm .001^*}$
      & $\mathbf{.680 \pm .009^*}$ \\
    \midrule
    \multirow{4}{*}{EO}
      & ERM
      & $.075 \pm .015$
      & $.037 \pm .007$
      & $.845 \pm .002$
      & $.126 \pm .013$
      & $.069 \pm .007$
      & $.680 \pm .007$ \\
      \cmidrule{3-8}
    & ERM+EO
      & $\underline{.021 \pm .015}$
      & $\underline{.018 \pm .008}$
      & $\underline{.842 \pm .003}$
      & $\mathbf{.015 \pm .007}^*$
      & $\underline{.056 \pm .006}$
      & $\underline{.595 \pm .050}$\\
    & PostProc
      & $.036 \pm .025$
      & $.052 \pm .012$
      & $.841 \pm .002$
      & $.074 \pm .037$
      & $.089 \pm .044$
      & $.525 \pm .017$ \\
     & Shifty 
      & $.043 \pm .024$ 
      & $.081 \pm .053 $
      & $.824 \pm .012$
      & $\underline{.046 \pm .029}$
      & $.363 \pm .085$
      & $.550 \pm .021$\\
    & \textbf{STABLE} 
      & $\mathbf{.013 \pm .009^*}$
      & $\mathbf{.013 \pm .007^*}$
      & $\mathbf{.844 \pm .003}$
      & $.052 \pm .008$
      & $\mathbf{.029 \pm .005^*}$
      & ${\mathbf{.651 \pm .010}}^*$ \\
    \bottomrule
  \end{tabular}
}
\end{table}
\begin{figure}[t]
\centering    \includegraphics[width=\linewidth]{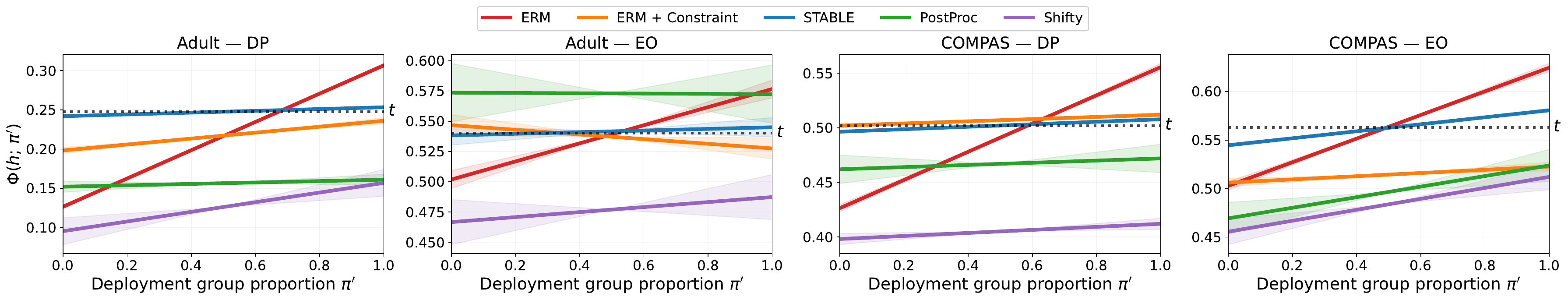}
    \caption{Prediction rate $\Phi(h;\pi')$ under demographic shift on the Adult Income and COMPAS datasets for DP and EO. Shaded regions show $\pm$ std of the curve shape over 20 random seeds.}
    \label{fig:shift-results}
\end{figure}
We evaluate our approach (referred to as STABLE) on two benchmark datasets under both DP and EO constraints\footnote{The code is available at \url{https://anonymous.4open.science/r/stable-fairness-public-7AC7/}.}.
On Adult Income~\citep{adult1996} and COMPAS~\citep{angwin2016machine}, we consider binary classification with gender and race as sensitive attributes, comparing against ERM under constraints \cite{pmlr-v80-agarwal18a}, PostProc \cite{chzhen2025randomizedmulticlassclassificationconstraints} and Shifty \cite{giguere2022fairness}.
Demographic shift is simulated via importance weighting (see Appendix~\ref{app:eval}). We also consider a large-scale deployment scenario using ACS Income~\citep{ding2021retiring}, a survey of the US adult population covering all $50$ states, each with a distinct racial composition. We train on California and evaluate on $11$ held-out states, whose proportion of White residents varies, constituting a real geographic demographic shift rather than a simulated one. %For this second part we only compare with ERM under constraints as it was found to be our best competitor on Compas and Adult.
%In all experiments, we operate in the \emph{unaware} setting: the sensitive attribute $G$ is not available at prediction time.Results are reported as mean $\pm$ standard deviation over 20 random seeds for the three benchmarks and 5 seeds for the ACS scenario. 
The target $t$ is set as the mean prediction of ERM on the training set (DP) or the average group-conditional TPR of ERM (EO). An analysis of hyperparameters and the impact of $t$ values on training is conducted in Appendices~\ref{app:hyperparameter} and~\ref{app:t-study}.

\textbf{Results.} The accuracy-fairness tradeoff varies across settings. Overall, STABLE is highly competitive with the other models (Table~\ref{tab:main_results}). On Adult, it outperforms all baselines under EO, while under DP it suffers a notable loss in accuracy, despite achieving the lowest unfairness and deployment gap. On COMPAS, it outperforms all baselines under DP and, under EO, still achieves the best accuracy and deployment gap, although with higher unfairness. Overall, this is consistent with the theory: the SOCP constraint controls both the slope $\gamma(h)$ and the absolute level of $\Phi(h;\pi')$ around $t$ (Figure~\ref{fig:shift-results}), which is strictly stronger than the classical fairness constraint and shrinks the feasible set accordingly. Moreover, SOCP optimizes a linearized surrogate of the risk, which, despite upper-bounding the true mixture risk, introduces a linearization gap that may occasionally translate into reduced accuracy. 
%Shifty and POST-PROC (to confirm) on COMPAS returns near-trivial models (mean predicted probability close to 0 or 1) and satisfy the DP constraint by predicting almost exclusively the negative (or positive) class. This behavior, documented by \citet{giguere2022fairness} on small datasets, illustrates a fundamental limitation of certification-based approaches under data scarcity.
\begin{figure}[!t]
\centering
\includegraphics[width=0.8\linewidth]{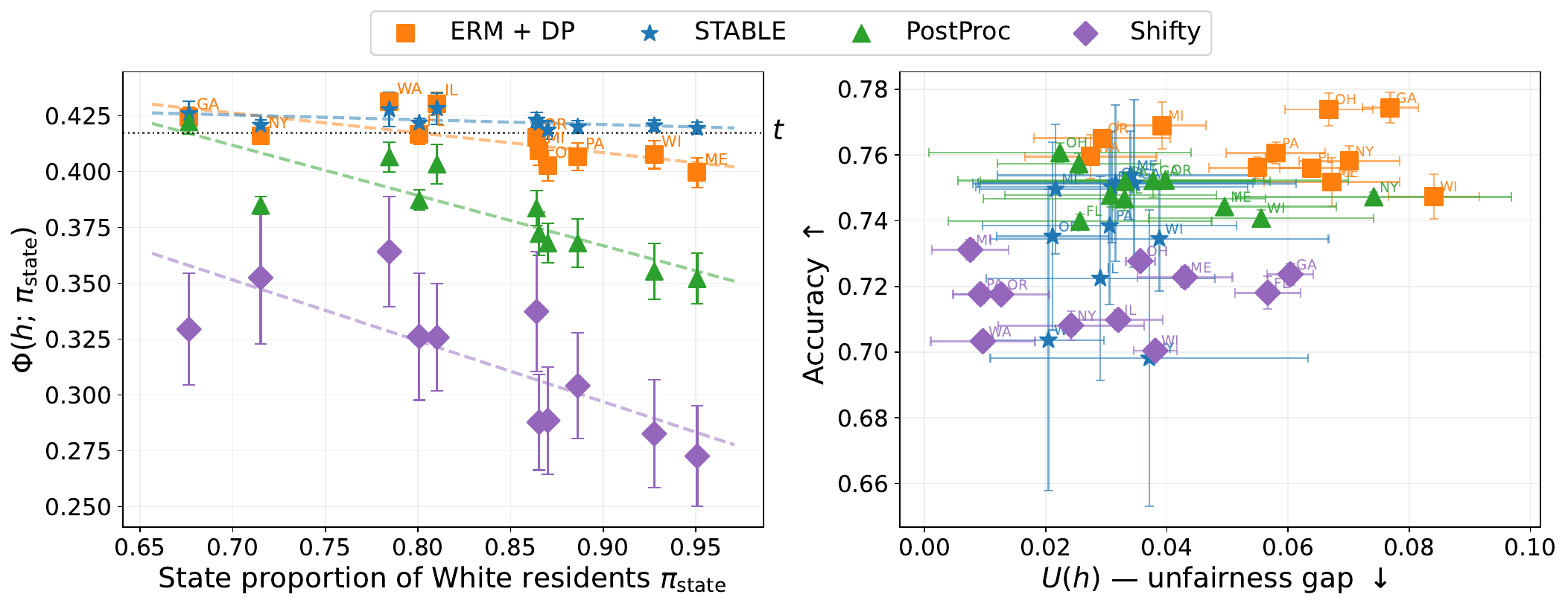}  
\caption{Geographic generalization on ACS Income. (left) Prediction rate $\Phi(h;\pi_{\text{state}})$ as a function of the state proportion of White residents. Dashed lines show linear fits per method; (right) Accuracy vs. DP unfairness gap $\unf(h)$ across held-out states. Each point corresponds to one state.}\label{fig:acs}
\end{figure}

On the ACS geographic experiment, STABLE remains the closest to $t$ across all held-out states (Figure~\ref{fig:acs}). The other baselines, even Shifty that considers demographic shift, introduce a systematic geographic bias: their slope $\pi_{\text{state}} \mapsto \Phi(h;\pi_{\text{state}})$ is negative, causing them to drift below $t$ in high-proportion states. This illustrates that controlling static unfairness does not preclude deployment bias. 
On the accuracy-DP unfairness trade-off, Shifty and STABLE behave similarly, while the others maintain a higher accuracy but at the cost of a larger variance in fairness scores. %Results for EO are deferred to the Appendix

\iffalse
\section{Illustration with Linear Models}

To make the notion of fairness as stability more concrete, we consider
a simple linear predictor. Let $x \in \mathbb{R}^d$ be the input
features and $y \in \mathbb{R}$ the target. We consider linear
regression:

\[
h_\theta(x) = \theta^\top x,
\]

where $\theta \in \mathbb{R}^d$ is learned by minimizing the empirical
risk on a dataset $\mathcal{D} = \{(x_i, y_i)\}_{i=1}^n$:

\[
\hat{\theta} = \arg\min_\theta \frac{1}{n} \sum_{i=1}^n \ell(h_\theta(x_i), y_i),
\]

with $\ell$ a loss function (e.g., squared error).

\subsection{Stability Measure}

We define a stability measure $\Delta$ between predictors:

\[
\Delta(h_\theta, h_{\theta_\pi}) = \frac{1}{n} \sum_{i=1}^n |h_\theta(x_i) - h_{\theta_\pi}(x_i)|^2.
\]

This captures the average change in prediction on the original dataset when the model is retrained on a perturbed dataset.

\subsection{Fairness Analysis of Linear Models}
Let us consider the simple case where $h$ is a linear predictor. 
\fi
\section{Conclusion and perspectives}\label{sec:concl}
We introduced a stability-based perspective on fairness and showed that both individual and group fairness arise from it through appropriate perturbation classes. We then introduced a stronger constraint that jointly controls group fairness and the stability of the prediction rate across perturbations, and showed that when considering a linearized risk, the problem admits a tractable SOCP solution. We further established convergence guarantees from empirical to population quantities and illustrated the practical relevance of the problem through numerical experiments and comparisons to various benchmarks. One limitation is that our method addresses a specific tractable surrogate rather than the original problem in full generality. A natural next step is therefore to derive a more direct solution of the original problem and to establish end-to-end guarantees.
% acknowledgments
\begin{ack}
Acknowledgments
\end{ack}
% bibliography
\newpage
\bibliography{bibliography}
\bibliographystyle{plainnat}
%\section*{References}
% ------------ appendix ----------
\newpage 
\appendix
\section*{Organization of the Supplementary Material}
The Supplementary Material is organized as follows.
\begin{itemize}
    \item Appendix~\ref{app:disc} provides additional literature review and further discussion.
    \item Appendix~\ref{app:proofs} contains the main proofs establishing the equivalence between existing fairness notions and our stability-based definition, together with additional discussion.
    \item Appendix~\ref{app:rates} provides the explicit forms of the importance weights, the proofs of the convergence results for the constraints and the risk, along with some auxiliary lemmas needed in the analysis.
    \item Appendix~\ref{app:eodds} extends the framework to Equalized Odds. We introduce the relevant notions and prove the associated equivalence result with stability.
    \item Appendix~\ref{app:experiments} is dedicated to numerical experiments. We provide additional implementation details and conduct further experiments.
\end{itemize}

\section{Discussions on related works}
\label{app:disc}
Machine learning models are increasingly used in high-stakes decisions, such as hiring, healthcare, and criminal justice. In such settings, biases already present in the data may be inherited or even amplified by the resulting models, with potentially serious consequences for the individuals affected. Algorithmic fairness seeks to understand, quantify, and mitigate these effects. Since this question also involves philosophical and social considerations, it naturally lies at the intersection of several fields; in this paper, however, we focus on its mathematical aspect and do not aim to resolve the broader question of what fairness ought to be.
\subsection{Stability and learning theory}
\label{app:stability}
Many notions of stability have been introduced in the Statistics and Machine-Learning literature to define and quantify the \textit{stability property} of a statistical learning algorithm, meaning roughly that a slight change in the data (distribution) has no significant effect on the decision function produced by the algorithm. Depending on the definition of stability chosen, it is possible to use concentration results to deduce error bounds for decision rules from the stability property satisfied by the algorithm producing them, \textit{e.g.}, ERM with a certain cost function and/or with additive penalization, bagging. This idea is not new at all and dates back at least to the late 70's in Statistics, refer to \cite{10.1214/aos/1176344196}, \cite{10.1109/TIT.1979.1056032}, \cite{1056087} and \cite{10.1162/089976699300016304} for instance. The concept of stability has more recently been revisited in new forms by the machine learning community. See in particular the concept of \textit{uniform stability} in \cite{10.1162/153244302760200704} yielding exponential bounds and the refinements in \cite{10.5555/2073876.2073909}. More generally, this line of research has been adopted by many authors, too numerous to be listed in an exhaustive fashion, dealing with various approaches (\textit{e.g.} bagging in \cite{soloff2024baggingprovidesassumptionfreestability}) and issues (clustering, latent variable analysis, ranking).

\subsection{Comparison with Distributionally Robust Optimization}\label{app:dro}
While our framework bears a resemblance to distributionally robust optimization (DRO), the two approaches differ in both structure and the guarantees they provide, with interesting connections that we now discuss.

\paragraph{DRO reminder.} Given a divergence $d$ (here KL) and radius
$\varepsilon > 0$, define the uncertainty set $\mathcal{U}_\varepsilon = \{\mathcal{D}': \mathrm{KL}(\mathcal{D}' \|
\mathcal{D}) \le \varepsilon\}$. 
DRO solves
\[
  \min_h \sup_{\mathcal{D}'\in\mathcal{U}_\varepsilon}
  \mathbb{E}_{\mathcal{D}'}[\ell(h(\bX),Y)].
\]

Our demographic shifts $\mathcal{D}_\pi$ (Perturbation~\ref{pert:G}) satisfy $\mathcal{D}_\pi \in \mathcal{U}_\varepsilon$ whenever $\varepsilon \ge \mathrm{KL}(\mathcal{D}_\pi \| \mathcal{D})$. Since $\mathcal{D}_\pi$ only resamples $G$ while keeping $P(\bX, Y \mid G)$ fixed, the KL reduces to the KL between two Bernoulli distributions:
\[
  \mathrm{KL}(\mathcal{D}_\pi \| \mathcal{D})
  = \pi \log\frac{\pi}{p_1} + (1-\pi)\log\frac{1-\pi}{1-p_1}.
\]
To cover all deployment proportions $\pi' \in [\bar{\pi},1-\bar{\pi}]$, DRO requires a radius at least
\[
  \varepsilon^*(\bar{\pi})
  = \max_{\pi \in [\bar{\pi}, 1-\bar{\pi}]}
  \left[ \pi \log\frac{\pi}{p_1} + (1-\pi)\log\frac{1-\pi}{1-p_1} \right],
\]
which is attained at one of the endpoints $\pi \in \{\bar{\pi},
1-\bar{\pi}\}$ by convexity of the KL divergence. Since $\mathrm{KL}(\mathrm{Bernoulli}(\pi) \| \mathrm{Bernoulli}(p_1))$ is continuous and strictly positive for $\pi \neq p_1$, we have $\varepsilon^*(\bar{\pi}) > 0$ uniformly in $p_1 \in [\bar{p}, 1-\bar{p}]$; in particular, $\varepsilon^* = \Omega(1)$ whenever $|\bar{\pi} - p_1| \ge c$ for some constant $c > 0$. Table~\ref{tab:kl_radius} illustrates
representative values.

\begin{table}[h]
\centering
\caption{Minimum DRO radius $\varepsilon^*(\bar{\pi})$ required to
cover all demographic shifts $\pi' \in [\bar{\pi}, 1-\bar{\pi}]$,
for varying training group proportion $p_1$ and perturbation lower
bound $\bar{\pi}$.}
\label{tab:kl_radius}
\begin{tabular}{lccc}
\toprule
$p_1 \;\backslash\; \bar{\pi}$ & $0.1$ & $0.2$ & $0.3$ \\
\midrule
$0.2$ & $1.15$ & $0.83$ & $0.58$ \\
$0.3$ & $0.79$ & $0.53$ & $0.34$ \\
$0.5$ & $0.37$ & $0.19$ & $0.08$ \\
\bottomrule
\end{tabular}
\end{table}

At radius $\varepsilon = \varepsilon^* = \Omega(1)$, the uncertainty set
$\mathcal{U}_\varepsilon$ is sufficiently large to contain distributions that arbitrarily corrupt the feature mechanism $P(\bX \mid G)$ and the label mechanism $P(Y \mid G)$. Consequently, the worst-case risk over $\mathcal{U}_\varepsilon$ is controlled by corruptions of $P(\bX|G)$ and $P(Y|G)$ that are unrelated to demographic fairness, hence making the bound uninformative about the performance under such demographic shift. 

%approaches the trivial upper bound of $1$, and the DRO guarantee reduces to
%\[
%  \sup_{\mathcal{D}' \in \mathcal{U}_{\varepsilon^*}}
%  \mathbb{E}_{\mathcal{D}'}[\ell(h(X), Y)] \approx 1,
%\]
%which holds for \emph{any} predictor and conveys no information about
%the actual performance under demographic shift.

\paragraph{Why our framework avoids this problem.}
The uncertainty set $\mathcal{U}_\varepsilon$ contains distributions that alter $P(\bX \mid G)$, $P(Y \mid G)$, and other aspects of the data generating process entirely irrelevant to demographic fairness. Our framework instead restricts to the \emph{one-dimensional
manifold} $\{\mathcal{D}_\pi : \pi \in [\bar{\pi}, 1-\bar{\pi}]\}$, which captures exactly the shifts of interest while preserving $P(\bX, Y \mid G)$. This structural specificity combined with the affine form $\Phi(h;\pi) = \alpha(h) + \pi\gamma(h)$, enables simultaneous control of slope and absolute level via $\mathcal{C}_\Pi(h;t)$. As a result, Theorem~\ref{thm:main-rates} provides guarantees that converge to zero as $n, m \to \infty$ -- something no DRO bound with radius $\varepsilon = O(1)$ can achieve.
\paragraph{Connection to Group DRO and SA-DRO.}
Our framework is more closely related to structured DRO approaches such as Group DRO \citep{Sagawa*2020Distributionally}.  In the binary case, Group DRO minimizes $\max_{\lambda \in [0,1]} \lambda \mathbb{E}_{D_1}[\ell(\cdot)] + (1-\lambda)\mathbb{E}_{D_0}[\ell(\cdot)]$, which is exactly the worst-case risk over our Perturbation~\ref{pert:G} family $\{\mathcal{D}_\pi : \pi \in [0,1]\}$. The two frameworks thus share the same perturbation structure for demographic parity. The differences are twofold. First, the objectives differ: Group DRO minimizes only the worst-case risk, while \textsc{Stable} controls the variance of $\Phi(h;\pi)$ around a target level $t$, which we show to be formally equivalent to fairness constraints. Second, Perturbation~\ref{pert:G-1}, which resamples $G$ within the positive class to capture equal opportunity, has no natural Group DRO counterpart, as Group DRO cannot vary $P(G=1\mid Y=1)$ independently of $P(G=1\mid Y=0)$.

A closely related work is SA-DRO \citep{lei2024on}, which applies DRO directly on the marginal distribution of the sensitive attribute and adds an explicit fairness regularization term. The same structural differences with Group DRO apply here. Additionally, SA-DRO relies on soft regularization rather than a hard constraint, and therefore does not provide quantitative guarantees on the unfairness gap $\unf(h)$ or on the deployment gap.

\section{Proofs for Section~\ref{sec:equiv}}
\label{app:proofs}
\subsection{Individual fairness}
\label{app:proofs:if}
Recall that the Wasserstein $1$-distance between the pushforward of $h(\bX)$'s distribution under $\mathcal{D}$ and that under $\bP(\mathcal{D})$  based on the cost function $d_{\Y}$ is \[\inf_{\bX\sim \mathcal{D},\; \bX'\sim \bP(\mathcal{D})}\mathbb{E}[d_{\Y}(h(\bX),h(\bX')],\] where the infimum is taken over all couplings $(\bX,\bX')$ of the pair of distributions $\mathcal{D}$ and $\bP(\mathcal{D})$.

Proof of Proposition~\ref{prop:indiv-worst-stab}.
\begin{proposition*}[Proposition~\ref{prop:indiv-worst-stab}, IF and Stability]
Let $h:\X\to\Y$ be a prediction function and $d_{\Y}:\Y^2\to [0,\infty)$ a metric on predictions. The following assertions are equivalent.
\begin{enumerate}
    \item The predictive function $h$ satisfies the IF property with $(d_{\Y},d_{\X})$-Lipschitz constant $L<\infty$.
    \item For all $\rho >0$, the predictive function $h$ is $L\rho$-fair w.r.t. to the class $\cP_{\rho}$ of perturbation operators $\bP_{T}$, $T\in \mathcal{T}_{\rho}$, defined on the domain $\mathcal{M}=\{\delta_\bx:\; \bx\in \X\}$ of all point masses and taking $\Delta(h;\; (\mathcal{D},\bP))$ as the Wasserstein $1$-distance between the pushforward of $h(\bX)$'s distribution under $\mathcal{D}$ and that under $\bP(\mathcal{D})$  based on the cost function $d_{\Y}$:
\[
\sup_{(\bx,T)\in \X\times \mathcal{T}_{\rho}}d_{\Y}(h(\bx),h(T(\bx)))\leq L\rho \enspace.
\]
\end{enumerate}
\end{proposition*}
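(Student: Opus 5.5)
The plan is to first reduce the Wasserstein formulation to the pointwise inequality displayed in the statement, and then prove the two implications separately on that pointwise form.

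\emph{Reduction.} On the domain $\mathcal{M}=\{\delta_\bx:\bx\in\X\}$, take $\mathcal{D}=\delta_\bx$ and $\bP=\bP_T$. The pushforward of $\bP_T(\delta_\bx)$ is $\delta_{T(\bx)}$. The distribution of $h(\bX)$ under $\delta_\bx$ is $\delta_{h(\bx)}$, and under $\delta_{T(\bx)}$ it is $\delta_{h(T(\bx))}$. Between two Dirac masses there is only one coupling, the product coupling, so the Wasserstein $1$-distance with cost $d_\Y$ equals $d_\Y(h(\bx),h(T(\bx)))$. Hence $L\rho$-fairness w.r.t.\ $\cP_\rho$ in the sense of Definition~\ref{def:stability} is exactly
\[
\sup_{(\bx,T)\in\X\times\cT_\rho} d_\Y\big(h(\bx),h(T(\bx))\big)\le L\rho .
\]
After this, the statement is a purely metric equivalence.

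\emph{(1)$\Rightarrow$(2).} Fix $\rho>0$, $T\in\cT_\rho$ and $\bx\in\X$. The Lipschitz condition of Definition~\ref{def:IF} together with $d_\X(\bx,T(\bx))\le\rho$ gives
\[
d_\Y\big(h(\bx),h(T(\bx))\big)\le L\,d_\X(\bx,T(\bx))\le L\rho .
\]
Taking the supremum over $(\bx,T)$ gives (2).

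\emph{(2)$\Rightarrow$(1).} Fix $\bx,\bx'\in\X$. If $\bx=\bx'$ the inequality is trivial, so assume $\bx\neq\bx'$ and set $\rho=d_\X(\bx,\bx')>0$. The plan is to build a map $T\in\cT_\rho$ with $T(\bx)=\bx'$, namely
\[
T(\bx)=\bx',\qquad T(\bz)=\bz\ \text{ for } \bz\neq\bx .
\]
This map satisfies $d_\X(\bz,T(\bz))\le\rho$ for every $\bz$. It is measurable because it differs from the identity only on the singleton $\{\bx\}$, which is Borel in a metric space. The preimage of a Borel set $B$ is $B\setminus\{\bx\}$ or $B\cup\{\bx\}$, both Borel. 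Applying (2) with this $\rho$ and $T$ at the point $\bx$ yields $d_\Y(h(\bx),h(\bx'))\le L\,d_\X(\bx,\bx')$, which is (1).

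\emph{Main obstacle.} The only delicate points are checking measurability of the constructed $T$ and noting that (2) must hold for every $\rho>0$, not just one. That universality is what allows the choice $\rho=d_\X(\bx,\bx')$ for an arbitrary pair, and hence recovers the full Lipschitz constant.
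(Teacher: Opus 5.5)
Your proposal is correct and follows the paper's proof essentially verbatim. For (1)$\Rightarrow$(2) you apply the Lipschitz bound together with $d_\X(\bx,T(\bx))\le\rho$, and for (2)$\Rightarrow$(1) you use the map sending $\bx$ to $\bx'$ and fixing everything else, with $\rho=d_\X(\bx,\bx')$. The paper leaves three small points implicit that you make explicit: the reduction of the Wasserstein distance between Dirac masses to $d_\Y(h(\bx),h(T(\bx)))$, the measurability of the constructed map, and the case $\bx=\bx'$, where $\rho=0$ is not an admissible choice in (2).
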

\begin{proof}
Let us show that $(1\Rightarrow2)$. It is easy to notice that
\begin{align*}
    \sup\limits_{(\bx,T)\in \X\times \mathcal{T}_{\rho}}d_\Y(h(\bx),h(T(\bx))) \leq \sup\limits_{(\bx,T)\in \X\times \mathcal{T}_{\rho}} L d_\X(\bx, T(\bx)) \leq L \rho \enspace.
\end{align*}

Now let us show that $(2\Rightarrow1)$. First, let us fix any $\bx,\bx'$, and denote $\rho':=d_\X(\bx,\bx')$. Let us also introduce the following transformation
\[
T_{\bx\to \bx'}(\bz) =
\begin{cases}
  \bx'\enspace, & if \quad \bz=\bx\\
  \bz\enspace, & otherwise \enspace. 
\end{cases}
\]
Choosing $\rho=\rho'$, we consider the set $\cT_{\rho'}$. We notice that $T_{\bx\to \bx'} \in \cT_{\rho'}$ and observe that
\begin{align*}
d_\Y(h(\bx), h(\bx')) 
&= d_\Y(h(\bx), h(T_{\bx\to \bx'}(\bx))) \\
&\leq \sup_{\bx\in\X} d_\Y (h(\bx),h(T_{\bx\to \bx'}(\bx))) \\
&\leq \sup_{T\in \cT_{\rho'}} \sup_{\bx\in\X}d_\Y(h(\bx),h(T(\bx))) \leq L \rho' =  L d_\X (\bx,\bx') \enspace.
\end{align*}
The proof is concluded.
\end{proof}
% \begin{remark}[On distributional form]
% We can translate the formulation in Proposition~\ref{prop:indiv-worst-stab} into its distributional form by considering the new joint distribution produced by a transformation $T \in \cT_\rho$. Recall, that the transformation acts only on $\bX$, leaving $Y$ and $G$ unchanged, thus, resulting in the pushforward $({T_\cD})_\# \cD$ of initial distribution $\cD$, where $T_\cD$ is a map \st $T_\cD(\bx, g, y) \eqdef (T(\bx), g, y)$.We define the family of all distributions resulted from the admissible perturbations as
% \begin{equation}
% \label{eq:D-T-rho}
% \cD_{\cT_\rho} \eqdef \{({T_\cD})_\# \cD : T \in \cT_\rho\} \enspace.
% \end{equation}
% Formulation~\eqref{eq:D-T-rho} recovers the Definition~\ref{def:stability} with  $\cP = \cD_{\cT_\rho}$, $\pi \in \cD_{\cT_\rho}$ and 
% $\Delta(h; \pi) = \sup_{(\bx',g, y) \in \pi} = \Delta(h(T^{-1}(\bx')), h(\bx'))$, resulting in the following problem:
% \[
% \sup_{\cD' \in \cD_{\cT_\rho}} \sup_{(\bx',g, y) \in \cD'} \Delta(h(T^{-1}(\bx')), h(\bx')) \leq L\rho \enspace.
% \]
% \end{remark}
\subsection{Group fairness}
Proof of Proposition~\ref{prop:DP-stab}.
\begin{proposition*}[Proposition~\ref{prop:DP-stab}, DP and Stability] 
Let $h:\X\to\{0,1\}$ be a (possibly randomized) prediction function, $G\in\{0,1\}$ be a sensitive attribute, and denote $r_g (h) \eqdef \Prob(h(\bX)=1\mid G=g), \forall g\in\{0,1\}$. Let $\cD_\pi$ be a perturbed distribution according to Perturbation~\ref{pert:G}. We define the \textbf{global positive} prediction rate under $\cD_\pi$ as
$P_h(\pi) \eqdef \Prob_{\cD_\pi}(h(\bX)=1)$.
Then, the following statements are equivalent:
\begin{enumerate}
  \item $h$ satisfies Demographic Parity: $r_0 (h) = r_1 (h)$.
  \item $P_h(\pi)$ is constant in $\pi$ on $[0,1]$ (i.e., the global positive rate is invariant to any resampling of $G$).
\end{enumerate}
Moreover, one has the explicit identity
\[
P_h(\pi) = r_0(h) + \pi (r_1(h)-r_0(h)) \enspace,
\]
so that
\[
\sup_{\pi,\pi'\in[0,1]} |P_h(\pi)-P_h(\pi')| = |r_1(h)-r_0(h)| \enspace.
\]
\end{proposition*}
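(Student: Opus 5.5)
The plan is to derive the explicit affine identity for $P_h(\pi)$ first; both the equivalence and the supremum formula then follow from it. Fix $\pi\in[0,1]$ and any operator $\bP\in\cP^G_\pi$, and write $\cD_\pi=\bP(\cD)$. Since $h$ is unaware, $h(\bX)$ is a measurable function of $\bX$ alone, possibly with independent internal randomization. I apply the law of total probability under $\cD_\pi$, conditioning on $G$:
\[
P_h(\pi)=\Prob_{\cD_\pi}(G=0)\,\Prob_{\cD_\pi}(h(\bX)=1\mid G=0)+\Prob_{\cD_\pi}(G=1)\,\Prob_{\cD_\pi}(h(\bX)=1\mid G=1).
\]
Condition 1) of Perturbation~\ref{pert:G} says the conditional law of $(\bX,Y)$ given $G$ is unchanged, so the law of $\bX$ given $G=g$ is the same under $\cD$ and $\cD_\pi$. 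Hence $\Prob_{\cD_\pi}(h(\bX)=1\mid G=g)=r_g(h)$. Condition 2) gives $\Prob_{\cD_\pi}(G=1)=\pi$.

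Substituting these into the decomposition gives $P_h(\pi)=(1-\pi)r_0(h)+\pi r_1(h)=r_0(h)+\pi(r_1(h)-r_0(h))$. In particular, $P_h(\pi)$ does not depend on which operator in $\cP^G_\pi$ is chosen, so it is well defined as a function of $\pi$. This is the one genuinely delicate point. At $\pi\in\{0,1\}$, one of the conditionals is taken on a null event. I would handle it by reading condition 1) as prescribing the conditional kernel $\Prob(\bX,Y\mid G=g)$ from $\cD$ for both values of $g$; this is meaningful because $p_g>0$, as in Assumption~\ref{ass:bound-p}. The term with zero weight then vanishes regardless. For randomized $h$, the same computation goes through with $\Exp_{\cD_\pi}[h(\bX)\mid G=g]$ in place of the conditional probability.

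Next, the equivalence. From the identity, $P_h$ is affine in $\pi$ with slope $r_1(h)-r_0(h)$. An affine function on $[0,1]$ is constant if and only if its slope is zero, and zero slope is exactly the DP condition $r_0(h)=r_1(h)$.

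Finally, the supremum. Since $|P_h(\pi)-P_h(\pi')|=|r_1(h)-r_0(h)|\,|\pi-\pi'|$ and $\sup_{\pi,\pi'\in[0,1]}|\pi-\pi'|=1$, attained at $(\pi,\pi')=(1,0)$, the supremum equals $|r_1(h)-r_0(h)|$. This also verifies Claim~\ref{claim:Phi} in this instance, with $\alpha(h)=r_0(h)$ and $\gamma(h)=r_1(h)-r_0(h)$.
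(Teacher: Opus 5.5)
Your proposal is correct and follows essentially the same route as the paper. Both apply the law of total probability under $\cD_\pi$, using the invariant group-conditionals and $\Prob_{\cD_\pi}(G=1)=\pi$, to get the affine identity, then read off the equivalence from the slope $r_1(h)-r_0(h)$ and the supremum from $\sup_{\pi,\pi'\in[0,1]}|\pi-\pi'|=1$. Your extra remarks on well-definedness across operators in $\cP^G_\pi$ and on the null conditioning events at $\pi\in\{0,1\}$ are careful refinements that the paper leaves implicit.
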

\begin{proof}
By the law of total probability, for any $\pi\in[0,1]$,
\[
P_h(\pi) \;=\; \Prob_{\cD_\pi}(h(\bX)=1)
       \;=\; \pi\, r_1(h) + (1-\pi)\, r_0(h) \enspace.
\]
We note that if $r_0(h) = r_1(h)$, then $P_h(\pi) = \pi r_1(h) + (1-\pi) r_1(h) = r_1(h)$, which does not depend on $\pi$, hence (1) $\Rightarrow$ (2).

Secondly, if we suppose that $P_h(\pi)$ is constant in $\pi$, then for all $\pi,\pi'\in[0,1]$, we have
\[
\pi r_1 (h) + (1-\pi) r_0(h) \;=\; \pi' r_1(h) + (1-\pi') r_0(h) \enspace, 
\]
which can be rewritten as 
\[
(\pi-\pi')(r_1(h)-r_0(h))=0 \enspace.
\]
Since we may choose $\pi\neq \pi'$, it follows that $r_1(h)-r_0(h)=0$, i.e.,
$r_0(h)=r_1(h)$, which is exactly Demographic Parity (so (2) $\Rightarrow$ (1)).

This proves the equivalence.
\end{proof}

Proof of Proposition~\ref{prop:EOpp-stab}.
\begin{proposition*}[Proposition~\ref{prop:EOpp-stab}, EO and Stability]
Let $h:\X\to\{0,1\}$ be a (possibly randomized) prediction function, $G\in\{0,1\}$ be a sensitive attribute, and denote $q_{g,y} (h) \eqdef \Prob(h(\bX)=1\mid G=g, Y=y), \forall g, y\in\{0,1\}$ . Let $\cD_\pi$ be a perturbed distribution according to Perturbation~\ref{pert:G-1}. We define the \textbf{true positive} prediction rate under $\cD_\pi$ as
$P_{h\mid Y=1}(\pi) \eqdef \Prob_{\cD_\pi}(h(\bX)=1 \mid Y=1)$. Then the following statements are equivalent:
\begin{enumerate}
    \item h satisfies Equal Opportunity: $q_{0,1}(h) = q_{1,1}(h)$.
    \item $P_{h\mid Y=1}(\pi)$ is constant in $\pi$ on $[0,1]$ (i.e., the true positive rate is invariant to any resampling of $G$ within positive label class).
\end{enumerate}
Moreover, one has the explicit identity
\[
P_{h \mid Y=1}(\pi) = q_{0,1}(h) + \pi (q_{1,1}(h) - q_{0,1}(h)) \enspace,
\]
so that
\[
\sup_{\pi,\pi'\in[0,1]} |P_{h \mid Y=1}(\pi)-P_{h \mid Y=1}(\pi')| = |q_{1,1}(h)-q_{0,1}(h)|\enspace.
\]
\end{proposition*}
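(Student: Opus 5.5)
The proof follows the template of the Demographic Parity case (Proposition~\ref{prop:DP-stab}), with conditioning on $Y=1$ inserted throughout. The first step is the explicit identity. Under Perturbation~\ref{pert:G-1}, the marginal of $Y$ and the conditional law of $\bX$ given $(Y,G)$ are the same under $\cD$ and $\cD_\pi$. In particular $\Prob_{\cD_\pi}(Y=1)=\Prob(Y=1)>0$, so conditioning on $\{Y=1\}$ is well defined under $\cD_\pi$. For a possibly randomized $h$, whose internal randomization does not depend on the distribution, we also get
\[
\Prob_{\cD_\pi}(h(\bX)=1\mid G=g,Y=1)=\Prob(h(\bX)=1\mid G=g,Y=1)=q_{g,1}(h),\qquad g\in\{0,1\}.
\]

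Next, apply the law of total probability under $\cD_\pi(\cdot\mid Y=1)$ with respect to $G$, using $\Prob_{\cD_\pi}(G=1\mid Y=1)=\pi$ and $\Prob_{\cD_\pi}(G=0\mid Y=1)=1-\pi$:
\[
P_{h\mid Y=1}(\pi)=\pi\,q_{1,1}(h)+(1-\pi)\,q_{0,1}(h)=q_{0,1}(h)+\pi\bigl(q_{1,1}(h)-q_{0,1}(h)\bigr).
\]
The only subtlety is the endpoints $\pi\in\{0,1\}$, where one of the conditioning events has probability zero under $\cD_\pi$. There the corresponding term has weight $0$, so the formula still holds provided the conditional law of $\bX$ given $(Y,G)$ is taken from $\cD$ as the perturbation prescribes. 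I expect this measure-theoretic point to be the only real obstacle, and it is handled by that convention.

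Once the affine form is in hand, the equivalence follows as in the DP proof. If $q_{0,1}=q_{1,1}$, the slope vanishes and $P_{h\mid Y=1}$ is constant. Conversely, suppose $P_{h\mid Y=1}$ is constant. Taking $\pi\neq\pi'$ gives $(\pi-\pi')(q_{1,1}-q_{0,1})=0$, which forces equality. Finally, $|P_{h\mid Y=1}(\pi)-P_{h\mid Y=1}(\pi')|=|q_{1,1}-q_{0,1}|\,|\pi-\pi'|$. Since $|\pi-\pi'|\le 1$ on $[0,1]$ with equality at $(\pi,\pi')=(1,0)$, the supremum equals $|q_{1,1}(h)-q_{0,1}(h)|$. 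This instantiates Claim~\ref{claim:Phi} with $\alpha=q_{0,1}$ and $\gamma=q_{1,1}-q_{0,1}$.
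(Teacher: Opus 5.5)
Your proposal is correct and follows the same route as the paper. The law of total probability under $\cD_\pi(\cdot\mid Y=1)$ gives $P_{h\mid Y=1}(\pi)=(1-\pi)q_{0,1}(h)+\pi q_{1,1}(h)$, and the converse follows from $(\pi-\pi')(q_{1,1}(h)-q_{0,1}(h))=0$ for $\pi\neq\pi'$. You go slightly further than the written proof by computing the supremum explicitly (attained at $(\pi,\pi')=(1,0)$) and by treating the endpoints, where in fact no convention is needed because the null-event term is multiplied by zero.
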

\begin{proof}
Let us show that $(1\Rightarrow2)$. Denoting $q := q_{0,1}(h) = q_{1,1}(h)$ and applying the law of total probability, we get
\[
    P_{h \mid Y=1}(\pi) = \Prob_{\cD_\pi}(h(X)=1 \mid Y=1) = (1-\pi) q_{0,1}(h) + \pi q_{1,1}(h) = q \enspace,
\]
which does not depend on $\pi$.
Now let us show that $(2\Rightarrow1)$.
Assuming $P_{h \mid Y=1}(\pi)$ is constant in $\pi$, we can state that
\[
    (1-\pi)q_{0,1}(h) + \pi q_{1,1}(h) = (1-\pi') q_{0,1}(h) + \pi' q_{1,1}(h) \enspace.
\]
Rewriting the above as 
\[(\pi-\pi')(q_{1,1}(h)-q_{0,1}(h))=0 \enspace,\]
we deduce that $q_{1,1}(h)=q_{0,1}(h)$.

The proof is concluded.
\end{proof}

\section{Details and proofs for Section~\ref{sec:methodology}}
\label{app:rates} \subsection{Importance weights representation}
\label{app:weights}
In this section we show that the distribution shifts under our consideration can be described by their corresponding importance-weights, defined by the Radon-Nikodym derivative
\[
w_\pi(\bx,g,y) \eqdef {\dv{\cD_\pi}{\cD}}(\bx,g,y) \enspace.
\]
Recall, that $p_g = \Prob (G = g)\,\, \forall g \in \{0,1\}$ and $\Prob (G = g \mid Y=y)\,\, \forall g,y \in \{0,1\}\enspace.$

For Perturbation~\ref{pert:G},
considering the factorization $\Prob(\bX, G, Y) = \Prob(G)\Prob(\bX,Y \mid G)$, and computing the Radon-Nikodym derivative, we get
\[
w_\pi (g) =
\begin{cases}
  \frac{\pi}{p_1}\enspace, & \text{if } g = 1 \enspace,\\
  \frac{1-\pi}{p_0}\enspace, & \text{if } g = 0 \enspace. 
\end{cases}
\]

For Perturbation~\ref{pert:G-1},
considering the factorization $\Prob(\bX, G, Y) = \Prob(Y)\Prob(G \mid Y)\Prob(\bX \mid Y, G)$, and computing the Radon-Nikodym derivative, we get
\[
w_\pi (g, y) =
\begin{cases}
  \frac{\pi}{p_{1,1}}\enspace, & \text{if } (g,y) = (1,1) \enspace,\\
  \frac{1-\pi}{p_{0,1}}\enspace, & \text{if } (g,y) = (0,1) \enspace, \\
  1\enspace, & \text{if } y = 0 \enspace.
\end{cases}
\]

\subsection{Convergence rates}
\label{app:conv-rates}
First, let us recall the following two crucial lemmas.
\begin{lemma}\citep[Hoeffding inequality]{Vershynin18} Let $\bX_1,\cdots, \bX_N$ be independent random variables such that $\bX_i\in[a_i,b_i]$ for every $i$. Then, for any $t>0$, we have
\[
\Prob\left(\sum_{i=1}^N (\bX_i - \Exp[\bX_i]) \geq t\right) \leq \exp\left(-\frac{2t^2}{\sum_{i=1}^N (b_i - a_i)^2}\right) \enspace.
\]
\end{lemma}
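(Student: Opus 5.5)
The statement just above is the one-sided Hoeffding inequality for independent bounded variables. I will prove it by the standard Chernoff argument together with Hoeffding's lemma on the moment generating function of a bounded, centred variable.

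Write $S=\sum_{i=1}^N(\bX_i-\Exp[\bX_i])$. For any $\lambda>0$, Markov's inequality applied to $e^{\lambda S}$ gives $\Prob(S\geq t)\leq e^{-\lambda t}\Exp[e^{\lambda S}]$. By independence the expectation factorises:
\[
\Exp[e^{\lambda S}]=\prod_{i=1}^N\Exp\big[e^{\lambda(\bX_i-\Exp[\bX_i])}\big].
\]
The key step is Hoeffding's lemma: if $X\in[a,b]$ almost surely, then $\Exp[e^{\lambda(X-\Exp X)}]\leq e^{\lambda^2(b-a)^2/8}$.

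\textbf{Proof of the lemma.} Centre the variable so that $\Exp X=0$ and $a\leq 0\leq b$.
\begin{itemize}
\item By convexity of $x\mapsto e^{\lambda x}$ on $[a,b]$, we have $e^{\lambda x}\leq \frac{b-x}{b-a}e^{\lambda a}+\frac{x-a}{b-a}e^{\lambda b}$.
\item Taking expectations gives $\Exp e^{\lambda X}\leq e^{L(u)}$. Here $u=\lambda(b-a)$, $p=-a/(b-a)\in[0,1]$, and $L(u)=-pu+\log(1-p+pe^{u})$.
\item Direct computation gives $L(0)=L'(0)=0$.
\item The second derivative is $L''(u)=\theta(1-\theta)$ with $\theta=pe^u/(1-p+pe^u)\in[0,1]$, so $L''(u)\leq 1/4$.
\item Taylor's theorem with Lagrange remainder then yields $L(u)\leq u^2/8$.
\end{itemize}
This bound on $L''$ is the only genuinely non-routine computation, and I expect it to be the main obstacle.

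Plugging the lemma into the product gives
\[
\Prob(S\geq t)\leq \exp\Big(-\lambda t+\frac{\lambda^2}{8}\sum_i(b_i-a_i)^2\Big).
\]
The exponent is a quadratic in $\lambda$. It is minimised at $\lambda^\star=4t/\sum_i(b_i-a_i)^2>0$, which gives exactly $\exp\big(-2t^2/\sum_i(b_i-a_i)^2\big)$.

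If some $b_i=a_i$, the corresponding term is a.s. constant and simply drops out. If all of them are degenerate, then $S=0$ a.s., so $\Prob(S\geq t)=0$ for $t>0$ and the bound holds trivially.
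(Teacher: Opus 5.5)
Your proof is correct. The Chernoff bound, the factorisation by independence, Hoeffding's lemma via the chord bound and $L''(u)=\theta(1-\theta)\le 1/4$, and the optimisation at $\lambda^\star=4t/\sum_i(b_i-a_i)^2$ together give exactly the stated constant $2$, and you also handle the degenerate intervals. The paper gives no proof of its own, simply importing the inequality from Vershynin as a black box, and your argument is the standard one behind that citation.
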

\begin{lemma}\citep[Bernstein inequality]{Vershynin18}
Let $\bX_1,\cdots, \bX_N$ be independent, mean-zero random variables satisfying $|\bX_i|\leq K$ for every $i$. Let $\sigma^2 = \sum_{i=1}^N \Exp \bX_i^2$ is the variance of the sum. Then, for any $t>0$, we have
\[
\Prob\left( \left| \sum_{i=1}^N \bX_i \right|\geq t\right) \leq 2\exp\left(-\frac{\nicefrac{t^2}{2}}{\sigma^2 + \nicefrac{Kt}{3}}\right) \enspace.
\]
\end{lemma}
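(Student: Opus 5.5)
We prove this classical result by the Cramér--Chernoff method: bound the moment generating function of each summand, multiply using independence, apply Markov's inequality, optimize over the free parameter, and finally handle both tails. Set $S=\sum_{i=1}^N \bX_i$ and fix $\lambda\in(0,3/K)$.

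\textbf{Step 1: MGF bound for one summand.} Expand $e^{\lambda \bX_i}=1+\lambda \bX_i+\sum_{k\ge 2}\lambda^k \bX_i^k/k!$ and take expectations. The linear term vanishes because $\Exp[\bX_i]=0$. For $k\ge 2$, boundedness gives $\Exp|\bX_i|^k\le K^{k-2}\Exp[\bX_i^2]$. Combined with $k!\ge 2\cdot 3^{k-2}$, this yields
\[
\Exp[e^{\lambda \bX_i}]\le 1+\frac{\lambda^2\Exp[\bX_i^2]}{2}\sum_{k\ge 2}\Big(\frac{\lambda K}{3}\Big)^{k-2}=1+\frac{\lambda^2\Exp[\bX_i^2]}{2(1-\lambda K/3)} .
\]
Using $1+u\le e^u$, we get
\[
\Exp[e^{\lambda \bX_i}]\le \exp\Big(\frac{\lambda^2\Exp[\bX_i^2]}{2(1-\lambda K/3)}\Big).
\]

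\textbf{Step 2: Chernoff bound.} By Markov's inequality applied to $e^{\lambda S}$, and independence of the $\bX_i$,
\[
\Prob(S\ge t)\le e^{-\lambda t}\prod_{i=1}^N\Exp[e^{\lambda \bX_i}]\le \exp\Big(-\lambda t+\frac{\lambda^2\sigma^2}{2(1-\lambda K/3)}\Big).
\]

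\textbf{Step 3: Choice of $\lambda$.} Take $\lambda=t/(\sigma^2+Kt/3)$. This choice is admissible because $\lambda K/3<1$. It gives $1-\lambda K/3=\sigma^2/(\sigma^2+Kt/3)$, so the quadratic term equals $\tfrac{t^2}{2(\sigma^2+Kt/3)}$. The exponent then collapses to $-\tfrac{t^2/2}{\sigma^2+Kt/3}$.

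\textbf{Step 4: Two-sided bound.} The variables $-\bX_i$ satisfy the same hypotheses: they are independent, mean zero, bounded by $K$, and have the same $\sigma^2$. Applying Steps 1--3 to them bounds $\Prob(S\le -t)$ by the same quantity. A union bound over the two tails produces the factor $2$ in the statement.

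The only delicate point is Step 1, namely the moment comparison together with the factorial bound $k!\ge 2\cdot 3^{k-2}$. This bound is checked by induction on $k$: it holds with equality at $k=2$, and passing from $k$ to $k+1$ multiplies the left side by $k+1\ge 3$ and the right side by exactly $3$. The remaining steps are routine algebra.
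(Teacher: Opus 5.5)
The paper gives no proof of this lemma; it recalls it from Vershynin's book without proof, so there is no in-paper argument to compare against. Your Cram\'er--Chernoff proof is the standard proof of exactly this form, and it is correct. It bounds the MGF through $\mathbb{E}|X_i|^k\le K^{k-2}\mathbb{E}X_i^2$ and $k!\ge 2\cdot 3^{k-2}$, applies Chernoff, chooses $\lambda=t/(\sigma^2+Kt/3)$, and treats the lower tail by symmetry.

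The only loose end is the degenerate case $\sigma^2=0$. There your $\lambda$ equals $3/K$, which is outside the admissible range. In that case every $X_i=0$ almost surely, so the left-hand side vanishes for $t>0$ and the bound holds trivially.
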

We now examine the different sources of error arising from the estimation of population-level quantities.
\subsubsection{Regression error rates}
\label{app:reg-rates}
Recall the following expressions
\[
\bdeta_{\balpha^\star} = \sum_{k\in[K]} \balpha^\star_k {\eta}_k\enspace,\qquad \bdeta_{\hat\balpha} = \sum_{k\in[K]} \hat\balpha_k \eta_k \enspace, \qquad \hat{\bdeta}_{\balpha^\star} = \sum_{k\in[K]} \balpha^\star_k \hat{\eta}_k\enspace,\qquad \hat{\bdeta}_{\hat\balpha} = \sum_{k\in[K]} \hat\balpha_k \hat{\eta}_k \enspace.
\]

\begin{lemma}
\label{lem:rates:reg} Let Assumption~\ref{ass:reg-rate} and Claims~\ref{claim:phi-bd}--\ref{claim:bound-w} hold. Then, for all for every
$\balpha \in \Delta_K$ and all $\pi \in [\ubar{\pi}, 1-\ubar{\pi}]$, it holds that
\[
\Big| \Phi(\hat{\bdeta}_\balpha; \pi) -\Phi(\bdeta_\balpha; \pi) \Big| \leq BW \rho_N \eqdef \delta_N \enspace.
\]
\end{lemma}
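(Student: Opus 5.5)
The plan is to reduce the claimed bound to a chain of three elementary inequalities: linearity, the importance-weight representation together with Claim~\ref{claim:bound-w}, and Claim~\ref{claim:phi-bd} combined with the mixture rate.

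First, write the perturbed functional through importance weights: $\Phi(h;\pi)=\Exp[w_\pi(\bZ)\phi_h(\bZ)]$. This holds for any $\pi\in[\ubar{\pi},1-\ubar{\pi}]$ under Assumptions~\ref{ass:bound-p}--\ref{ass:bound-pi}, which guarantee $\cD_\pi\ll\cD$.

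Second, use that $\phi_h$ is linear in $h$ (Definition~\ref{def:Phi}). Then
\[
\Phi(\hat{\bdeta}_\balpha;\pi)-\Phi(\bdeta_\balpha;\pi)=\Exp\big[w_\pi(\bZ)\,\phi_{f}(\bZ)\big],\qquad f\eqdef\hat{\bdeta}_\balpha-\bdeta_\balpha .
\]
Here $\phi_f$ is understood as the linear extension of the statistic to signed functions. For DP, $\phi_f=f$; for EO, $\phi_f=f\,\ind{Y=1}/\Prob(Y=1)$.

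Third, take absolute values and apply Claim~\ref{claim:bound-w}, using $0<w_\pi<W$:
\[
\big|\Exp[w_\pi\phi_f]\big|\le W\,\Exp|\phi_f(\bZ)|.
\]
Next, apply Claim~\ref{claim:phi-bd}, read in its absolute-value form $\Exp|\phi_f(\bZ)|\le B\,\Exp|f(\bX)|$, to get the bound $BW\,\Exp|\hat{\bdeta}_\balpha-\bdeta_\balpha|$.

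Finally, bound this mixture error. By the triangle inequality and $\balpha\in\Delta_K$,
\[
\Exp|\hat{\bdeta}_\balpha-\bdeta_\balpha|\le\sum_k\alpha_k\Exp|\hat\eta_k-\eta_k|\le\max_k\Exp|\hat\eta_k-\eta_k|\le\rho_N
\]
by Assumption~\ref{ass:reg-rate}. This is exactly the mixture-rate lemma, so the result is $BW\rho_N=\delta_N$, uniformly in $\balpha$ and $\pi$.

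The main obstacle is the second and third steps. Claim~\ref{claim:phi-bd} is stated as a one-sided bound $\Exp[\phi_f]\le B\Exp|f|$. To control an absolute value after inserting the weight, one needs the pointwise form $|\phi_f(\bZ)|\le B|f(\bX)|$, or else applies the one-sided bound to both $f$ and $-f$. The pointwise form holds in the running examples (DP with $B=1$; EO with $B=1/\Prob(Y=1)$). I would state that Claim~\ref{claim:phi-bd} is used in this pointwise sense, which lets the weight $w_\pi\le W$ be pulled out inside the expectation.

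A minor point is the expectation convention. Assumption~\ref{ass:reg-rate} uses $\Exp$ over $\bX\sim\cD$ with $\hat\eta_k$ held fixed, consistent with conditioning on $\cD^N$ later. Nonnegativity of the weights is what makes the absolute-value step valid.
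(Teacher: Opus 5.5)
Your proposal is correct and follows essentially the same route as the paper: the importance-weight representation, linearity of $\phi_h$, the bound $w_\pi\le W$ via H\"older, Claim~\ref{claim:phi-bd}, and Assumption~\ref{ass:reg-rate}. The only difference in order is that the paper first reduces to $\max_k|\Phi(\hat\eta_k;\pi)-\Phi(\eta_k;\pi)|$ and bounds each component, whereas you bound the mixture difference directly and then apply the mixture-rate bound. Your pointwise reading $|\phi_f(\bZ)|\le B|f(\bX)|$ is a worthwhile clarification: the paper's H\"older step passes from $|\Exp[w_\pi\phi_f]|$ to $W\,\Exp[\phi_f]$ without the absolute value, which the one-sided Claim~\ref{claim:phi-bd} does not supply on its own.
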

\begin{proof}
By the linearity of $\phi_h$ in $h$ and the definition of the mixture, for any $\balpha\in\Delta_K$, we have
\begin{align*}
| \Phi(\hat{\bdeta}_\balpha; \pi) -\Phi(\bdeta_\balpha; \pi) | &= \Big| \sum_{k=1}^K \alpha_k\, \Big[\Phi(\hat{\eta}_k; \pi) - \Phi(\eta_k; \pi)\Big]  \Big| \leq \max_{1 \leq k \leq K} \Big|\Phi(\hat{\eta}_k; \pi) - \Phi(\eta_k; \pi)\Big| \enspace.
\end{align*}
For each $k$ and $\pi$, from Hölder's inequality and Claims~\ref{claim:phi-bd}--\ref{claim:bound-w}, we obtain
\begin{align*}
|\Phi(\hat{\eta}_k; \pi) - \Phi(\eta_k; \pi)|
&= |\Exp[w_\pi(\bZ) \phi_{\hat{\eta}_k - \eta_k}(\bZ)]|
\leq \norm{w_\pi(\bZ)}_{L^\infty} \Exp[\phi_{\hat{\eta}_k - \eta_k}(\bZ)] \\ 
&\leq WB \Exp[|\hat{\eta}(\bZ) - \eta(\bZ)|] \enspace.
\end{align*}
Assumption~\ref{ass:reg-rate} concludes the proof.
\end{proof}

\subsubsection{Perturbation error rates}
\label{app:rates:pert}
\begin{lemma}[Stratified sampling]
\label{lem:rates:pert} Let $\Pi=U(\ubar{\pi},1-\ubar{\pi})$ and $\cI_1,\cdots,\cI_m$ be a uniform partition of $[\ubar{\pi},1-\ubar{\pi}]$ into $m$ intervals of length $|\cI| = \frac{(1-2\ubar{\pi})}{m}$. We sample $\pi'_j \sim U(\cI_j)$ independently for $j=1,\cdots,m$. Let $f:[\ubar{\pi},1-\ubar{\pi}]\to\bbR$ be a $L-$Lipschitz function. For all $a_{\delta_m}\in(0,1)$ with probability greater than $1-a_{\delta_m}$, it holds that
\[
\Big|\frac{1}{m}\sum_{j=1}^m f(\pi_j') - \Exp_\Pi[f(\pi)]\Big| \leq \frac{L(1-2\ubar{\pi})}{m^{\nicefrac{3}{2}}} \sqrt{2\log(\nicefrac{2}{a_{\delta_m}})} \eqdef \delta_{L,m} \enspace.
\]
% \[
% \Exp_{\Pi^m}\Big[
%     \Big|\frac{1}{m}\sum_{j=1}^m f(\pi_j') - \Exp_\Pi[f(\pi)]\Big|\Big] \leq \frac{L(1-2\ubar{\pi})}{m^{\nicefrac{3}{2}}} \eqdef \delta_m \enspace,
% \]
% where $\Exp_{\Pi^m}$ is the expectation over $\pi'_1,\cdots,\pi'_m.$
\end{lemma}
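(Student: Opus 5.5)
The approach is to decompose the stratified estimator into a sum of independent, centered, bounded terms and apply Hoeffding's inequality. Since $\Pi=U(\ubar{\pi},1-\ubar{\pi})$ and the intervals $\cI_1,\cdots,\cI_m$ all have equal length, $\Pi$ is the equal-weight mixture of the uniform laws $U(\cI_j)$, so that
\[
\Exp_\Pi[f(\pi)] = \frac{1}{m}\sum_{j=1}^m \Exp_{\pi\sim U(\cI_j)}[f(\pi)] \eqdef \frac{1}{m}\sum_{j=1}^m \mu_j \enspace.
\]
Hence the deviation becomes $\frac{1}{m}\sum_{j=1}^m (f(\pi'_j)-\mu_j)$, and the summands are independent and mean zero by construction of the stratified sampling.

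The key step is the range bound on each summand, which is where stratification gains over plain Monte Carlo. Because $\pi'_j\in\cI_j$ and $f$ is $L$-Lipschitz, the random variable $f(\pi'_j)$ takes values in an interval $[a_j,b_j]$ with $a_j=\inf_{\cI_j}f$ and $b_j=\sup_{\cI_j}f$. Its width satisfies
\[
b_j-a_j\le L|\cI|=\frac{L(1-2\ubar{\pi})}{m} \enspace.
\]
This width is of order $1/m$, rather than the width of the whole support that plain sampling would give.

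We then apply the Hoeffding inequality recalled above to $S=\sum_j f(\pi'_j)$, on both tails with a union bound. This gives
\[
\Prob\big(|S-\Exp S|\ge s\big)\le 2\exp\!\Big(-\frac{2s^2}{m\,L^2(1-2\ubar{\pi})^2/m^2}\Big)=2\exp\!\Big(-\frac{2ms^2}{L^2(1-2\ubar{\pi})^2}\Big)\enspace.
\]
Setting the right-hand side equal to $a_{\delta_m}$ yields $s=L(1-2\ubar{\pi})\sqrt{\log(2/a_{\delta_m})/(2m)}$. Dividing by $m$ gives a deviation of order $L(1-2\ubar{\pi})m^{-3/2}\sqrt{\log(2/a_{\delta_m})/2}$.

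This is in fact smaller than the claimed $\delta_{L,m}$, which carries the factor $\sqrt{2\log(2/a_{\delta_m})}$; the stated constant is therefore a valid, slightly looser bound. Alternatively, one can recover exactly the factor $\sqrt{2\log(\cdot)}$ by applying the Bernstein inequality (or the sub-Gaussian form of Hoeffding) with $|f(\pi'_j)-\mu_j|\le L|\cI|$ and using the crude variance bound. The argument has no genuine obstacle. The only points needing care are the identity expressing $\Exp_\Pi$ as the average of the stratum means, which relies on the equal stratum lengths, and tracking the constants in the final bound.
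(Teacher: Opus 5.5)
Your proof is correct and is essentially the paper's argument: you write $\Exp_\Pi[f(\pi)]$ as the average of the stratum means, bound each intra-stratum deviation via the Lipschitz property, and apply Hoeffding's inequality. The only difference is the constant. You use the true range width $L|\cI|$ of $f(\pi'_j)$, whereas the paper bounds $|\xi_j|\le L|\cI|$ (range width $2L|\cI|$), so your bound is exactly $\delta_{L,m}/2$. The paper's constant is precisely what your sub-Gaussian-Hoeffding alternative produces; a Bernstein route would not reproduce it, since it adds a term linear in $\log(2/a_{\delta_m})$.
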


\begin{proof}
Denote the stratum mean by $\mu_j \eqdef \frac{1}{|\cI|}\int_{\cI_j} f(\pi)\, \d\pi$ and the intra-stratum deviation by $\xi_j \eqdef f(\pi_j') - \mu_j$. Notice, that 
$
\frac{1}{m}\sum_{j=1}^m \mu_j
  = \frac{1}{m}\sum_j \frac{1}{|\cI|}\int_{\cI_j} f(\pi)\, \d\pi
  = \frac{1}{1-2\ubar{\pi}}\int_{\ubar{\pi}}^{1-\ubar{\pi}} f(\pi)\, \d\pi
  = \Exp_\Pi[f(\pi)] \enspace,
$
therefore $\frac{1}{m}\sum_j f(\pi_j') - \Exp_\Pi[f(\pi)] = \frac{1}{m}\sum_{j=1}^m \xi_j$. Moreover, for $\pi \in \cI_j$,
  $|f(\pi) - \mu_j| \leq \sup_{\pi, \pi' \in \cI_j} |f(\pi) - f(\pi')|
  \leq L |\cI|$, so $|\xi_j| \leq L |\cI|$.
Combining this with the facts that $\Exp[\xi_j] = 0$ for each $j$, that $\xi_j$ are independent, and applying Hoeffding's inequality, we obtain 
\[
\Prob\Big(\Big|\frac{1}{m}\sum_j \xi_j\Big| > \epsilon \Big) \leq 2 \exp \Big(-\frac{m \epsilon^2}{2 L^2 |\cI|^2}\Big) = 2 \exp \Big(-\frac{m^3 \epsilon^2}{2 L^2 (1-2\ubar{\pi})^2}\Big) \enspace.
\]
Setting $a_{\delta_m} = 2 \exp (-\frac{m^3 \epsilon^2}{2 L^2 (1-2\ubar{\pi})^2})$ and obtaining $\epsilon = \frac{\sqrt{2}L(1-2 \ubar{\pi})}{m^{\nicefrac{3}{2}}} \sqrt{\log(\nicefrac{2}{a_{\delta_m}})}$ concludes the proof.
\end{proof}

\subsubsection{Sample generalization rates}
\label{app:sample-rates}
\begin{lemma}
\label{lem:rates:sample}
For all $a_{\delta_n}\in(0,1)$ with probability greater than $1-a_{\delta_n}$ and conditionally on $\cD^N$,  with probability greater than $1-a_{\delta_n}$ it holds that
\[
\max_{k\in[K],j\in[m]}
\Big|\hat{\Phi}_n(\hat{\eta}_k; \pi_j') - \Phi(\hat{\eta}_k; \pi_j')\Big| \leq \sqrt{\frac{2W^2 \log(\nicefrac{2Km}{a_{\delta_n}})}{n}} + \frac{2W\log(\nicefrac{2Km}{a_{\delta_n}})}{3n} \eqdef \delta_n \enspace.
\]
Moreover, for all $\balpha \in \Delta_K$ and $j\in[m]$, it holds that 
\[
\Big|\hat{\Phi}_n(\hat{\bdeta}_\balpha; \pi_j') - \Phi(\hat{\bdeta}_\balpha; \pi_j')\Big| \leq \delta_n \enspace.
\]

\end{lemma}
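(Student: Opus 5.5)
We condition on $\cD^N$ throughout, so the estimators $\hat\eta_1,\dots,\hat\eta_K$ are deterministic functions. We also condition on $\Pi^m$: the perturbation parameters are drawn independently of $\cD^n$, so $\pi'_1,\dots,\pi'_m$ may be treated as fixed. For a fixed pair $(k,j)$, write
\[
\hat\Phi_n(\hat\eta_k;\pi'_j)-\Phi(\hat\eta_k;\pi'_j)=\tfrac1n\sum_{i=1}^n \xi_i ,
\qquad
\xi_i \eqdef w_{\pi'_j}(\bZ_i)\phi_{\hat\eta_k}(\bZ_i)-\Exp[w_{\pi'_j}(\bZ)\phi_{\hat\eta_k}(\bZ)] .
\]
This uses the importance-weight representation $\Phi(h;\pi)=\Exp[w_\pi(\bZ)\phi_h(\bZ)]$.

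The $\xi_i$ are i.i.d. with mean zero. Since $\phi_h\in[0,1]$ (Definition~\ref{def:Phi}) and $0<w_\pi<W$ (Claim~\ref{claim:bound-w}), each term $w\phi$ lies in $[0,W]$. Hence $|\xi_i|\le W$ and $\Exp\xi_i^2\le W^2$.

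Apply Bernstein's inequality to $\sum_i\xi_i$ with $K=W$ and $\sigma^2\le nW^2$, at deviation level $n\epsilon$:
\[
\Prob\Big(\Big|\tfrac1n\textstyle\sum_i\xi_i\Big|\ge\epsilon\Big)\le 2\exp\!\Big(-\frac{n\epsilon^2/2}{W^2+W\epsilon/3}\Big).
\]
Set the right-hand side equal to $a_{\delta_n}/(Km)$ and write $\ell\eqdef\log(2Km/a_{\delta_n})$. This gives the quadratic
\[
\epsilon^2-\tfrac{2W\ell}{3n}\epsilon-\tfrac{2W^2\ell}{n}=0 .
\]
By $\sqrt{a+b}\le\sqrt a+\sqrt b$, its positive root is at most
\[
\sqrt{2W^2\ell/n}+2W\ell/(3n)=\delta_n .
\]
A union bound over the $Km$ pairs $(k,j)$ then yields the first display with probability at least $1-a_{\delta_n}$. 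Because the bound does not depend on $\Pi^m$, it also holds unconditionally in $\Pi^m$.

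For the second statement, $\phi_h$ is linear in $h$ and $w_\pi\phi_h$ is therefore linear as well. So both $\hat\Phi_n(\cdot;\pi'_j)$ and $\Phi(\cdot;\pi'_j)$ are linear in $h$. Since $\hat\bdeta_\balpha=\sum_k\alpha_k\hat\eta_k$, on the same event we get
\[
|\hat\Phi_n(\hat\bdeta_\balpha;\pi'_j)-\Phi(\hat\bdeta_\balpha;\pi'_j)|
=\Big|\textstyle\sum_k\alpha_k\big(\hat\Phi_n(\hat\eta_k;\pi'_j)-\Phi(\hat\eta_k;\pi'_j)\big)\Big|
\le\textstyle\sum_k\alpha_k\delta_n=\delta_n .
\]
This holds uniformly over $\balpha\in\Delta_K$ and $j\in[m]$ with no further union bound, because the simplex weights are nonnegative and sum to one.

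The main point of care is the conditioning. We must ensure that $\hat\eta_k$ is independent of $\cD^n$; this holds because $\cD^N$ is a separate sample that is conditioned on. We must also ensure that the $\pi'_j$ are independent of $\cD^n$, so that Bernstein's inequality applies with fixed summands. The algebra of inverting Bernstein's bound is routine.
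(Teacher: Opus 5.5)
Your proof is correct and follows the paper's argument. It applies Bernstein's inequality to the bounded weighted summands $w_{\pi'_j}(\bZ_i)\phi_{\hat\eta_k}(\bZ_i)\in[0,W]$ at confidence level $a_{\delta_n}/(Km)$, takes a union bound over the $Km$ pairs $(k,j)$, and finishes with the convex-combination triangle inequality for the mixture. Your write-up is slightly more careful than the paper's in two places: you bound the positive root of the quadratic by $\delta_n$ instead of claiming it equals $\delta_n$, and you justify treating the $\pi'_j$ as fixed by their independence from the evaluation sample.
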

\begin{proof}
Let us fix $k\in[K]$ and $j\in[m]$. The function $\hat{\eta}_k$ is fixed conditionally on sample $\cD^N$. Let us define $\xi_i \eqdef w_{\pi'_j}(\bZ_i)\phi_{\hat{\eta}_k}(\bZ_i)$, for $\bZ_i \simiid \cD^n$. We have $\Exp[\xi_i] = \Phi(\hat{\eta}_k; \pi_j')$, $|\xi_i|\leq W$, $|\xi_i-\Exp[\xi_i]|\leq W$ and $\Var(\xi_i)\leq\Exp[\xi_i^2]\leq W^2$. Applying Bernstein's inequality, for any $\epsilon>0$ it holds that
\[
\Prob_{\cD^n}\Big(\Big|\frac{1}{n}\sum_i \xi_i\Big| > \epsilon \enspace|\enspace \cD^N \Big) \leq 2\exp\Big(-\frac{n\epsilon^2}{2W^2 + \frac{2}{3}W\epsilon}\Big) \enspace.
\]
Setting $\frac{a_{\delta_n}}{Km} = 2\exp(-\frac{n\epsilon^2}{2W^2 + \frac{2}{3}W\epsilon})$, obtaining $\epsilon = \sqrt{\frac{2W^2 \log(\nicefrac{2Km}{a_{\delta_n}})}{n}} + \frac{2W\log(\nicefrac{2Km}{a_{\delta_n}})}{3n}$, and applying union bound over all pairings$(k,j)\in[K]\times[m]$ concludes the proof of the first inequality.

Moreover, using the triangle inequality and the fact that $\alpha_k \geq 0$, $\sum_k \alpha_k = 1$, we obtain
\begin{align*}
\Big|\hat{\Phi}_n(\hat{\bdeta}_\balpha; \pi_j') - \Phi(\hat{\bdeta}_\balpha; \pi_j')\Big|
&= \Big|\sum_{k=1}^K \alpha_k
(\hat{\Phi}_n(\hat{\eta}_k;\, \pi_j') - \Phi(\hat{\eta}_k; \pi_j'))\Big| \\
&\leq \sum_{k=1}^K \alpha_k
\Big|\hat{\Phi}_n(\hat{\eta}_k; \pi_j') - \Phi(\hat{\eta}_k; \pi_j')\Big| \\
&\leq \max_{k} |\hat{\Phi}_n(\hat{\eta}_k; \pi_j') - \Phi(\hat{\eta}_k; \pi_j')| \leq \delta_n.
\end{align*}
The proof is concluded.
\end{proof}

\subsubsection{Final constraint rates}
Formal version and the proof of the first statement of Theorem~\ref{thm:main-rates}.
\begin{theorem}[Constraint rates]
\label{thm:rates:constraint}   
Let $\hat{\balpha}$ be a simplex vector, \st $\hat{\cC}_{n,m}({\hat\bdeta}_{\hat\balpha};t) \leq \delta$. For all $a_{\delta_m}, a_{\delta_n} \in (0,1)$ and conditionally on $\cD^N$, with probability greater than $1-a_{\delta_m}-a_{\delta_n}$ it holds that
\[
    \cC_\Pi(\hat{\bdeta}_{\hat{\balpha}}; t)
    \leq \Big(\sqrt{\delta} + \delta_n \Big)^2 + \delta_m \enspace,
\]
where 
\[
\delta_m \eqdef \frac{2(2-\ubar{\pi})(1-2\ubar{\pi})}{m^{\nicefrac{3}{2}}}
  \sqrt{2\log(\nicefrac{4}{a_{\delta_m}})} \quad\text{and}\quad \delta_n = \sqrt{\frac{2W^2 \log(\nicefrac{2Km}{a_{\delta_n}})}{n}} + \frac{2W\log(\nicefrac{2Km}{a_{\delta_n}})}{3n} \enspace.
\]
\end{theorem}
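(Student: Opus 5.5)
We split the error into an \emph{empirical-approximation} part (replacing $\hat\Phi_n$ by $\Phi$ at the sampled points $\pi'_j$) and a \emph{perturbation-sampling} part (replacing the average over $\pi'_1,\dots,\pi'_m$ by $\Exp_{\pi\sim\Pi}$). Throughout, $h\eqdef\hat\bdeta_{\hat\balpha}$, which is fixed conditionally on $\cD^N$. We introduce the intermediate quantity
\[
\cC_m(h;t)\eqdef\frac1m\sum_{j=1}^m\big(\Phi(h;\pi'_j)-t\big)^2 ,
\]
and bound $\cC_\Pi(h;t)\le \cC_m(h;t)+\big|\cC_\Pi(h;t)-\cC_m(h;t)\big|$. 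The first term is controlled by Lemma~\ref{lem:rates:sample} and the second by Lemma~\ref{lem:rates:pert}. A union bound over the two failure events then gives probability at least $1-a_{\delta_m}-a_{\delta_n}$.

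\textbf{Empirical step.} Let $\hat\bu\in\bbR^m$ have entries $\hat\Phi_n(h;\pi'_j)-t$ and $\bu$ have entries $\Phi(h;\pi'_j)-t$. Then $\hat\cC_{n,m}=\frac1m\|\hat\bu\|_2^2$ and $\cC_m=\frac1m\|\bu\|_2^2$. By the second part of Lemma~\ref{lem:rates:sample}, applied with $\balpha=\hat\balpha$, we have $\|\bu-\hat\bu\|_\infty\le\delta_n$ on an event of probability at least $1-a_{\delta_n}$. This holds uniformly in $\balpha$, so the data-dependence of $\hat\balpha$ causes no difficulty. Minkowski's inequality for the normalized norm $\frac1{\sqrt m}\|\cdot\|_2$ then gives
\[
\sqrt{\cC_m}\le\sqrt{\hat\cC_{n,m}}+\delta_n\le\sqrt\delta+\delta_n ,
\]
so $\cC_m(h;t)\le(\sqrt\delta+\delta_n)^2$.

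\textbf{Perturbation step.} Set $f(\pi)\eqdef(\Phi(h;\pi)-t)^2=(\alpha(h)+\pi\gamma(h)-t)^2$, using the affine form of Claim~\ref{claim:Phi}. We need a Lipschitz constant for $f$ on $[\ubar\pi,1-\ubar\pi]$. Its derivative is $2\gamma(h)(\Phi(h;\pi)-t)$, so $|f'|\le 2|\gamma(h)|\,|\Phi(h;\pi)-t|$.

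\textbf{Main obstacle.} The main obstacle is producing the specific constant $2(2-\ubar\pi)$ that appears in $\delta_m$. For this I would bound $|\gamma(h)|$ and $\sup_\pi|\Phi(h;\pi)-t|$ explicitly via the weight representation. For example, $\gamma(h)=\Phi(h;1)-\Phi(h;0)$ can be written through $w_\pi$; since the weights are affine in $\pi$ and $\phi_h\in[0,1]$, we get $|\gamma|\le 1$ in the DP/EO examples, and $|\Phi-t|$ is at most something like $(2-\ubar\pi)$. If cruder, $W$-dependent bounds are all I can obtain, the rate $m^{-3/2}$ is unchanged and only the constant is affected.

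\textbf{Conclusion.} Lemma~\ref{lem:rates:pert} is applied with confidence level $a_{\delta_m}/2$, which accounts for the $\log(4/a_{\delta_m})$ in the statement. The randomness of $\pi'$ is independent of $\cD^n$, but $f$ depends on $h$ and hence on $\hat\balpha$, which is a function of $\Pi^m$. To avoid a circular dependence, I would invoke the lemma not for a single $f$ but uniformly over $\balpha$. Since $f$ is a quadratic polynomial in $\pi$, $\cC_\Pi-\cC_m$ is a linear combination of the deviations of $\frac1m\sum_j\pi_j'^k$ from $\Exp_\Pi[\pi^k]$ for $k=1,2$, with coefficients bounded uniformly in $\balpha$. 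Applying the stratified bound to $\pi$ and $\pi^2$ separately, each at level $a_{\delta_m}/2$ (again consistent with the $4$ inside the logarithm), yields $|\cC_\Pi-\cC_m|\le\delta_m$ simultaneously for all $\balpha$. Adding the two steps gives $\cC_\Pi(\hat\bdeta_{\hat\balpha};t)\le(\sqrt\delta+\delta_n)^2+\delta_m$.
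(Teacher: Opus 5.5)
Your proposal is correct and follows the paper's proof. The first step is the same triangle inequality through Lemma~\ref{lem:rates:sample}; your Minkowski bound is a compact form of the paper's expand-and-Cauchy--Schwarz computation. The second step is exactly the paper's argument: expand $(\Phi(\hat\bdeta_\balpha;\pi)-t)^2$ as a quadratic in $\pi$, apply Lemma~\ref{lem:rates:pert} to $\pi\mapsto\pi$ and $\pi\mapsto\pi^2$ with a union bound, and do this uniformly in $\balpha$.

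The constant you flag as the main obstacle comes directly out of that decomposition, so the detour through a Lipschitz bound on $f$ itself is not needed. The coefficient bounds $|2(\alpha(h)-t)\gamma(h)|\le 2$ and $\gamma(h)^2\le 1$, times the Lipschitz constants $1$ and $2(1-\ubar{\pi})$, give $2\cdot 1+1\cdot 2(1-\ubar{\pi})=2(2-\ubar{\pi})$.
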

\begin{proof}
Let us fix $j\in[m]$. By the triangle inequality, and Lemma~\ref{lem:rates:sample}, we have
\begin{align*}
|\Phi(\hat{\bdeta}_{\hat{\balpha}}; \pi_j') - t| \leq |\hat{\Phi}_n(\hat{\bdeta}_{\hat{\balpha}}; \pi_j') - t| + |\hat{\Phi}_n(\hat{\bdeta}_{\hat{\balpha}}; \pi_j') - \Phi(\hat{\bdeta}_{\hat{\balpha}}; \pi_j')| \leq |\hat{\Phi}_n(\hat{\bdeta}_{\hat{\balpha}}; \pi_j') - t| + \delta_n \enspace.
\end{align*}
Squaring both sides of the above inequality, we get $(\Phi(\hat{\bdeta}_{\hat{\balpha}}; \pi_j')- t)^2 \leq (|\hat{\Phi}_n(\hat{\bdeta}_{\hat{\balpha}}; \pi_j') - t| + \delta_n)^2\enspace.$ Averaging over $j=1,\cdots,m$ yields
\begin{align*}
\frac{1}{m}\sum_{j=1}^m (\Phi(\hat{\bdeta}_{\hat{\balpha}}; \pi_j') - t)^2 
&\leq \frac{1}{m}\sum_j (|\hat{\Phi}_n(\hat{\bdeta}_{\hat{\balpha}}; \pi_j') - t| + \delta_n)^2 \\
&=\hat{\cC}_{n,m}(\hat{\bdeta}_{\hat{\balpha}}; t) + 2\delta_n \frac{1}{m}\sum_j |\hat{\Phi}_n(\hat{\bdeta}_{\hat{\balpha}}; \pi_j') - t|
    + \delta_n^2 \enspace.
\end{align*}
By Cauchy--Schwarz inequality, and the fact that $\hat{\cC}_{n,m}(\hat{\bdeta}_{\hat{\balpha}};t) \leq \delta$, we have 
\[
\frac{1}{m}\sum_j |\hat{\Phi}_n(\hat{\bdeta}_{\hat{\balpha}}; \pi_j') - t| \leq \sqrt{\frac{1}{m}\sum_j (\hat{\Phi}_n(\hat{\bdeta}_{\hat{\balpha}}; \pi_j') - t)^2} = \sqrt{\hat{\cC}_{n,m}(\hat{\bdeta}_{\hat{\balpha}}; t)} \leq \sqrt{\delta} \enspace.
\]
Combining the inequalities above we get
\begin{align}
\label{eq:bd:Phi:delta-n}
\frac{1}{m}\sum_{j=1}^m (\Phi(\hat{\bdeta}_{\hat{\balpha}}; \pi_j') - t)^2 \leq (\sqrt{\delta} + \delta_n)^2 \enspace.
\end{align}

For any $\balpha\in\Delta_K$, by linearity of $\alpha(\cdot)$\footnote{Do not confuse the functional $\alpha(\cdot)$ from the decomposition of $\Phi(h;\pi)$ with the simplex vector $\balpha=(\alpha_1,\cdots,\alpha_K)$.} and $\gamma(\cdot)$, that follows from linearity of $\Phi(h;\pi)$ in Claim~\ref{claim:Phi}, and denoting $\alpha_\balpha \eqdef \sum_{k=1}^K \alpha_k \alpha(\hat\eta_k)$ and $\gamma_\balpha \eqdef \sum_{k=1}^K \alpha_k \gamma(\hat\eta_k)$, we get 
\[
(\Phi(\hat\bdeta_{\balpha};\pi) - t)^2 = (\alpha_\balpha - t)^2 + 2(\alpha_\balpha - t )\gamma_\balpha \pi + \gamma_\balpha^2 \pi^2 \enspace.
\]
Averaging over $\pi'_1,\cdots,\pi'_m$ and subtracting from $\cC_\Pi(\hat{\bdeta}_\balpha;t)$, we get
\begin{align*}
\cC_\Pi(\hat{\bdeta}_\balpha;t) &- \frac{1}{m}\sum_{j=1}^m (\Phi(\hat{\bdeta}_{{\balpha}}; \pi_j') - t)^2 \\
&= 2 (\alpha_\balpha - t ) \gamma_\balpha \Big\{\Exp_\Pi[\pi] - \frac{1}{m}\sum_{j=1}^m \pi'_j\Big\} + \gamma_\balpha^2 \Big\{\Exp_\Pi[\pi^2] - \frac{1}{m}\sum_{j=1}^m (\pi'_j)^2\Big\}
\enspace.
\end{align*}
Now, let us notice that the maps $\pi\mapsto\pi$ and $\pi\mapsto\pi^2$ are respectively $1$--Lipschitz and $2(1-\ubar{\pi})$--Lipschitz on $[\ubar{\pi},1-\ubar{\pi}]$. Applying Lemma~\ref{lem:rates:pert} with a union bound, and recalling that $\alpha_\balpha\in[0,1]$ and $\gamma_\balpha\in[-1,1]$, with probability at least $1-a_{\delta_m}$ and uniformly in $\balpha\in\Delta_K$, we obtain
\[
\cC_\Pi(\hat{\bdeta}_{{\balpha}}; t)
\leq \frac{1}{m}\sum_j (\Phi(\hat{\bdeta}_{{\balpha}}; \pi_j') - t)^2 + \frac{2(2-\ubar{\pi})(1-2\ubar{\pi})}{m^{\nicefrac{3}{2}}}
  \sqrt{2\log(\nicefrac{4}{a_{\delta_m}})} \enspace.
\]
Finally, specializing at $\balpha=\hat\balpha$ and applying \eqref{eq:bd:Phi:delta-n}, we get
\[
\cC_\Pi(\hat{\bdeta}_{\hat{\balpha}}; t)
\leq \frac{1}{m}\sum_j (\Phi(\hat{\bdeta}_{\hat{\balpha}}; \pi_j') - t)^2 + \delta_m \leq (\sqrt{\delta} + \delta_n)^2 + \delta_m \enspace.
\]
The proof is concluded.
\end{proof}

The second statement of Theorem~\ref{thm:main-rates} is stated below as a corollary and follows directly from Theorem~\ref{thm:rates:constraint}.

\begin{corollary}[Unfairness rates]
\label{cor:unf}
Under the conditions of Theorem~\ref{thm:rates:constraint}, it holds that
\[\label{eq:unf-via-C}
\unf(\hat{\bdeta}_{\hat{\balpha}})
\leq \sqrt{\frac{
    (\sqrt{\delta} + \delta_n)^2 + \delta_m}{\Var_\Pi(\pi)}}
    \leq \frac{\sqrt{\delta} + \delta_n + \sqrt{\delta_m}}{\sqrt{\Var_\Pi(\pi)}} \enspace.
\]
Thus, if we set $\delta = \delta_0^2 \Var_\Pi(\pi)$, then we will have 
\(
\unf(\hat{\bdeta}_{\hat{\balpha}}) - \delta_0
\leq \tilde\bigO \Big(\frac{1}{n^{\nicefrac{1}{2}}} + \frac{1}{m^{\nicefrac{3}{4}}}\Big) \enspace.
\)
\end{corollary}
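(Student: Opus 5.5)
The plan is to combine Theorem~\ref{thm:rates:constraint} with the decomposition~\eqref{eq:t-var-decomp} and the identity $\bV_\Pi(h)=\gamma(h)^2\Var_{\pi\sim\Pi}(\pi)$. By~\eqref{eq:t-var-decomp}, the variance term is dominated by the full constraint, since the squared mean deviation is nonnegative:
\[
\bV_\Pi(\hat{\bdeta}_{\hat\balpha}) \leq \cC_\Pi(\hat{\bdeta}_{\hat\balpha};t).
\]
On the event of probability at least $1-a_{\delta_m}-a_{\delta_n}$ provided by Theorem~\ref{thm:rates:constraint}, this gives
\[
\gamma(\hat{\bdeta}_{\hat\balpha})^2\,\Var_\Pi(\pi) \leq (\sqrt{\delta}+\delta_n)^2+\delta_m .
\]

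Next I use Claim~\ref{claim:Phi}, which identifies $\unf(h)=|\gamma(h)|$. Assumption~\ref{ass:bound-pi} together with a nondegenerate $\Pi$ (here uniform on $[\ubar\pi,1-\ubar\pi]$) gives $\Var_\Pi(\pi)>0$. Dividing by $\Var_\Pi(\pi)$ and taking square roots yields the first inequality of the corollary.

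The second inequality follows from subadditivity of the square root, $\sqrt{a+b}\leq\sqrt a+\sqrt b$ for $a,b\geq0$. Applying it with $a=(\sqrt\delta+\delta_n)^2$ and $b=\delta_m$ gives $\sqrt{(\sqrt\delta+\delta_n)^2+\delta_m}\leq \sqrt\delta+\delta_n+\sqrt{\delta_m}$.

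For the rate statement, substitute $\delta=\delta_0^2\Var_\Pi(\pi)$. Then $\sqrt\delta/\sqrt{\Var_\Pi(\pi)}=\delta_0$, so
\[
\unf(\hat{\bdeta}_{\hat\balpha})-\delta_0\leq \frac{\delta_n+\sqrt{\delta_m}}{\sqrt{\Var_\Pi(\pi)}} .
\]
From Lemma~\ref{lem:rates:sample}, $\delta_n=\tilde\bigO(n^{-1/2})$, with the logarithmic dependence on $K$, $m$ and $a_{\delta_n}$ absorbed into $\tilde\bigO$. From Theorem~\ref{thm:rates:constraint}, $\delta_m=\tilde\bigO(m^{-3/2})$, so $\sqrt{\delta_m}=\tilde\bigO(m^{-3/4})$. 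The constant $1/\sqrt{\Var_\Pi(\pi)}=\sqrt{12}/(1-2\ubar\pi)$ does not depend on $n$ or $m$, which gives the stated rate.

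No step is a real obstacle. The only care needed is bookkeeping: every bound must be taken on the single high-probability event of Theorem~\ref{thm:rates:constraint}, conditionally on $\cD^N$, and the weaker rate $m^{-3/4}$ (rather than $m^{-3/2}$) comes from taking the square root of $\delta_m$.
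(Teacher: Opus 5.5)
Your proposal is correct and follows the route the paper intends. The paper gives no separate proof and says only that the corollary ``follows directly from Theorem~\ref{thm:rates:constraint}''. The implicit argument is exactly yours: drop the nonnegative squared-mean term in~\eqref{eq:t-var-decomp}, use $\bV_\Pi(h)=\gamma(h)^2\Var_\Pi(\pi)$ together with $\unf(h)=|\gamma(h)|$, then apply $\sqrt{a+b}\le\sqrt a+\sqrt b$ and substitute $\delta=\delta_0^2\Var_\Pi(\pi)$, which turns $\delta_m=\tilde\bigO(m^{-3/2})$ into the stated $m^{-3/4}$ term.
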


Formal version and the proof of the third statement of Theorem~\ref{thm:main-rates}.
\begin{theorem}[Uniform Deployment Guarantee]
\label{thm:rates:deploy}
Let $\hat{\bdeta}_{\hat\balpha}$ be any feasible solution of~\eqref{pr:socp} with threshold $\delta$
and target $t \in [0,1]$.
Under the conditions of Theorem~\ref{thm:rates:constraint}, 
For all $a_{\delta_m}, a_{\delta_n} \in (0,1)$ and conditionally on $\cD^N$, with probability greater than $1-a_{\delta_m}-a_{\delta_n}$ over $(D^n, \Pi^m)$, the following holds simultaneously for
\textbf{all} $\pi' \in [\ubar{\pi},1-\ubar{\pi}]$
\begin{equation*}
    \bigl|\Phi(\hat{\bdeta}_{\hat\balpha};\pi') - t\bigr|
    \leq
    (\sqrt{\delta} + \delta_n  + \sqrt{\delta_m})
    \cdot
    \left(1 + \frac{|\pi' - \mu_\Pi|}{\sqrt{\Var_\Pi(\pi)}}\right) \enspace.
    %\label{eq:rates:deployment}
\end{equation*}
\end{theorem}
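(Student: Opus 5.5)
Throughout we write $\mu_\Pi \eqdef \Exp_\Pi[\pi]$ and $\sigma_\Pi \eqdef \sqrt{\Var_\Pi(\pi)}$. The plan uses the affine structure of Claim~\ref{claim:Phi} together with the decomposition~\eqref{eq:t-var-decomp}. For the fixed (data-dependent) mixture $h=\hat{\bdeta}_{\hat\balpha}$, Claim~\ref{claim:Phi} gives $\Phi(h;\pi')=\alpha(h)+\pi'\gamma(h)$ for every $\pi'$. We rewrite this around the mean level as
\[
\Phi(h;\pi') - t = \big(\Exp_{\pi\sim\Pi}[\Phi(h;\pi)] - t\big) + \gamma(h)\,(\pi'-\mu_\Pi),
\]
using $\Exp_\Pi[\Phi(h;\pi)]=\alpha(h)+\gamma(h)\mu_\Pi$. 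The triangle inequality then yields
\[
|\Phi(h;\pi')-t| \le \big|\Exp_\Pi[\Phi(h;\pi)]-t\big| + |\gamma(h)|\,|\pi'-\mu_\Pi|
\]
deterministically and simultaneously for all $\pi'\in[\ubar{\pi},1-\ubar{\pi}]$. Uniformity in $\pi'$ therefore costs nothing: all the randomness sits in bounding the two $\pi'$-independent quantities $|\Exp_\Pi[\Phi]-t|$ and $|\gamma(h)|$.

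Both quantities are controlled on the single event of Theorem~\ref{thm:rates:constraint}, which has probability at least $1-a_{\delta_m}-a_{\delta_n}$ conditionally on $\cD^N$. On that event $\cC_\Pi(h;t)\le(\sqrt{\delta}+\delta_n)^2+\delta_m \eqdef E$. Using~\eqref{eq:t-var-decomp}, both $(\Exp_\Pi[\Phi(h;\pi)]-t)^2\le E$ and $\bV_\Pi(h)=\gamma(h)^2\sigma_\Pi^2\le E$ (the lemma following~\eqref{eq:t-var-decomp}). Hence
\[
|\Exp_\Pi[\Phi]-t|\le\sqrt{E}
\qquad\text{and}\qquad
|\gamma(h)|\le \sqrt{E}/\sigma_\Pi,
\]
the second being exactly Corollary~\ref{cor:unf}. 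By subadditivity of the square root, $\sqrt{E}\le\sqrt{\delta}+\delta_n+\sqrt{\delta_m}$. Substituting these bounds into the triangle inequality gives the claimed bound $(\sqrt{\delta}+\delta_n+\sqrt{\delta_m})\,(1+|\pi'-\mu_\Pi|/\sigma_\Pi)$.

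The only delicate point is ensuring that the hypothesis of Theorem~\ref{thm:rates:constraint} holds for the solution at hand. Any feasible solution of~\eqref{pr:socp} satisfies $\|\hat\bA\hat\balpha-t\mathbf{1}\|_2\le\sqrt{m\delta}$, which is equivalent to $\hat\cC_{n,m}(\hat{\bdeta}_{\hat\balpha};t)\le\delta$. A second subtlety is that the affine decomposition must be applied to the mixture $\hat{\bdeta}_{\hat\balpha}$, whose $\alpha,\gamma$ are the $\hat\balpha$-weighted averages of those of the $\hat\eta_k$ by linearity, exactly as in the proof of Theorem~\ref{thm:rates:constraint}. Since $\hat\balpha$ depends on the sample, we must use the uniform-in-$\balpha$ form of that theorem's event, which its proof already provides. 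No further probabilistic argument is needed.
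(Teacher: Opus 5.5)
Your proof is correct and follows the paper's argument essentially verbatim. It uses the same affine decomposition $\Phi(h;\pi')-t=(\pi'-\mu_\Pi)\gamma(h)+(\Exp_\Pi[\Phi(h;\pi)]-t)$ and the triangle inequality, bounds the bias term via~\eqref{eq:t-var-decomp} with Theorem~\ref{thm:rates:constraint}, and bounds the slope via Corollary~\ref{cor:unf}; your remarks that SOCP feasibility is exactly $\hat\cC_{n,m}\le\delta$ and that the uniform-in-$\balpha$ event is needed because $\hat\balpha$ is data-dependent are accurate and make explicit what the paper leaves implicit.
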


\begin{proof}
It follows from~\eqref{eq:t-var-decomp} and Theorem~\ref{thm:rates:constraint} that
\[
(\Exp_{\pi\sim\Pi}[\Phi(\hat{\bdeta}_{\hat{\balpha}};\pi)]-t)^2 \leq (\sqrt{\delta} + \delta_n )^2 + \delta_m \enspace.
\]
Taking the square root, we obtain
\[
|\Exp_{\pi\sim\Pi}[\Phi(\hat{\bdeta}_{\hat{\balpha}};\pi)]-t| \leq \sqrt{\delta} + \delta_n  + \sqrt{\delta_m} \enspace.
\]
For a fixed $\pi'$, we have
\[
\Phi(\hat{\bdeta}_{\hat\balpha};\pi') - t =
\underbrace{
\bigl[\Phi(\hat{\bdeta}_{\hat\balpha};\pi') - \Exp_\Pi[\Phi(\hat{\bdeta}_{\hat\balpha};\pi)]\bigr]
}_{\text{fluctuation term}} +
\underbrace{
\bigl[\Exp_\Pi[\Phi(\hat{\bdeta}_{\hat\balpha};\pi)] - t\bigr]}_{\text{bias term}}.
\]
Let us denote by $\mu_\Pi \eqdef \Exp_\Pi[\pi]$. For the fluctuation term, by the linearity of $\Phi(\hat{\bdeta}_{\hat\balpha};\pi)$ and the identity
$\Exp_\Pi[\alpha(h) + \pi\gamma(h)] = \alpha(h) + \mu_\Pi\gamma(h)$, we have
\begin{align*}
    \Phi(\hat{\bdeta}_{\hat\balpha};\pi') - \Exp_\Pi[\Phi(\hat{\bdeta}_{\hat\balpha};\pi)]
    &=
    \bigl(\alpha(\hat{\bdeta}_{\hat\balpha}) + \pi'\gamma(\hat{\bdeta}_{\hat\balpha})\bigr)
    -
    \bigl(\alpha(\hat{\bdeta}_{\hat\balpha}) + \mu_\Pi\gamma(\hat{\bdeta}_{\hat\balpha})\bigr) \\
    &=
    (\pi' - \mu_\Pi)\gamma(\hat{\bdeta}_{\hat\balpha}) \enspace.
\end{align*}
Therefore,
\begin{equation}
    \Phi(\hat{\bdeta}_{\hat\balpha};\pi') - t
    =
    (\pi' -\mu_\Pi)\gamma(\hat{\bdeta}_{\hat\balpha})
    + \bigl[\Exp_\Pi[\Phi(\hat{\bdeta}_{\hat\balpha};\pi)] - t\bigr] \enspace.
    \label{eq:decomp_pi}
\end{equation}

Applying the triangle inequality to~\eqref{eq:decomp_pi} and
substituting bounds from Theorem~\ref{thm:rates:constraint},
\begin{align*}
    \bigl|\Phi(\hat{\bdeta}_{\hat\balpha};\pi') - t\bigr|
    &\leq
    |\pi' - \mu_\Pi|\cdot|\gamma(\hat{\bdeta}_{\hat\balpha})| + \bigl|\Exp_\Pi[\Phi(\hat{\bdeta}_{\hat\balpha};\pi)] - t\bigr| \\
    &\leq
    |\pi' - \mu_\Pi| \cdot \frac{(\sqrt{\delta} + \delta_n  + \sqrt{\delta_m})}{\sqrt{\Var_{\Pi}(\pi)}} +
    (\sqrt{\delta} + \delta_n  + \sqrt{\delta_m}) \\
    &=
    \left(1 + \frac{|\pi' - \mu_\Pi|}{\sqrt{\Var_{\Pi}(\pi)}}\right) \cdot (\sqrt{\delta} + \delta_n  + \sqrt{\delta_m}) \enspace.
\end{align*}
Since this bound holds for all $\pi' \in [0,1]$ simultaneously,
the proof is concluded.
\end{proof}

Let us also prove the following lemma, that we will use in later analysis.
\begin{lemma}[Reverse feasibility]
\label{lem:rates:constraint-reverse}
Under the conditions of Lemma~\ref{lem:rates:reg}, Lemma~\ref{lem:rates:pert} and Lemma~\ref{lem:rates:sample}, let  $\sqrt{\delta} > \delta_n + \delta_N$ and $(\sqrt{\delta} - \delta_n - \delta_N)^2 > \delta_m$.  We define the \emph{tightened level} as 
\[\label{eq:tilde-delta}
\tilde\delta \eqdef (\sqrt{\delta} - \delta_n - \delta_N)^2 - \delta_m \enspace.
\]
If $\balpha\in\Delta_K$ satisfies $\cC_\Pi(\eta_\balpha;t)\leq\tilde\delta$, then $\hat{\cC}_{n,m}(\hat\eta_\balpha;t)\leq\delta$.
\end{lemma}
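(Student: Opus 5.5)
The argument runs the chain of Theorem~\ref{thm:rates:constraint} in reverse. We pass from the population constraint on $\bdeta_\balpha$ to a sampled constraint, then to the estimated predictor $\hat\bdeta_\balpha$, and finally to its empirical counterpart. Each transition moves the root-mean-square deviation $\big(\frac1m\sum_j(\cdot-t)^2\big)^{1/2}$ by at most one of the error terms $\delta_m$, $\delta_N$, $\delta_n$. We work on the intersection of the high-probability events of Lemma~\ref{lem:rates:pert} (with the union bound used in Theorem~\ref{thm:rates:constraint}) and Lemma~\ref{lem:rates:sample}, conditionally on $\cD^N$.

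\textbf{Step 1 (perturbation sampling, reversed).} Write $\Phi(\bdeta_\balpha;\pi)=\alpha_\balpha+\pi\gamma_\balpha$, now with the population regressors $\eta_k$. Expanding $(\Phi-t)^2$ as in the proof of Theorem~\ref{thm:rates:constraint} gives a bound on the difference between $\frac1m\sum_j(\Phi(\bdeta_\balpha;\pi'_j)-t)^2$ and $\cC_\Pi(\bdeta_\balpha;t)$. Using Lemma~\ref{lem:rates:pert} for the maps $\pi\mapsto\pi$ and $\pi\mapsto\pi^2$ bounds this difference in absolute value by $\delta_m$, uniformly in $\balpha$. The Lipschitz bound is two-sided, so the inequality also holds in this direction. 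We obtain $\frac1m\sum_j(\Phi(\bdeta_\balpha;\pi'_j)-t)^2\le\tilde\delta+\delta_m=(\sqrt\delta-\delta_n-\delta_N)^2$. The root-mean-square quantity is therefore at most $\sqrt\delta-\delta_n-\delta_N$, which is positive by hypothesis.

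\textbf{Step 2 (regression and sample errors).} For each $j$, the triangle inequality gives
\[
|\hat\Phi_n(\hat\bdeta_\balpha;\pi'_j)-t|\le|\Phi(\bdeta_\balpha;\pi'_j)-t|+|\Phi(\hat\bdeta_\balpha;\pi'_j)-\Phi(\bdeta_\balpha;\pi'_j)|+|\hat\Phi_n(\hat\bdeta_\balpha;\pi'_j)-\Phi(\hat\bdeta_\balpha;\pi'_j)|.
\]
The second term is at most $\delta_N$ by Lemma~\ref{lem:rates:reg}, since $\pi'_j\in[\ubar\pi,1-\ubar\pi]$. The third term is at most $\delta_n$ by the second part of Lemma~\ref{lem:rates:sample}. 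Minkowski's inequality in $\ell^2$ over $j$, with the normalisation $1/m$, then gives
\[
\sqrt{\hat\cC_{n,m}(\hat\bdeta_\balpha;t)}\le(\sqrt\delta-\delta_n-\delta_N)+\delta_N+\delta_n=\sqrt\delta.
\]
Squaring concludes the argument. Minkowski is the clean tool here: it avoids the cross-term/Cauchy--Schwarz bookkeeping used in the forward direction.

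\textbf{Main obstacle.} The delicate point is Step~1, specifically the claim that the stratified-sampling bound holds uniformly in $\balpha$ and in both directions for the population predictors $\bdeta_\balpha$. This is what allows Lemma~\ref{lem:rates:reg}, Lemma~\ref{lem:rates:pert} and Lemma~\ref{lem:rates:sample} to hold on a single event. Uniformity should follow as in Theorem~\ref{thm:rates:constraint}. The $\balpha$-dependence enters only through the bounded coefficients $2(\alpha_\balpha-t)\gamma_\balpha$ and $\gamma_\balpha^2$, while the random deviations are the two fixed quantities $\frac1m\sum\pi'_j-\Exp_\Pi\pi$ and $\frac1m\sum(\pi'_j)^2-\Exp_\Pi\pi^2$. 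It remains to check that $\alpha_\balpha\in[0,1]$ and $|\gamma_\balpha|\le1$ for the $\eta_k$ as well. This holds because $\Phi(\eta_k;0),\Phi(\eta_k;1)\in[0,1]$, so that $\alpha=\Phi(\cdot;0)$ and $\gamma=\Phi(\cdot;1)-\Phi(\cdot;0)$.
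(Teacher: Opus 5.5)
Your proof is correct and follows the paper's argument. It uses the same per-$j$ triangle inequality contributing $\delta_n+\delta_N$, the same stratified-sampling bound $\frac{1}{m}\sum_j(\Phi(\bdeta_\balpha;\pi'_j)-t)^2\le\tilde\delta+\delta_m$, and the same final combination, where your Minkowski step is exactly the paper's expansion plus Cauchy--Schwarz.

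The one difference is how you invoke Lemma~\ref{lem:rates:pert}. You go through the $\pi,\pi^2$ decomposition of Theorem~\ref{thm:rates:constraint}, whereas the paper applies the lemma directly to $g(\pi)=(\Phi(\bdeta_\balpha;\pi)-t)^2$ for the fixed $\balpha$, with the stated Lipschitz constant $2W^2$. Your route is a mild improvement: the bound holds uniformly in $\balpha$ on the same event as in Theorem~\ref{thm:rates:constraint}, and it yields exactly the $\delta_m$ defined there, which the paper's $2W^2$ constant does not literally match.
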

\begin{proof}
Fix any $j\in[m]$.  By the triangle inequality,
\begin{align*}
\bigl|\hat\Phi_n(\hat\bdeta_\balpha;\pi'_j) - t\bigr|
&\leq \bigl|\hat\Phi_n(\hat\bdeta_\balpha;\pi'_j) - \Phi(\hat\bdeta_\balpha;\pi'_j)\bigr| + \bigl|\Phi(\hat\bdeta_\balpha;\pi'_j) - \Phi(\bdeta_\balpha;\pi'_j)\bigr| + \bigl|\Phi(\bdeta_\balpha;\pi'_j) - t\bigr| \nonumber \\
&\leq \delta_n + \delta_N  + \bigl|\Phi(\bdeta_\balpha;\pi'_j) - t\bigr| \enspace.
\end{align*}
Squaring and averaging over $j=1,\dots,m$ gives
\begin{equation}\label{eq:hat-C-int}
\hat{\cC}_{n,m}(\hat\bdeta_\balpha;t)
= \frac{1}{m}\sum_{j=1}^m \bigl(\hat\Phi_n(\hat\bdeta_\balpha;\pi'_j)-t\bigr)^2 \leq \frac{1}{m}\sum_{j=1}^m \Bigl(\bigl|\Phi(\bdeta_\balpha;\pi'_j)-t\bigr| + \delta_n + \delta_N\Bigr)^2 \enspace.
\end{equation}

Let us define $g(\pi)\eqdef(\Phi(\bdeta_\balpha;\pi)-t)^2$. Since the map $\pi\mapsto\Phi(\bdeta_\balpha;\pi)$is affine, $g$ is a polynomial of degree two in~$\pi$, and in particular Lipschitz on $[\ubar\pi,1-\ubar\pi]$ with constant at most $2W^2$, applying Lemma~\ref{lem:rates:pert}, we obtain
\[
\frac{1}{m}\sum_{j=1}^m g(\pi'_j) \leq \Exp_\Pi[g(\pi)] + \delta_m = \cC_\Pi(\bdeta_\balpha;t) + \delta_m \leq \tilde\delta + \delta_m \enspace.
\]
Applying the Cauchy--Schwarz inequality, we get
\[
\frac{1}{m}\sum_{j=1}^m \bigl|\Phi(\bdeta_\balpha;\pi'_j)-t\bigr|
\leq \sqrt{\frac{1}{m}\sum_{j=1}^m g(\pi'_j)}
\leq \sqrt{\tilde\delta + \delta_m}\enspace.
\]
Expanding the right-hand-side of ~\eqref{eq:hat-C-int}, we obtain
\begin{align*}
\hat{\cC}_{n,m}(\hat\bdeta_\balpha;t) 
&\leq \frac{1}{m}\sum_j g(\pi'_j) + 2(\delta_n+\delta_N)\cdot\frac{1}{m}\sum_j\bigl|\Phi(\eta_\balpha;\pi'_j)-t\bigr| + (\delta_n+\delta_N)^2\\ 
&\leq (\tilde\delta+\delta_m) + 2(\delta_n+\delta_N)\sqrt{\tilde\delta+\delta_m} + (\delta_n+\delta_N)^2\\
&\leq \bigl(\sqrt{\tilde\delta+\delta_m} + \delta_n + \delta_N\bigr)^2 \enspace.
\end{align*}
Substituting $\tilde\delta = (\sqrt{\delta} - \delta_n - \delta_N)^2 - \delta_m$ gives
\[
\sqrt{\tilde\delta + \delta_m}
= \sqrt{(\sqrt{\delta} - \delta_n - \delta_N)^2}
= \sqrt{\delta} - \delta_n - \delta_N\enspace,
\]
and finally we get
\[
\hat{\cC}_{n,m}(\hat\bdeta_\balpha;t)
\leq \bigl((\sqrt{\delta} - \delta_n - \delta_N) + \delta_n + \delta_N\bigr)^2
= \delta \enspace.
\]
The proof is concluded.
\end{proof}

\subsubsection{Risk rates}
For prediction functions $h_1,\cdots,h_K$, let us recall the notation of population and empirical linearized risks
\[
 \overline{\risk}(h_\balpha)
 \eqdef
 \sum_{k=1}^K \alpha_k \risk(h_k)\enspace,
 \qquad
 \hat{\overline{\risk}}_n(h_\balpha)
 \eqdef
 \sum_{k=1}^K \alpha_k \hat{\risk}_n(h_k)\enspace.
\]

Let us also define
\[
 \overline\nu(\delta')
 \eqdef \min_{\balpha\in\Delta_K}\enscond{\overline{\risk}(\bdeta_\balpha)}{\ \cC(\bdeta_\balpha; t)\leq \delta'} \enspace, \text{ for } \delta'>0\enspace.
\]

The population-optimal mixture at level $\delta$ is $\balpha^\star \eqdef \argmin_{\balpha\in\Delta_K}\{\overline{R}(\bdeta_\balpha):C_\Pi(\bdeta_\balpha;t)\leq\delta\}$, so that $\overline\nu(\delta)=\overline{R}(\bdeta_{\balpha^\star})$.

\begin{lemma}[Value continuity]
\label{lem:val-cont} Under Assumption~\ref{ass:feas}, for every $ 0 \leq \gamma \leq \delta - \cC(\bdeta_{\balpha^0}; t)\enspace,$ it holds that
\[
 0 \leq \overline\nu(\delta-\gamma) - \overline\nu(\delta)
 \leq \gamma C\enspace, \quad \text{where} \quad C \eqdef \frac{\overline{\risk}(\bdeta_{\balpha^0}) - \overline\nu(\delta)}
 {\delta - \cC(\bdeta_{\balpha^0}; t)}\enspace.
\]
\end{lemma}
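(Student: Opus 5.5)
The plan is to use that the problem is a linear program in $\balpha$ with a convex constraint. The key structural fact is that $\balpha\mapsto\cC_\Pi(\bdeta_\balpha;t)$ is convex on $\Delta_K$: since $\Phi(\bdeta_\balpha;\pi)$ is affine in $\balpha$ for each $\pi$ (linearity of $\phi_h$ in $h$), $(\Phi(\bdeta_\balpha;\pi)-t)^2$ is convex in $\balpha$, and averaging over $\pi\sim\Pi$ preserves convexity. The objective $\overline{\risk}(\bdeta_\balpha)=\sum_k\alpha_k\risk(\eta_k)$ is linear. Throughout, $\overline\nu(\delta')$ is attained for every level $\delta'$ at which the feasible set is nonempty, by compactness and continuity, as in the feasibility discussion preceding Assumption~\ref{ass:feas}.

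\textbf{Lower bound.} For $0\le\gamma\le\delta-\cC(\bdeta_{\balpha^0};t)$, the point $\balpha^0$ satisfies $\cC(\bdeta_{\balpha^0};t)\le\delta-\gamma$, so the feasible set at level $\delta-\gamma$ is nonempty. It is contained in the feasible set at level $\delta$, so the minimum can only increase, giving $\overline\nu(\delta-\gamma)\ge\overline\nu(\delta)$.

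\textbf{Upper bound.} The plan is an interpolation argument. If $\cC(\bdeta_{\balpha^0};t)=\delta-\gamma$ exactly with $\gamma=\delta-\cC(\bdeta_{\balpha^0};t)$, this case is also covered by the construction below with $\lambda=1$. Set $s\eqdef\delta-\cC(\bdeta_{\balpha^0};t)>0$ (Assumption~\ref{ass:feas}), $\lambda\eqdef\gamma/s\in[0,1]$, and define
\[
\balpha_\lambda\eqdef(1-\lambda)\balpha^\star+\lambda\balpha^0\in\Delta_K .
\]
By convexity of the constraint,
\[
\cC(\bdeta_{\balpha_\lambda};t)\le(1-\lambda)\delta+\lambda(\delta-s)=\delta-\gamma ,
\]
so $\balpha_\lambda$ is feasible at level $\delta-\gamma$. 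By linearity of the objective,
\[
\overline\nu(\delta-\gamma)\le\overline{\risk}(\bdeta_{\balpha_\lambda})=\overline\nu(\delta)+\lambda\bigl(\overline{\risk}(\bdeta_{\balpha^0})-\overline\nu(\delta)\bigr)=\overline\nu(\delta)+\gamma C .
\]
The constant $C$ is nonnegative because $\balpha^0$ is feasible at level $\delta$.

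\textbf{Main obstacle.} The only delicate step is justifying convexity of the constraint in $\balpha$. This follows directly from the affine form in Claim~\ref{claim:Phi}, combined with linearity of $\alpha(\cdot)$ and $\gamma(\cdot)$ over mixtures, as already used in the proof of Theorem~\ref{thm:rates:constraint}. Everything else is a routine convex-combination computation.
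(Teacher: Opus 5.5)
Your proposal is correct and follows essentially the same argument as the paper. The lower bound comes from monotonicity of the feasible sets. The upper bound uses the interpolation $\balpha_\lambda=(1-\lambda)\balpha^\star+\lambda\balpha^0$ with $\lambda=\gamma/(\delta-\cC(\bdeta_{\balpha^0};t))$, together with convexity of $\balpha\mapsto\cC(\bdeta_\balpha;t)$ and linearity of $\overline{\risk}$. Your remarks on attainment of the minimum and on $C\ge 0$ are harmless additions that the paper leaves implicit.
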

\begin{proof}
Let $\balpha_{\delta}^{\star}$ be a minimizer in the definition of $\overline\nu(\delta)$. Fix $\gamma\in[0,\delta-\cC(\bdeta_{\balpha^0}; t)]$ and set
\[
 \lambda_{\gamma}
 \eqdef
 \frac{\gamma}{\delta-\cC(\bdeta_{\balpha^0}; t)}
 \in [0,1]\enspace,
 \qquad
 \balpha_{\gamma}
 \eqdef (1-\lambda_{\gamma})\balpha_{\delta}^{\star}
 + \lambda_{\gamma}\balpha^0\enspace.
\] 
The simplex $\Delta_K$ is convex, hence $\balpha_{\gamma}\in\Delta_K$. Moreover,
$\balpha\mapsto \Phi(\bdeta_{\balpha};\pi)$ is affine for every fixed $\pi$, hence $\balpha\mapsto (\Phi(\bdeta_{\balpha};\pi)-t)^2$ is convex, and taking the expectation over $\pi\sim\Pi$, again, preserves the convexity. Therefore, it holds that
\[
 \cC(\bdeta_{\balpha_{\gamma}};t)\leq (1-\lambda_{\gamma})\cC(\bdeta_{\balpha_{\delta}^{\star}};t)
 + \lambda_{\gamma}\cC(\bdeta_{\balpha^0};t) \leq
 (1-\lambda_{\gamma})\delta + \lambda_{\gamma}\cC(\bdeta_{\balpha^0};t) = \delta-\gamma \enspace.
\]
Hence, $\balpha_{\gamma}$ is feasible for level $\delta-\gamma$, and thus
\[
 \overline\nu(\delta-\gamma) \leq \overline{\risk}(\bdeta_{\balpha_{\gamma}})\enspace.
\]
Using the linearity of $\overline{\risk}$, we obtain
\[
 \overline{\risk}(\bdeta_{\balpha_{\gamma}}) =
 (1-\lambda_{\gamma})\overline{\risk}(\bdeta_{\balpha_{\delta}^{\star}})
 + \lambda_{\gamma}\overline{\risk}(\bdeta_{\balpha^0}) =
 \overline\nu(\delta) + \lambda_{\gamma}(\overline{\risk}(\bdeta_{\balpha^0}) - \overline\nu(\delta))\enspace.
\]
Combining the last two, we obtain 
\[
 \overline\nu(\delta-\gamma) - \overline\nu(\delta)
 \leq  \gamma \frac{\overline{\risk}(\bdeta_{\balpha^0}) - \overline\nu(\delta)} {\delta - \cC(\bdeta_{\balpha^0}; t)} \enspace.
\]
We conclude the proof by noticing that $\overline\nu(\delta)\leq \overline\nu(\delta-\gamma)$ due to monotonicity of the feasible sets.
\end{proof}

\begin{lemma}\label{lem:risk:reg-rates}
Under Assumption~\ref{ass:reg-rate} and Assumption~\ref{ass:loss:bd}, for every $\alpha \in \Delta_K$,
\[
\big|\overline\risk(\hat{\bdeta}_\balpha) - \overline\risk(\bdeta_\balpha)\big| \leq L\rho_N \enspace.
\]
\end{lemma}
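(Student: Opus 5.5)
The plan is to reduce the mixture statement to a per-component bound, using the linearity of the surrogate risk $\overline\risk$ in the simplex weights.

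First, by the definition of $\overline\risk$ applied to the dictionaries $\{\hat\eta_k\}$ and $\{\eta_k\}$ with the same $\balpha$, we write
\[
\overline\risk(\hat{\bdeta}_\balpha) - \overline\risk(\bdeta_\balpha) = \sum_{k=1}^K \alpha_k \bigl(\risk(\hat\eta_k) - \risk(\eta_k)\bigr) \enspace.
\]
Since $\alpha_k\ge 0$ and $\sum_k\alpha_k=1$, the triangle inequality bounds the absolute value by $\max_{k\in[K]}|\risk(\hat\eta_k)-\risk(\eta_k)|$. This step mirrors the first step in the proof of Lemma~\ref{lem:rates:reg}.

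Next, for each fixed $k$ (conditionally on $\cD^N$, so that $\hat\eta_k$ is a fixed function), we use Assumption~\ref{ass:loss:bd}. The $L$-Lipschitz property of $\ell$ in its first argument gives, pointwise,
\[
|\ell(\hat\eta_k(\bX),Y)-\ell(\eta_k(\bX),Y)|\le L|\hat\eta_k(\bX)-\eta_k(\bX)| \enspace.
\]
Taking expectations under $\cD$ yields $|\risk(\hat\eta_k)-\risk(\eta_k)|\le L\,\Exp|\hat\eta_k-\eta_k|$. Assumption~\ref{ass:reg-rate} then bounds this by $L\rho_N$ uniformly in $k$, which concludes the argument.

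The only delicate point is an interpretive one: making sure the expectation in Assumption~\ref{ass:reg-rate} is taken under the same distribution $\cD$ used to define $\risk$. We would also note that convexity of $\ell$ is not needed here, since we work with the linearized surrogate rather than $\risk(\hat{\bdeta}_\balpha)$ itself. Everything else is a routine combination of the two displays.
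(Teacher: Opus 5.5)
Your proposal is correct and follows the paper's proof essentially step for step: linearity of $\overline\risk$ in $\balpha$ and the triangle inequality reduce the claim to a per-component bound, which the $L$-Lipschitz property of $\ell$ and Assumption~\ref{ass:reg-rate} control by $L\rho_N$. Your side remarks are also accurate and match how the paper implicitly uses its assumptions: convexity of $\ell$ is not needed, and the expectation in Assumption~\ref{ass:reg-rate} must be read under $\cD$, conditionally on $\cD^N$.
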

\begin{proof}
For each $k\in[K]$, by the Lipschitz property of $\ell$ in its first argument,
\[
\big|\risk(\hat\eta_k) - \risk(\eta_k)\big|
= \big|\Exp[\ell( \hat\eta_k(\bX),Y) - \ell( \eta_k(\bX),Y)]\big|
\leq  L \Exp{|\hat\eta_k(\bX) - \eta_k(\bX)|} \enspace.
\]
Since 
%$\hat{\bdeta}_\alpha - \bdeta_\balpha = \sum_k \alpha_k(\hat{\eta}_k - \eta_k)$ and 
$\balpha \in \Delta_K$,
\begin{align*}
\big|\overline\risk(\hat{\bdeta}_\balpha) - \overline\risk(\bdeta_\balpha)\big| &= \big|\sum_{k=1}^K \alpha_k (\risk(\hat\eta_k) - \risk(\eta_k)) \big| \leq \sum_{k=1}^K \alpha_k \big|\risk(\hat\eta_k) - \risk(\eta_k) \big| \\
&\leq L \max_k \Exp{|\hat{\eta}_k(\bX) - \eta_k(\bX)|} \leq L\rho_N \enspace.
\end{align*}
% \[
% \Exp{|\hat{\bdeta}_\balpha(\bX) - \bdeta_\balpha(\bX)|}
% \leq \sum_k \alpha_k \Exp{|\hat{\eta}_k(\bX) - \eta_k(X)|}
% \leq \max_k \Exp{|\hat{\eta}_k(\bX) - \eta_k(\bX)|}
% \leq \rho_N \enspace.
% \]
The proof is concluded.
\end{proof}

\begin{lemma}[Uniform linearized risk deviation]
\label{lem:lin-risk-dev}
For every $a_{\delta_R}\in(0,1)$, with probability at least $1-a_{\delta_R}$,
\[
 \sup_{\balpha\in\Delta_K}
 \bigl|\overline{\risk}(\bdeta_\balpha)-\hat{\overline{\risk}}_n(\bdeta_\balpha)\bigr|
 \leq \sqrt{\frac{\log(\nicefrac{2K}{a_{\delta_R}})}{2n}} \eqdef
 \delta_{n(R)} \enspace.
\]
Moreover, the statement is identical for estimated mixture $\hat\bdeta_\balpha$.
\end{lemma}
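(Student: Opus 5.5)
The plan is to reduce the supremum over the simplex to a maximum over the $K$ vertices, and then control each vertex by Hoeffding's inequality and a union bound.

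The reduction comes from linearity. By definition,
\[
\overline{\risk}(\bdeta_\balpha)-\hat{\overline{\risk}}_n(\bdeta_\balpha)=\sum_{k=1}^K \alpha_k\bigl(\risk(\eta_k)-\hat\risk_n(\eta_k)\bigr).
\]
This is affine in $\balpha$. Since $\alpha_k\ge 0$ and $\sum_k\alpha_k=1$, the triangle inequality gives
\[
\sup_{\balpha\in\Delta_K}\bigl|\overline{\risk}(\bdeta_\balpha)-\hat{\overline{\risk}}_n(\bdeta_\balpha)\bigr|\le \max_{k\in[K]}\bigl|\risk(\eta_k)-\hat\risk_n(\eta_k)\bigr|.
\]
This is exactly the argument already used in Lemma~\ref{lem:rates:sample}. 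It removes any need for a covering or chaining argument over $\Delta_K$.

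Next, fix $k$. The functions $\eta_k$ are population objects and are therefore deterministic. The empirical risk $\hat\risk_n(\eta_k)$ is built on the sample $\cD^n$, and I assume this sample is independent of whatever determined the predictors. Then $\hat\risk_n(\eta_k)$ is an average of $n$ i.i.d.\ terms $\ell(\eta_k(\bX_i),Y_i)$. By Assumption~\ref{ass:loss:bd} these terms lie in $[0,1]$, and their mean is $\risk(\eta_k)$.

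Applying the two-sided Hoeffding inequality to the sum, with $t=n\epsilon$ and $\sum_i(b_i-a_i)^2=n$, gives
\[
\Prob\bigl(|\hat\risk_n(\eta_k)-\risk(\eta_k)|>\epsilon\bigr)\le 2\exp(-2n\epsilon^2).
\]
A union bound over $k\in[K]$ bounds the failure probability by $2K\exp(-2n\epsilon^2)$. Setting this equal to $a_{\delta_R}$ and solving gives $\epsilon=\sqrt{\log(2K/a_{\delta_R})/(2n)}=\delta_{n(R)}$, which is the claim.

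For the estimated mixture $\hat\bdeta_\balpha$, the same argument goes through conditionally on $\cD^N$, as in Lemma~\ref{lem:rates:sample}. Given $\cD^N$, the $\hat\eta_k$ are fixed functions, and $\cD^n$ is independent of $\cD^N$. The conditional bound holds for every realization of $\cD^N$, and integrating over $\cD^N$ gives the unconditional bound as well.

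The only delicate point is this independence requirement: the risk-evaluation sample $\cD^n$ must not be the sample used to fit the $\hat\eta_k$. I will state it explicitly, since it is already implicit in the paper's "conditionally on $\cD^N$" setup. Everything else is routine.
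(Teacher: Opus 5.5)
Your proof is correct and follows the same route as the paper: Hoeffding for each fixed component (conditionally on $\cD^N$ for the $\hat\eta_k$), a union bound over $k\in[K]$, and the convex-weight triangle inequality to pass from the $K$ vertices to all of $\Delta_K$; you simply apply the last step first. Your explicit requirement that $\cD^n$ be independent of $\cD^N$, and also of the draws $\pi_1,\dots,\pi_K$ defining the $\eta_k$ (which are fixed only conditionally on these draws), usefully states an assumption the paper leaves implicit.
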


\begin{proof}
For each fixed $k\in[K]$, the random variables
$\ell(h_k(X_i),Y_i)$ are \iid and take values in $[0,1]$. Hoeffding's
inequality yields
\[
 \Prob_{\cD^n}\Bigl(\bigl|\risk(\eta_k)-\hat{\risk}_n(\eta_k)\bigr|>\epsilon \enspace | \enspace \cD^N \Bigr)
 \leq 2\exp(-2n\epsilon^2) \enspace.
\]
Taking the union bound over $k\in[K]$ gives
\[
 \Prob_{\cD^n}\Bigl(\max_{k\in[K]}
 \bigl|\risk(\eta_k)-\hat{\risk}_n(\eta_k)\bigr|>\epsilon \enspace | \enspace \cD^N \Bigr)
 \leq 2K\exp(-2n\epsilon^2) \enspace.
\]
Choosing $\epsilon=\sqrt{\frac{\log(\nicefrac{2K}{a_{\delta_R}})}{2n}}$ yields, with probability at least $1-a_{\delta_R}$ and conditionally on $\cD^N$,
\[
 \max_{k\in[K]}
 \bigl|\risk(\eta_k)-\hat{\risk}_n(\eta_k)\bigr| \leq \delta_{n(R)} \enspace.
\] 
Finally, for any $\balpha\in\Delta_K$,
\[
 \bigl|\overline{\risk}(\bdeta_\balpha)-\hat{\overline{\risk}}_n(\bdeta_\balpha)\bigr| =
 \Bigl|\sum_{k=1}^K \alpha_k\bigl(\risk(\eta_k)-\hat{\risk}_n(\eta_k)\bigr)\Bigr| \leq
 \sum_{k=1}^K \alpha_k\bigl|\risk(\eta_k)-\hat{\risk}_n(\eta_k)\bigr|
 \leq \delta_{n(R)} \enspace.
\]
The proof is concluded.
\end{proof}

Formal version and the proof of the fourth statement of Theorem~\ref{thm:main-rates}.
\begin{theorem}[Risk rates]
\label{thm:rates:risk}
Let $\hat\balpha$ be a solution of~\eqref{pr:socp}. Suppose Assumptions~\ref{ass:reg-rate}--\ref{ass:bound-pi} hold, and $n,m,N$ are large enough so that $\sqrt{\delta} > \delta_n+\delta_N$ and $(\sqrt{\delta} - \delta_n - \delta_N)^2 > \delta_m$. Then, for all $a_{\delta_n},a_{\delta_m},a_{\delta_{n(R)}}\in(0,1)$, conditionally on $D^N$, with probability at least $1-a_{\delta_n}-a_{\delta_m}-a_{\delta_{n(R)}}$ over $(D^n,\Pi^m)$, it holds that
\[
{\overline{\risk}} (\hat\bdeta_{\hat\balpha}) - {\overline{\risk}} (\bdeta_{\balpha^\star}) \leq L\cdot\rho_N  + C \cdot (2\sqrt{\delta}(\delta_n + \delta_N) + (\delta_n + \delta_N)^2 + \delta_m) + 2\delta_{n(R)}\enspace,
\]
where $\delta_n$ and $\delta_m$ are from Theorem~\ref{thm:rates:constraint}, and
\[
  \delta_{n(R)} = \sqrt{\frac{\log(\nicefrac{2K}{a_{\delta_R}})}{2n}} \enspace, \qquad  C = \frac{\overline{\risk}(\bdeta_{\balpha^0}) - \overline\nu(\delta)} {\delta - \cC(\bdeta_{\balpha^0}; t)} \enspace.
\]
\end{theorem}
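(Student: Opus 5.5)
The argument is a standard three-step comparison. First, the population comparator $\balpha^\star$ is transferred to a point that is feasible for the empirical SOCP. Then the optimality of $\hat\balpha$ is used. Finally, the empirical linearized risks are converted back to population quantities. All statements are conditional on $\cD^N$ and hold on the intersection of the events of Lemma~\ref{lem:rates:sample}, Lemma~\ref{lem:rates:pert} (used through Lemma~\ref{lem:rates:constraint-reverse}) and Lemma~\ref{lem:lin-risk-dev}. A union bound gives this intersection probability at least $1-a_{\delta_n}-a_{\delta_m}-a_{\delta_{n(R)}}$.

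\textbf{Step 1 (tightened comparator).} Let $\tilde\delta = (\sqrt{\delta}-\delta_n-\delta_N)^2-\delta_m>0$, which is positive by the standing assumptions. Let $\tilde\balpha$ be a minimizer defining $\overline\nu(\tilde\delta)$; it exists by compactness of $\Delta_K$ and continuity. To have feasibility at level $\tilde\delta$ we need $\tilde\delta \ge \cC(\bdeta_{\balpha^0};t)$, i.e.\ $\gamma \eqdef \delta-\tilde\delta \le \delta-\cC(\bdeta_{\balpha^0};t)$. This holds once $n,m,N$ are large, since $\gamma\to0$; I would add it to the "large enough" hypothesis. Expanding the square gives
\[
\gamma = 2\sqrt{\delta}(\delta_n+\delta_N)-(\delta_n+\delta_N)^2+\delta_m \le 2\sqrt{\delta}(\delta_n+\delta_N)+(\delta_n+\delta_N)^2+\delta_m .
\]
Lemma~\ref{lem:val-cont} then yields $\overline{\risk}(\bdeta_{\tilde\balpha}) = \overline\nu(\tilde\delta) \le \overline\nu(\delta)+C\gamma$.

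\textbf{Step 2 (empirical feasibility and optimality).} Since $\cC_\Pi(\bdeta_{\tilde\balpha};t)\le\tilde\delta$, Lemma~\ref{lem:rates:constraint-reverse} gives $\hat\cC_{n,m}(\hat\bdeta_{\tilde\balpha};t)\le\delta$. Hence $\tilde\balpha$ is feasible for~\eqref{pr:socp}. Optimality of $\hat\balpha$ then gives $\hat{\overline\risk}_n(\hat\bdeta_{\hat\balpha}) \le \hat{\overline\risk}_n(\hat\bdeta_{\tilde\balpha})$, where the objective $\hat\br^\top\balpha$ is exactly this empirical linearized risk.

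\textbf{Step 3 (chaining).} I would write
\[
\overline\risk(\hat\bdeta_{\hat\balpha}) \le \hat{\overline\risk}_n(\hat\bdeta_{\hat\balpha})+\delta_{n(R)} \le \hat{\overline\risk}_n(\hat\bdeta_{\tilde\balpha})+\delta_{n(R)} \le \overline\risk(\hat\bdeta_{\tilde\balpha})+2\delta_{n(R)},
\]
using the second part of Lemma~\ref{lem:lin-risk-dev}, which is uniform over $\balpha$ and applies to the estimated mixture. Lemma~\ref{lem:risk:reg-rates} then gives $\overline\risk(\hat\bdeta_{\tilde\balpha})\le\overline\risk(\bdeta_{\tilde\balpha})+L\rho_N$. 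Combining with Step 1 and $\overline\nu(\delta)=\overline\risk(\bdeta_{\balpha^\star})$ gives the claimed bound.

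The main obstacle is Step 1's bookkeeping. One issue is making sure $\gamma$ lies in the admissible range of Lemma~\ref{lem:val-cont}, which needs the extra large-sample condition. The other is that Lemma~\ref{lem:rates:constraint-reverse} is applied to a data-independent $\tilde\balpha$, so no extra uniformity over the simplex is needed. Also, the $\delta_m$ there must come from the same stratified-sampling event as in Theorem~\ref{thm:rates:constraint}, with the Lipschitz constant of $g$ absorbed. Everything else is a direct invocation of earlier lemmas.
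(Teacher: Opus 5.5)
Your proposal is correct and matches the paper's proof step for step: the tightened level $\tilde\delta$, the minimizer $\tilde\balpha$ of $\overline\nu(\tilde\delta)$ made empirically feasible via Lemma~\ref{lem:rates:constraint-reverse}, optimality of $\hat\balpha$, two applications of Lemma~\ref{lem:lin-risk-dev}, then Lemma~\ref{lem:risk:reg-rates}, and finally value continuity with $\gamma=\delta-\tilde\delta$ under the same extra ``large enough'' condition that the paper also invokes. Your expansion $\gamma=2\sqrt{\delta}(\delta_n+\delta_N)-(\delta_n+\delta_N)^2+\delta_m$ is the correct one (the paper writes $+(\delta_n+\delta_N)^2$ as an equality), and replacing $\gamma$ by the larger stated expression is legitimate because $C\ge 0$, which holds since $\balpha^0$ is feasible at level $\delta$.
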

\begin{proof}
 Let $\tilde\delta = (\sqrt{\delta} - \delta_n - \delta_N)^2 - \delta_m > 0$ be the tightened threshold from the reverse feasibility (Lemma~\ref{lem:rates:constraint-reverse}). When $n,m,N$ are ``large enough'', then $\delta_n, \delta_m, \delta_N$ are ``sufficiently'' small, so it holds that $\cC_\Pi (\bdeta_{\balpha_0}; t) < \tilde\delta$, that is the feasible set at the level $\tilde\delta$ is non-empty.
 Let us denote by $\tilde\balpha$ the minimizer of $\overline\nu(\tilde\delta)$. We have $\cC_\Pi (\bdeta_{\tilde{\balpha}}; t)$ and $\overline\nu(\tilde\delta) = \overline{\risk}(\bdeta_{\tilde\balpha})$. From Lemma~\ref{lem:rates:constraint-reverse}, we have that $\cC_{n,m} (\hat\bdeta_{\tilde\balpha}) \leq \delta$. Since $\hat\balpha$ is the empirical minimizer of~\eqref{pr:socp}, and $\tilde\balpha$ is empirically feasible, we have $\hat{\overline{\risk}}_n (\hat\bdeta_{\hat\balpha}) \leq \hat{\overline{\risk}}_n (\hat\bdeta_{\tilde\balpha})$. Combining this, with Lemma~\ref{lem:lin-risk-dev}, we get
 \[
 {\overline{\risk}} (\hat\bdeta_{\hat\balpha}) \leq \hat{\overline{\risk}}_n (\hat\bdeta_{\hat\balpha}) + \delta_{n(R)} \leq \hat{\overline{\risk}}_n (\hat\bdeta_{\tilde\balpha}) + \delta_{n(R)} \leq {\overline{\risk}} (\hat\bdeta_{\tilde\balpha}) + 2\delta_{n(R)} \enspace.
 \]
Next, we apply Lemma~\ref{lem:risk:reg-rates} and obtain 
\[
{\overline{\risk}} (\hat\bdeta_{\tilde\balpha}) \leq {\overline{\risk}} (\bdeta_{\tilde\balpha}) + L\rho_N = \overline\nu(\tilde\delta) +L\rho_N \enspace.
\]
Finally, we are in position to apply value continuity (Lemma~\ref{lem:val-cont}). Setting
\begin{align*}
 \gamma 
 &= \delta - \tilde\delta = \delta - (\sqrt{\delta} - \delta_n - \delta_N)^2 + \delta_m  \\
 &= 2\sqrt{\delta}(\delta_n + \delta_N) + (\delta_n + \delta_N)^2 + \delta_m \enspace,
\end{align*}
we get
\[
\overline{\nu}(\tilde{\delta}) = \overline{\nu}(\delta - \gamma) \leq \overline{\nu}(\delta) + \gamma C \enspace.
\]
Combining everything above, we finally obtain
\[
{\overline{\risk}} (\hat\bdeta_{\hat\balpha}) \leq {\overline{\risk}} (\bdeta_{\balpha^\star}) + \gamma C +L\rho_N + 2\delta_{n(R)}.
\]
 The proof is concluded.
 \end{proof}

\section{Extension to Equalized Odds}
\label{app:eodds}
\begin{definition}[Equalized Odds]
A classifier $h$ satisfies the EOdds property if it achieves equal true positive rates and equal false positive rates across groups:
\[
\Pr(h(\bX)=1 \mid Y=y, G=0) \;=\; \Pr(h(\bX)=1 \mid Y=y, G=1)\enspace,
\qquad \forall y\in\{0,1\}\enspace.
\]
\end{definition}
To connect EOdds to stability, we consider the following perturbation.
\begin{perturbation}
\label{pert:G-Y} The class $\cP^{G\mid Y}_{\bpi}$, with $\bpi=(\pi_0,\pi_1)$, is composed of all operators $\bP:\mathcal{M}_1(\Z)\to \mathcal{M}_1(\Z)$ such that, for all distribution $\mathcal{D}\in \mathcal{M}_1(\Z)$, 1) $\bX$'s conditional distribution given $(Y,G)$ and $Y$'s distribution are the same under $\mathcal{D}$ and under $\bP(\mathcal{D})$, 2) $\Prob_{\bP(\cD)}(G=1\mid Y=y)=\pi_y\enspace,\forall y \in \{0,1\}$.
\end{perturbation}
% \begin{perturbation}
% %\label{pert:G-Y}
% The family
% $\cD^{G\mid\Y}$ is obtained from $\cD$ by resampling $G$ so that $\Prob_{\cD_\pi}(G=1 \mid Y=y)=\pi_y, \forall y \in \{0,1\}$,
% while keeping $\Prob(\bX\mid Y,G)$ and $\Prob(Y)$ unchanged. 
% \end{perturbation}
\begin{proposition}[EOdds and Stability] 
Let $h:\X\to\{0,1\}$ be a (possibly randomized) prediction function, $G\in\{0,1\}$ be a sensitive attribute, and denote $q_{g,y} (h) \eqdef \Prob(h(\bX)=1\mid G=g, Y=y), \forall g, y\in\{0,1\}$ . Let $\cD_\bpi$ be a perturbed distribution according to Perturbation~\ref{pert:G-Y}. We define the \textbf{true positive} and \textbf{false positive} prediction rates under $\cD_\bpi$ respectively as
$P_{h\mid Y=1}(\bpi) \eqdef \Prob_{\cD_\bpi}(h(\bX)=1 \mid Y=1)$ and $P_{h\mid Y=0}(\bpi) \eqdef \Prob_{\cD_\bpi}(h(\bX)=1 \mid Y=0)$. Then, the following statements are equivalent:
\begin{enumerate}
    \item h satisfies Equalized Odds: $q_{0,y}(h) = q_{1,y}(h)\enspace, \forall y\in\{0,1\}$.
    \item $P_{h\mid Y=y}(\bpi)$ is constant in $\pi_y$ on $[0,1]\enspace, \forall y\in\{0,1\}$ (i.e., the true positive and false negative rates are invariant to any intra-class resampling of $G$).
\end{enumerate}
Moreover, one has the explicit identity
\[
P_{h \mid Y=y}(\bpi) = q_{0,y}(h) +  \pi_y (q_{1,y}(h) - q_{0,y}(h))\enspace,\quad \forall y \in \{0,1\} \enspace,
\]
so that
\[
\sup_{\pi_y,\pi_y'\in[0,1]} |P_{h \mid Y=y}(\pi_y)-P_{h \mid Y=y}(\pi_y')| = |q_{1,y}(h)-q_{0,y}(h)| \enspace,\quad \forall y \in \{0,1\} \enspace.
\]
\end{proposition}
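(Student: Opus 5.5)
The argument follows the proof of Proposition~\ref{prop:EOpp-stab}, applied separately to each label class $y\in\{0,1\}$. The first step is the explicit identity. Fix $\bpi=(\pi_0,\pi_1)$ and $y$. Under Perturbation~\ref{pert:G-Y} the law of $Y$ is unchanged, so $\Prob_{\cD_\bpi}(Y=y)=\Prob(Y=y)>0$ (we assume both classes have positive mass) and conditioning on $Y=y$ is well defined. Moreover, the conditional law of $\bX$ given $(Y,G)$ is preserved. If $h$ is randomized, its internal randomness is independent of $(G,Y)$ given $\bX$, so $\Prob_{\cD_\bpi}(h(\bX)=1\mid Y=y,G=g)=q_{g,y}(h)$.

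The law of total probability, conditionally on $Y=y$, then gives
\[
P_{h\mid Y=y}(\bpi)=\Prob_{\cD_\bpi}(G=1\mid Y=y)\,q_{1,y}(h)+\Prob_{\cD_\bpi}(G=0\mid Y=y)\,q_{0,y}(h)=\pi_y q_{1,y}(h)+(1-\pi_y)q_{0,y}(h).
\]
This is the claimed affine identity. It also shows that $P_{h\mid Y=y}(\bpi)$ depends on $\bpi$ only through $\pi_y$, which justifies the notation $P_{h\mid Y=y}(\pi_y)$ used in the supremum.

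The equivalence then follows coordinatewise.
\begin{itemize}
\item (1)$\Rightarrow$(2): if $q_{0,y}(h)=q_{1,y}(h)$, the identity collapses to the constant $q_{1,y}(h)$.
\item (2)$\Rightarrow$(1): if the map is constant in $\pi_y$, subtracting the values at two distinct points $\pi_y\neq\pi_y'$ gives $(\pi_y-\pi_y')(q_{1,y}(h)-q_{0,y}(h))=0$. Hence $q_{1,y}(h)=q_{0,y}(h)$. Doing this for both $y$ gives Equalized Odds.
\end{itemize}

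For the supremum, the identity gives $|P_{h\mid Y=y}(\pi_y)-P_{h\mid Y=y}(\pi_y')|=|\pi_y-\pi_y'|\,|q_{1,y}(h)-q_{0,y}(h)|$. Since $|\pi_y-\pi_y'|\le 1$, this is at most $|q_{1,y}(h)-q_{0,y}(h)|$, with equality at $(\pi_y,\pi_y')=(1,0)$.

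The main subtlety is feasibility and well-posedness of the perturbation family: for every $(\pi_0,\pi_1)\in[0,1]^2$ we need to exhibit a perturbed distribution with the prescribed $\Prob(G=1\mid Y=y)=\pi_y$. The construction is to keep $\Prob(Y)$, redraw $G\mid Y=y$ as a Bernoulli$(\pi_y)$, and then draw $\bX$ from the original $\Prob(\bX\mid Y,G)$. This works as long as $\Prob(\bX\mid Y=y,G=g)$ is defined for each pair $(g,y)$. At the endpoints $\pi_y\in\{0,1\}$, one group is absent within class $y$, but the identity still holds because the term carrying that group has weight zero. Under Assumption~\ref{ass:bound-p} all original cells have positive mass, so the $q_{g,y}(h)$ are well defined.
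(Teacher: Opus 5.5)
Your proof is correct and follows the paper's argument: the law of total probability conditional on $Y=y$, using the preserved conditional law of the features given $(Y,G)$, yields the affine identity, and subtracting its values at two distinct points gives (2)$\Rightarrow$(1) for each $y$. You also prove the supremum formula, with equality at $(\pi_y,\pi_y')=(1,0)$, and check well-posedness at the endpoints $\pi_y\in\{0,1\}$; the paper states both without proof.
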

\begin{proof}
Let us show that $(1\Rightarrow2)$. Denoting $q_y:=q_{0,y}(h) = q_{1,y}(h), \forall y \in\{0,1\}$, and applying the law of total probability, we get
\[
    P_{h \mid Y=y}(\bpi) = \Prob_{\cD_\bpi}(h(\bX)=1 \mid Y=y) = (1-\pi_y)q_{0,y}(h) + \pi_y q_{1,y}(h) = q_y \enspace, \forall y \in \{0,1\} \enspace,
\]
which does not depend on $\pi_y$.

Now let us show that $(2\Rightarrow1)$.
Assuming $P_{h \mid Y=y}(\bpi)$ is constant in $\pi_y\enspace, \forall y \in\{0,1\}$, we can state that for $y\in\{0,1\}$,it holds that
\[
    (1-\pi_y) q_{0,y}(h) + \pi_y q_{1,y}(h) = (1-\pi_y')q_{0,y}(h) + \pi_y'q_{1,y}(h) \quad \forall y \in \{0,1\} \enspace.
\]
Rewriting the above as 
\[
(\pi_y-\pi_y')(q_{1,y}(h)-q_{0,1}(h))=0 \enspace,
\]
we deduce that $q_{0,y}(h)=q_{1,y}(h)\enspace, \forall y\in\{0,1\}$.
The proof is concluded.
\end{proof}

\paragraph{Importance-weights representation} Under Perturbation~\ref{pert:G-Y}, considering the factorization $\Prob(\bX, G, Y) = \Prob(Y)\Prob(G \mid Y)\Prob(\bX \mid Y, G)$, and computing the Radon-Nikodym derivative, we get
\[
w_\bpi (g, y) =
\begin{cases}
  \frac{\pi_1}{p_{1,1}}\enspace, & \text{if } (g,y) = (1,1) \enspace,\\
  \frac{1-\pi_1}{p_{0,1}}\enspace, & \text{if } (g,y) = (0,1) \enspace, \\
  \frac{\pi_0}{p_{1,0}}\enspace, & \text{if } (g,y) = (1,0) \enspace,\\
  \frac{1-\pi_0}{p_{0,0}}\enspace, & \text{if } (g,y) = (0,0) \enspace.
\end{cases}
\]
\section{Experiments: additional details and results}
\label{app:experiments}
All experiments are run on an Apple M2,
16\,GB RAM. The average running times per seed are reported in Table \ref{tab:runtime}. In all experiments, we operate in the \emph{unaware} setting: the sensitive attribute $G$ is not available at prediction time. Results are reported as mean $\pm$ standard deviation over 20 random seeds. 
%%% algo
\begin{algorithm}[t]
\caption{Pseudo-Algorithm}
\label{alg:main-pseudo}
\begin{algorithmic}[1]
\REQUIRE Samples $\{\bZ_i\}_{i=1}^n\simiid \cD$, perturbation parameter law $\Pi$, target $t$, threshold $\delta$.
\STATE Sample $\pi_1,\cdots,\pi_K \simiid \Pi$ 
\FOR{$k=1$ \textbf{to} $K$}
  \STATE Train $\hat\eta_k$ by weighted ERM on $\{\bZ_i\}$ with weights $w_{\pi_k}(\bZ_i)$.
\ENDFOR
\STATE Sample $\pi'_1,\ldots,\pi'_m\simiid \Pi$.
\STATE Build $\hat{\bA}$, with $\hat{A}_{j,k}=\hat\Phi_n(\hat\eta_k,\pi'_j)=\frac{1}{n}\sum_{i=1}^n w_{\pi'_j}(Z_i)\phi_{\hat\eta_k}(Z_i)$; and $r_k = \hat\risk_n(\hat\eta_k)$
\STATE Solve the SOCP \eqref{pr:socp} to obtain $\hat\balpha$.
\STATE Set $\hat\bdeta_{\hat\balpha}=\sum_{k=1}^K \hat\alpha_k \hat\eta_k$.
\end{algorithmic}
\end{algorithm}

\subsection{Evaluation protocol}
\label{app:eval}

\begin{table}[t]
\centering
\caption{%
    Average training time per seed (seconds).
    All experiments run locally on an Apple M2,
    16\,GB RAM, macOS.
    Shifty uses Setting~2 (unknown demographic shift, $\alpha=0.25$). $NA$ stands for Not Applicable.
}
\label{tab:runtime}
\resizebox{\textwidth}{!}{
\begin{tabular}{lcccccccc}
\toprule
 & \multicolumn{2}{c}{ERM + constraint} & \multicolumn{2}{c}{STABLE (ours)} &
 \multicolumn{2}{c}{PostProc}&
 \multicolumn{2}{c}{Shifty} \\
\cmidrule(lr){2-3} \cmidrule(lr){4-5} \cmidrule(lr){6-7}\cmidrule(lr){8-9}
 & DP & EO & DP & EO & DP & EO& DP & EO \\
\midrule
Adult  & $3.5\pm{0.39}$   & $5.06\pm{1.35}$   & $4.98\pm{0.56}$   & $6.41\pm{0.6}$  & $1.6\pm{0.1}$ & $1.85\pm{0.07}$&$284.34\pm{314.12}$ & $269.8\pm{165.8}$ \\
COMPAS & $1.01\pm0.18$   & $1.01\pm0.27$   & $1.66\pm0.52$   & $1.69\pm0.27$ & $1.12\pm0.07$ &$1.25\pm0.08$  & $11.67\pm4.9$  & $16.98\pm6.56$  \\
ACS &$126.8\pm12$&$101.2\pm15.8$&$151.2\pm11.1$&$149.9\pm10.3$&$8.38\pm0.67$&$17.0\pm1.6$&$56.9\pm11.6$&$50.2\pm9.8$\\
\bottomrule
\end{tabular}
}
\end{table}

\paragraph{Simulating deployment shift.}
For the Adult Income and COMPAS datasets we simulate the deployment under demographic shift without retraining.

Given a fixed trained model $h$ and a test set $\{(x_i, g_i, y_i)\}$, the functional $\Phi(h;\pi')$ is estimated via importance weighting: each observation receives weight
$w_{\pi'}(g_i) = \frac{\pi'}{p_1}\mathbf{1}\{g_i=1\}
+ \frac{1-\pi'}{p_0}\mathbf{1}\{g_i=0\}$,
where $p_g = \hat{\mathbb{P}}(G=g)$ is estimated on the test set.

This reweights the empirical distribution to match a target group proportion $\pi'$ without requiring a new dataset, and corresponds exactly to Perturbation~1 of Section~\ref{sec:equiv}.
The deploy gap is then $\sup_{\pi'\in\{0,1\}}|\Phi(h;\pi')-t|$,
evaluated at the two extremes.

\paragraph{Hyperparameter selection.}\label{app:hyperparameter}

STABLE learns a convex mixture of $K$ weighted-ERM predictors, each being an
$\ell_2$-regularised logistic regression (regularisation $C=1$, \texttt{lbfgs} solver).
The mixture is optimised via a SOCP evaluated on $m$ stratified perturbation points drawn from $\Pi$. We fix $K=30$ (number of mixture components) and $m=60$ (number of SOCP constraint evaluation points) across all experiments, as preliminary runs showed no sensitivity beyond these values.

The two hyperparameters subject to selection are $\bar\pi$, the lower bound of the perturbation interval $\Pi = [\bar\pi,\, 1-\bar\pi]$, and $\delta$, the SOCP feasibility threshold controlling the trade-off between fairness and accuracy.
For each dataset and constraint, $\delta$ and $\bar\pi$ are selected via grid search on a fixed held-out validation split (seed~0 for Adult and COMPAS, five validation states for ACS Income), optimising the deploy gap
$\max(|\Phi(h;0)-t|,\,|\Phi(h;1)-t|)$.
The search grids are:
$
    \bar\pi \in \{0.02,\,0.05,\,0.10,\,0.15,\,0.20,\,0.30,\,0.40\},
    \quad
    \delta  \in \{10^{-5},\,5\!\times\!10^{-5},\,10^{-4},\,5\!\times\!10^{-4},
                  10^{-3},\,5\!\times\!10^{-3},\,10^{-2},\,5\!\times\!10^{-2}\}
$.
All reported results use the selected hyperparameters evaluated on 20 independent random seeds (independent test states for ACS Income), ensuring no leakage between selection and evaluation. The selected values are reported in Table~\ref{tab:hyperparameters}. We also report the sensitivity analysis for the grid on Adult with DP constraints in Figure \ref{fig:heatmaps}. We observe the same behavior on all datasets, for both constraints (DP or EO). For Shifty, the fairness tolerance $\varepsilon$ was set to the largest value avoiding systematic No-Solution-Found outcomes: $\varepsilon = 0.10$ (Adult, DP), $\varepsilon = 0.05$ (COMPAS and ACS, DP), and $\varepsilon = 0.15$ (Adult COMPAS and ACS, EO); all runs used the unknown-shift setting (\texttt{setting}$=2$) with $\alpha = 0.25$. ERM and all base classifiers use $\ell_2$-regularised logistic regression with $C = 1$ and the \texttt{lbfgs} solver. ERM+DP and ERM+EO (Exponentiated Gradient) use $\varepsilon = 0.01$ as the fairness tolerance, following the default recommended in \citet{pmlr-v80-agarwal18a}. For PostProc, the entropic regularisation is set to $\varepsilon = 2^{-128}$ (effectively zero, corresponding to the unregularised regime), and the number of stochastic optimization steps $T$ is set to twice the size of the unlabeled calibration set, in accordance with the theoretical prescription of \citet{chzhen2025randomizedmulticlassclassificationconstraints}.

\begin{figure}[!t]
  \centering
  \hfill
  \begin{subfigure}[b]{0.32\linewidth}
    \centering
\includegraphics[width=\linewidth]{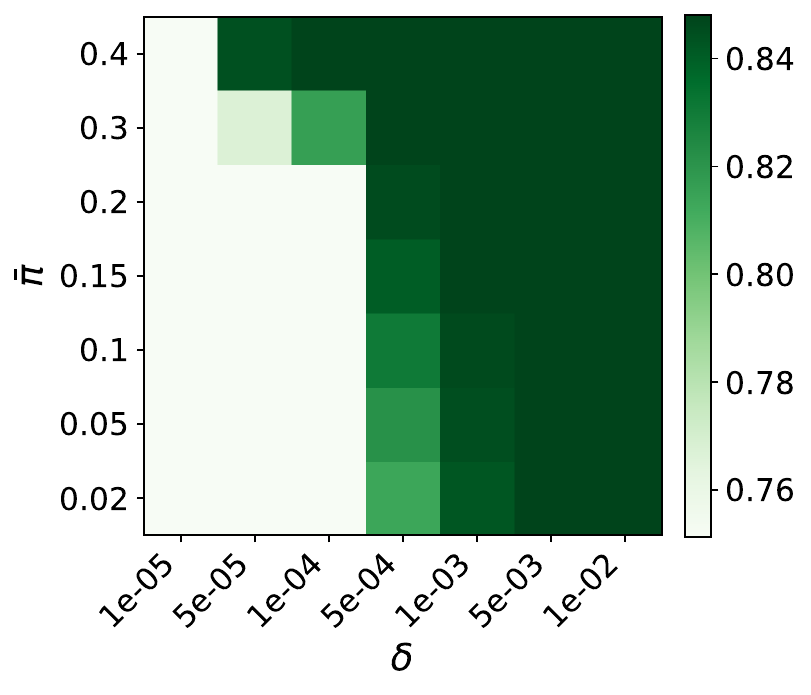}
    \caption{Accuracy$\uparrow$}
    \label{fig:dp-acc}
  \end{subfigure}
  \hfill
  \begin{subfigure}[b]{0.32\linewidth}
\centering\includegraphics[width=\linewidth]{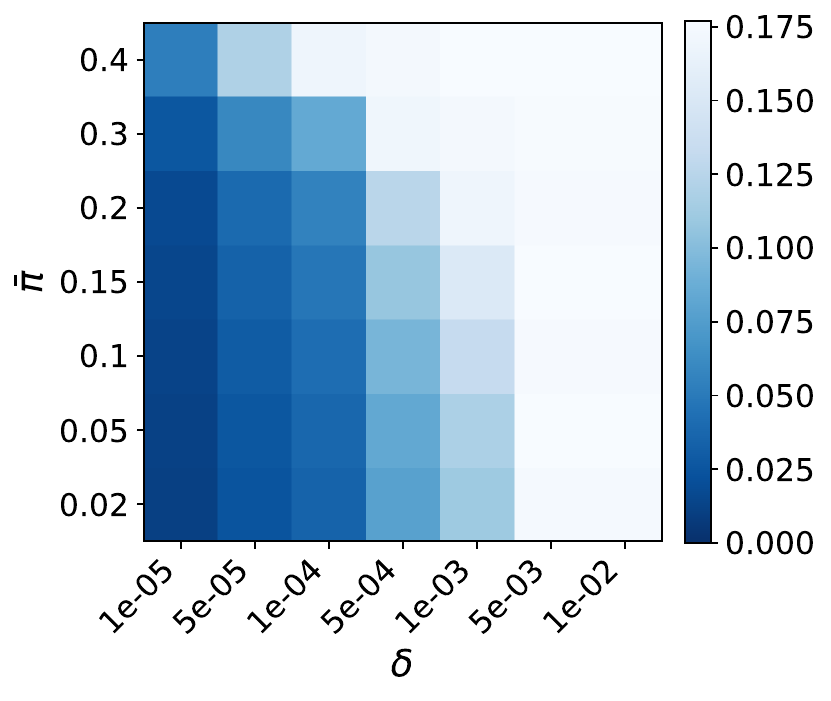}
\caption{$\unf(h)\downarrow$}
    \label{fig:dp-unf}
  \end{subfigure}
   \begin{subfigure}[b]{0.32\linewidth}
\centering\includegraphics[width=\linewidth]{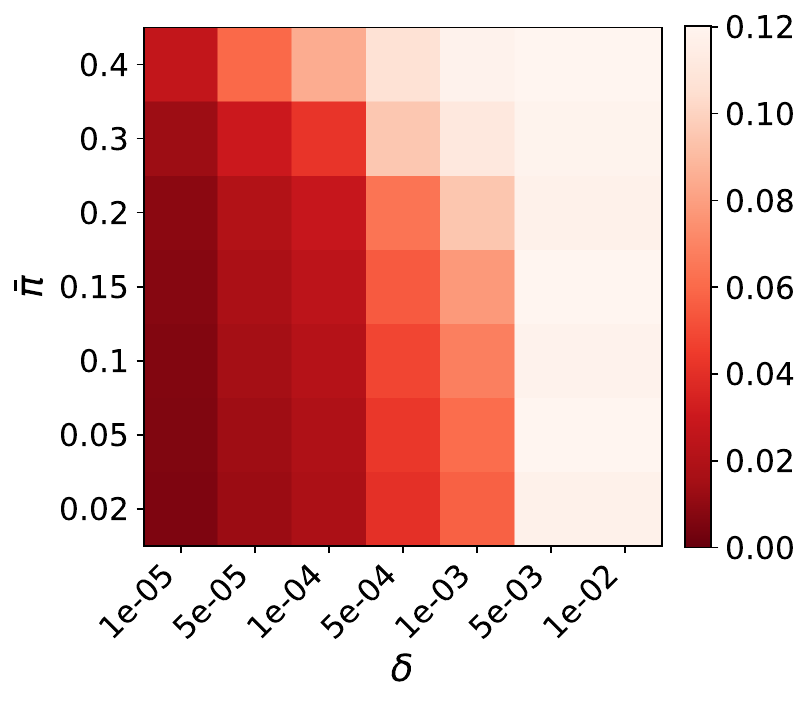}
    \caption{Deployment Gap $\downarrow$}
    \label{fig:dp-dep}
  \end{subfigure}
  \caption{Hyperparameter sensitivity on the Adult dataset (DP constraint, seed~0).
Each cell reports the value of  accuracy (left), $\unf(h)$ (center) and deploy gap $\max(|\Phi(h;0)-t|,|\Phi(h;1)-t|)$ (right)
as a function of threshold $\delta$ and the perturbation lower bound
$\bar\pi$. }
  \label{fig:heatmaps}
\end{figure}

\iffalse 
\subsection{Additional results}
\paragraph{Compas}
Figure \ref{fig:deploy-compas} presents the prediction rate of all baselines on COMPAS under different demographic shifts. These results are consistent with our previous observations on Adult Income and ACS. 
\begin{figure}[t]
    \centering  \includegraphics[width=0.8\linewidth]{figures/compas_DP_EO.pdf}
    \caption{Prediction rate $\Phi(h;\pi')$ under demographic shift on the Compas dataset for (left) DP and (right) EO constraints. Shaded regions show $\pm$ std of the curve shape over 20 random seeds.}
    \label{fig:deploy-compas}
\end{figure}
\fi 

\subsection{ACS geographic experiment.}

The ACS Income task~\citep{ding2021retiring} is derived from the 2018 American Community Survey and covers over 1.5 million working-age adults across all US states.

We train on California ($n =80{,}000$ subsampled) and select hyperparameters on 3 validation states ($n = 30{,}000$ per state). We use 11 test states, with White resident proportions ranging from $62\%$ to $93\%$, never seen during hyperparameter selection.

\begin{table}[t]
\centering
\caption{Selected hyperparameters for STABLE. $K=30$ and $m=60$ are fixed across all settings. N.A. stands for Not Applicable.}
\label{tab:hyperparameters}
\small
\begin{tabular}{lcccccc}
\toprule
 & \multicolumn{2}{c}{\textbf{Adult}} 
 & \multicolumn{2}{c}{\textbf{COMPAS}} 
 & \multicolumn{2}{c}{\textbf{ACS Income}} \\
\cmidrule(lr){2-3}\cmidrule(lr){4-5}\cmidrule(lr){6-7}
 & DP & EO & DP & EO & DP & EO \\
\midrule
$\bar\pi$ & $0.02$ & $0.15$ & $0.02$ & $0.15$ & $0.02$ & $0.15$ \\
$\delta$  & $10^{-5}$ & $10^{-5}$ & $10^{-5}$ & $10^{-4}$ & $10^{-5}$ & $10^{-5}$ \\
\bottomrule
\end{tabular}
\end{table}

\subsection{Results on ACS with EO constraint}
On the ACS geographic experiment under EO (see Figure~\ref{fig:eo-acs}, Stable remains the closest 
to $t$ across all held-out states, with a near-flat slope 
$\pi_{\rm state} \mapsto \Phi_{\rm EO}(h;\pi_{\rm state})$, 
and ERM+EO achieves comparable deployment stability. 
PostProc drifts systematically above $t$, while Shifty drifts below $t$ despite explicitly accounting for demographic shift. 
On the accuracy-EO unfairness tradeoff, Stable and ERM+EO form 
a competitive frontier in the low-unfairness, high-accuracy region, whereas PostProc exhibits larger variance across states and Shifty trades accuracy for fairness without closing the gap on either.
\begin{figure}
    \centering
\includegraphics[width=\linewidth]{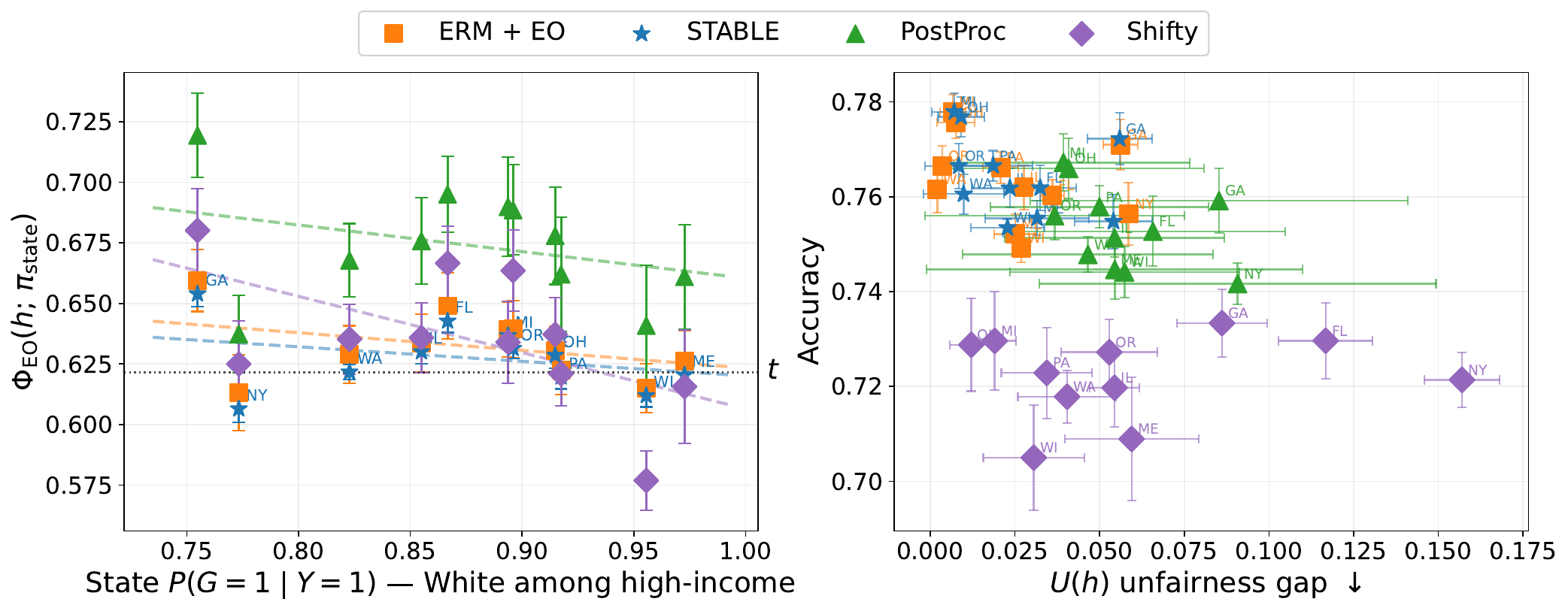}
    \caption{Geographic generalization on ACS Income. (left) Prediction rate $\Phi(h;\pi_{\text{state}})$ as a function of the state proportion of White residents. Dashed lines show linear fits per method; (right) Accuracy vs. EO unfairness gap $\unf(h)$ across held-out states. Each point corresponds to one state.}
    \label{fig:eo-acs}
\end{figure}
\subsection{The effect of $t$}
\label{app:t-study}
We denote by $t_{\text{ERM}}$ the target level estimated from ERM and used throughout our experiments. In addition, we consider a grid of 25 values for $t$ in $(0.05, 0.95)$. We train STABLE under both DP and EO on Adult dataset for all the values of $t$, using the hyperparameters tuned at $t_{\text{ERM}}$, over 20 random seeds. For DP, Figure~\ref{fig:t-study-adult-DP} shows that the accuracy is fairly stable over the first half of the grid, then it slightly peaks around $t=0.5$, then dramatically drops, suggesting that the problem no longer converges. By contrast, the unfairness and the deployment gap remain very stable. For EO, Figure~\ref{fig:t-study-adult-EO} shows that the accuracy increases as $t$ approaches $t_{\text{ERM}}$, remains stable in its neighbourhood, then drops dramatically as  $t$ moves away, again indicating non-convergence. The unfairness and deployment gap increase as $t$ approaches $t_{\text{ERM}}$, then slightly peak, and finally decrease gradually.

\begin{figure}[t]
    \centering
    \begin{subfigure}{\linewidth}
        \centering
        \includegraphics[width=\linewidth]{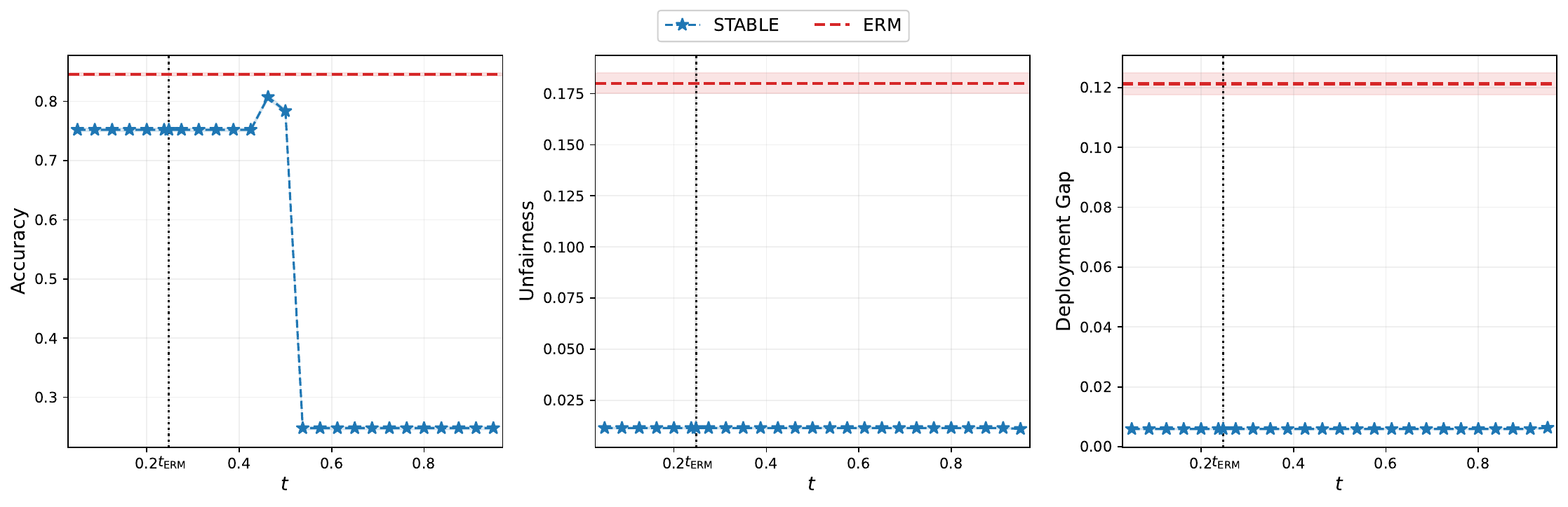}
        \caption{The effect of target $t$ on Accuracy, Unfairness, and Deployment Gap for STABLE under DP. Shaded regions show $\pm$ std over 20 random seeds.}
        \label{fig:t-study-adult-DP}
    \end{subfigure}

    \vspace{0.5em}

    \begin{subfigure}{\linewidth}
        \centering
        \includegraphics[width=\linewidth]{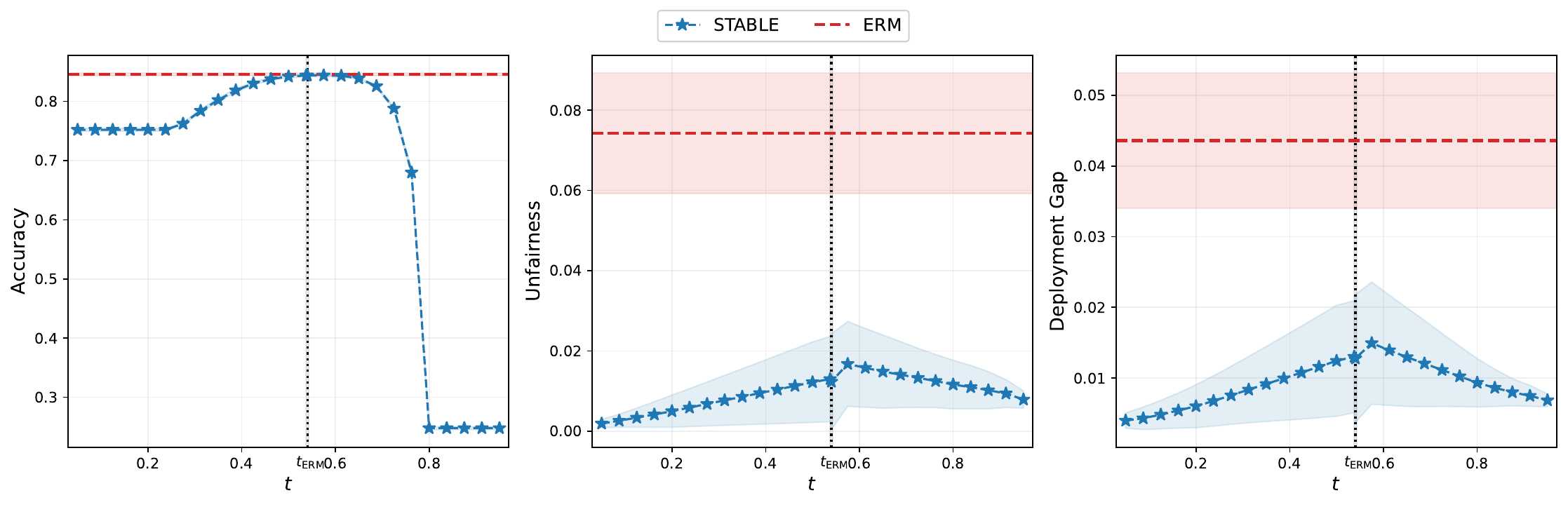}
        \caption{The effect of target $t$ on Accuracy, Unfairness, and Deployment Gap for STABLE under EO. Shaded regions show $\pm$ std over 20 random seeds.}
        \label{fig:t-study-adult-EO}
    \end{subfigure}
    \caption{The effect of target $t$.}
\end{figure}

% \subsection{Synthetic experiments}
% To isolate the accuracy-fairness-robustness tradeoffs, we construct a synthetic binary classification task where the two groups rely on distinct predictive features, and sweep $\delta$ across a logarithmic grid. Figures~\ref{fig:pareto-1} and~\ref{fig:pareto-2} show the resulting Pareto frontiers over $10$ random seeds. In both cases, reducing $U(h)$ or the deploy gap below the ERM level comes at an accuracy cost, and ERM lies at the unfavorable end of both frontiers.

% \begin{figure}[!t]
% \begin{center}
%   %\hfill
%   \begin{subfigure}[b]{0.38\linewidth}
%     \centering    \includegraphics[width=\linewidth]{version-1/figures/pareto_acc_vs_u.pdf}
%     \caption{Fairness-Accuracy}
%     \label{fig:pareto-1}
%   \end{subfigure}
%   %\hfill
%   \hspace{0.5cm}
%   \begin{subfigure}[b]{0.38\linewidth}
%   \centering
%   \includegraphics[width=\linewidth]{version-1/figures/pareto_acc_vs_gap.pdf}
%     \caption{Deployment Gap-Accuracy}
%     \label{fig:pareto-2}
%   \end{subfigure}
%   \caption{Fairness-Accuracy and Deployment Gap-Accuracy tradeoffs on synthetic data.
%   }
%   \label{fig:synthetic-results}
% \end{center}
% \end{figure}

% ------------ neurips checklist ----------
%\newpage
%\input{checklist}

\end{document}